\documentclass{article}

\usepackage{microtype}
\usepackage{graphicx}
\usepackage{subcaption}
\usepackage{booktabs} 

\usepackage{hyperref}

\usepackage[accepted]{icml2026}

\usepackage{amsmath}
\usepackage{amssymb}
\usepackage{mathtools}
\usepackage{amsthm}
\usepackage{xspace}

\usepackage[capitalize,noabbrev]{cleveref}

\theoremstyle{plain}
\newtheorem{theorem}{Theorem}[section]

\newtheorem{lemma}[theorem]{Lemma}
\newtheorem{corollary}[theorem]{Corollary}
\theoremstyle{definition}
\newtheorem{definition}[theorem]{Definition}
\newtheorem{assumption}[theorem]{Assumption}
\theoremstyle{remark}

\usepackage[textsize=tiny]{todonotes}

\icmltitlerunning{ParaTool: Shifting Tool Representations from Context to Parameters}

\usepackage{booktabs}
\usepackage{multirow}
\usepackage{tabularx}
\usepackage{makecell}
\usepackage{siunitx}
\usepackage{xcolor}
\usepackage{pifont} 

\usepackage{array}
\usepackage{amssymb}   

\usepackage{rotating}
\definecolor{rowgray}{gray}{0.95} 
\usepackage{pifont} 
\usepackage{xcolor} 
\usepackage{colortbl} 
\usepackage{graphicx} 
\usepackage[table,xcdraw]{xcolor} 
\newcommand{\paratool}{\textit{ParaTool}\xspace} 
\newcommand{\docs} {{Context+Docs}\xspace}
\newcommand{\qa}{{Context+Docs \& Examples}}
\begin{document}

\twocolumn[
  \icmltitle{ParaTool: Shifting Tool Representations from Context to Parameters}



  \icmlsetsymbol{equal}{*}

\begin{icmlauthorlist}
  \icmlauthor{Zekai Yu}{bupt}
  \icmlauthor{Qi Meng}{bupt}
  \icmlauthor{Qizhi Chu}{bupt}
  \icmlauthor{Yu Hao}{bupt}
  \icmlauthor{Chuan Shi}{bupt}
  \icmlauthor{Cheng Yang}{bupt}
\end{icmlauthorlist}

\icmlaffiliation{bupt}{Beijing University of Posts and Telecommunications, Beijing, China}

\icmlcorrespondingauthor{Cheng Yang}{yangcheng@bupt.edu.cn}

  \icmlkeywords{Machine Learning, ICML}

  \vskip 0.3in
]
\printAffiliationsAndNotice{}
\section*{Abstract}
Tool calling extends large language models (LLMs) by enabling grounded interaction with external executable interfaces, thereby supporting environment-coupled problem solving. However, mainstream in-context learning (ICL) approaches typically incorporate detailed tool documentation and usage examples directly into the context. This results in substantial inference overhead and heightened risks of hallucination as the context length grows. Conversely, while tuning-based methods improve general tool-calling capabilities, they often fail to effectively internalize the specific details of previously seen tools, thereby retaining a dependency on in-context documentation. To address these limitations, we propose \paratool, a framework that projects each tool into a dedicated, loadable set of parameters. By equipping a dynamic integration of these parameterized tools, the LLM can perform tool calling without relying on in-context documents or examples. Specifically, our approach consists of three stages: (1) \textit{parametric tool pre-training} encapsulates the knowledge of different tools into independent parameter modules; (2) \textit{soft tool selection} employs a gating network to dynamically weigh and aggregate relevant tool parameters; and (3) \textit{parametric tool fine-tuning} jointly updates tool parameters to align the training and inference processes. Experiments on Stable ToolBench and BFCL demonstrate that \paratool significantly outperforms strong ICL-based baselines, achieving superior performance while reducing computational complexity.

\noindent\textbf{Code:} \href{https://github.com/BUPT-GAMMA/ParaTool}{https://github.com/BUPT-GAMMA/ParaTool}
\section{Introduction}
Despite demonstrating remarkable capabilities in natural language understanding and complex reasoning, large language models (LLMs)~\cite{wei2022emergent,wei2022chain,bubeck2023sparks} remain constrained by the isolation from external environments~\cite{schick2023toolformer,mialon2023augmented}. To bridge this gap, tool calling has emerged as a transformative paradigm. By empowering models to invoke executable interfaces~\cite{patil2024gorilla,qin2024toolllm}, this approach evolves LLMs from text generators into autonomous agents~\cite{sumers2023cognitive,wu2024autogen}, enabling them to solve complex tasks through grounded real-world interaction.
\setlength{\textfloatsep}{10pt}
\begin{figure}[!t]
    \centering
    \begin{subfigure}[b]{0.49\linewidth}
        \centering
        \includegraphics[width=\linewidth]{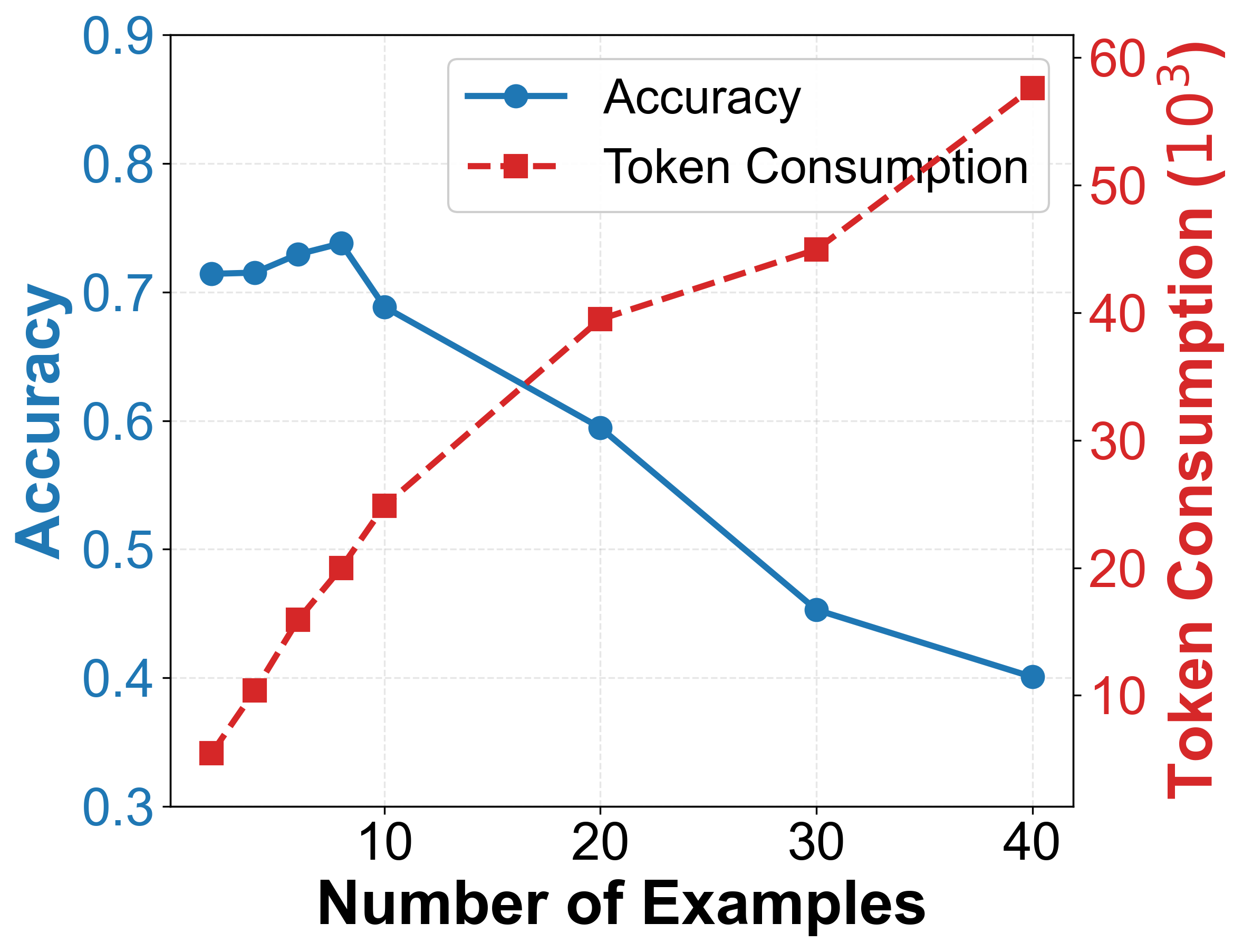} 
        \caption{Llama3.1-8B}
    \end{subfigure}%
    \begin{subfigure}[b]{0.49\linewidth}
        \centering
        \includegraphics[width=\linewidth]{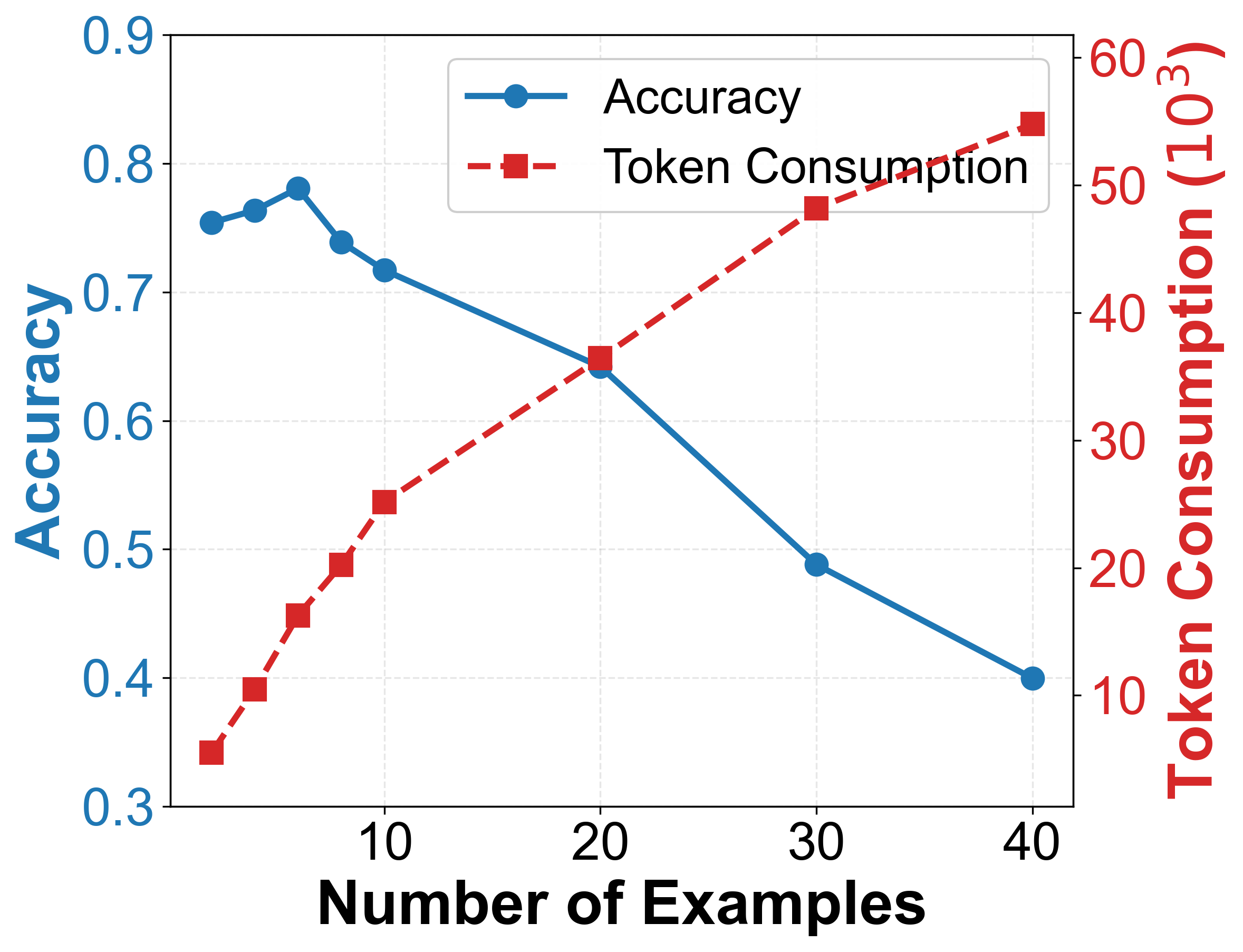} 
        \caption{Qwen2.5-7B}
    \end{subfigure}
   \caption{We incorporate tool documents and usage examples into the prompt to facilitate ICL-based tool calling. However, contrary to the expectation that more context yields better results, we observe that tool calling accuracy peaks and then drops as the number of examples increases.}
    \label{fig:motivation}
\end{figure}
In typical scenarios, LLMs are often required to select and execute the appropriate tool from a given toolset, based on specific contexts~\cite{qin2024tool,patilberkeley,qin2024toolllm}. Mainstream tool-use paradigms predominantly rely on in-context learning (ICL)~\cite{yao2022react,lu2023chameleon}, which typically integrates detailed tool documentation and usage examples directly into the input prompt to provide the model with relevant tool knowledge. However, as illustrated in Figure \ref{fig:motivation}, as we introduce an increasing number of usage examples into the context, the model's tool-calling capability paradoxically declines. The extensive context resulting from documentation and examples not only incurs high inference latency and memory overhead but also exacerbates the risk of hallucination, making it difficult for the model to precisely capture effective content within verbose prompts. On the other hand, by injecting tool-use knowledge in massive tool datasets directly into model parameters, tuning-based methods can effectively enhance the model's generalizability in following tool-use instructions~\cite{qin2024toolllm,patil2024gorilla}. But due to the vast scale and diversity of tools involved in training, such models often fail to internalize specific details of previously seen tools, and remain reliant on external documentation during inference.

To address these challenges, we propose a novel method named \paratool, where each tool is projected into a dedicated set of parameters. By dynamically loading a weighted aggregation of these parameterized tools, the LLM can then perform tool calling without relying on in-context documents or examples. Specifically, our approach consists of three stages: (1) \textbf{Parametric Tool Pre-training}: We construct pre-training tasks derived from tool documentation to encode each tool into an independent, loadable parametric representation; (2) \textbf{Soft Tool Selection}: A gating network is trained to dynamically weigh candidate tools according to the specific context; (3) \textbf{Parametric Tool Fine-tuning}: We provide the LLM with the weighted aggregation of parametric tools and jointly optimize the tools' parameters, aligning the training and inference reasoning processes. This framework thus enables the LLM to master different tools in a plug-and-play manner. In summary, this paper makes the following key contributions:





$\bullet$ We propose a novel paradigm that enables LLM tool calling without relying on in-context documents or examples, based instead on dynamic integration of parameterized tools.


$\bullet$ We develop a three-stage framework comprising parametric tool pre-training, soft tool selection, and parametric fine-tuning, which collectively enable the LLM to master tools in a plug-and-play manner.

$\bullet$ Extensive experiments demonstrate that \paratool significantly outperforms ICL-based baselines, achieving relative improvements of $9.71\%$ in pass rate on Stable ToolBench and $4.22\%$ in accuracy on BFCL. The approach also reduces computational complexity (measured in FLOPS) by up to $92.22\%$ on Stable Toolbench and $94.45\%$ on BFCL.


\section{Related Work}
\subsection{Tool Learning}
Tool learning aims to empower LLMs to interact with external interfaces, extending their capabilities beyond static internal knowledge to accomplish complex real-world tasks ~\cite{yao2022react,schick2023toolformer}. Current approaches generally fall into two paradigms: In-Context Learning (ICL) methods integrate detailed tool documentation and usage examples directly into the input prompt to guide the model in mastering tool usage~\cite{wei2022chain,hsieh2023tool,paranjape2023art,du2024anytool,shi2024learning,xu2024concise,yuan2025easytool}. For example, EasyTool optimizes this by condensing original documentation into high-quality guidelines and incorporating several usage examples~\cite{yuan2025easytool}. In contrast, tuning-based methods inject general tool-calling capabilities directly into parameters~\cite{hao2023toolkengpt,yang2023gpt4tools,wang2024llms,zhang2025xlam,prabhakar2025apigen}. For example, ToolLLaMA fine-tunes LLaMA on massive instruction-solution pairs to master diverse APIs~\cite{qin2024toolllm} and ToolAlign aligns general tool-use behaviors with helpfulness, harmlessness, and autonomy through instruction tuning and preference optimization~\cite{chen2024towards}. However, ICL suffers from substantial inference overhead and heightened hallucination risks caused by processing verbose documentation \cite{xu2024concise}. Conversely, while current tuning-based methods can improve general instruction adherence, they fail to internalize specific details of previously seen tools and remain dependent on in-context document details to generate tool callings~\cite{qin2024tool}.
\subsection{Parameter-Efficient Fine-Tuning}
Parameter-Efficient Fine-Tuning (PEFT) has emerged as a widely adopted paradigm for adapting LLMs to downstream tasks while mitigating the prohibitive computational costs of full-parameter training~\cite{houlsby2019parameter,li2021prefix,liu2022p,lialin2023scaling}. Techniques like Low-Rank Adaptation (LoRA)~\cite{hu2022lora} have become the standard paradigm for efficient capability injection in various domains~\cite{zhang2023huatuogpt,liu2023goat,zhou2024lawgpt,su2025parametric}. For instance, Platypus leverages LoRA to efficiently enhance logical reasoning capabilities~\cite{lee2023platypus}. PEFT is also widely explored in tuning-based tool learning methods~\cite{qin2024tool}. Existing work usually adopts PEFT by training a single adapter over all tools to improve general tool-calling capabilities. In contrast, \paratool represents a paradigm shift towards tool-level parameterization.
\section{Methodology}
\begin{figure*}[t]
    \centering
    \includegraphics[width=1.0\linewidth]{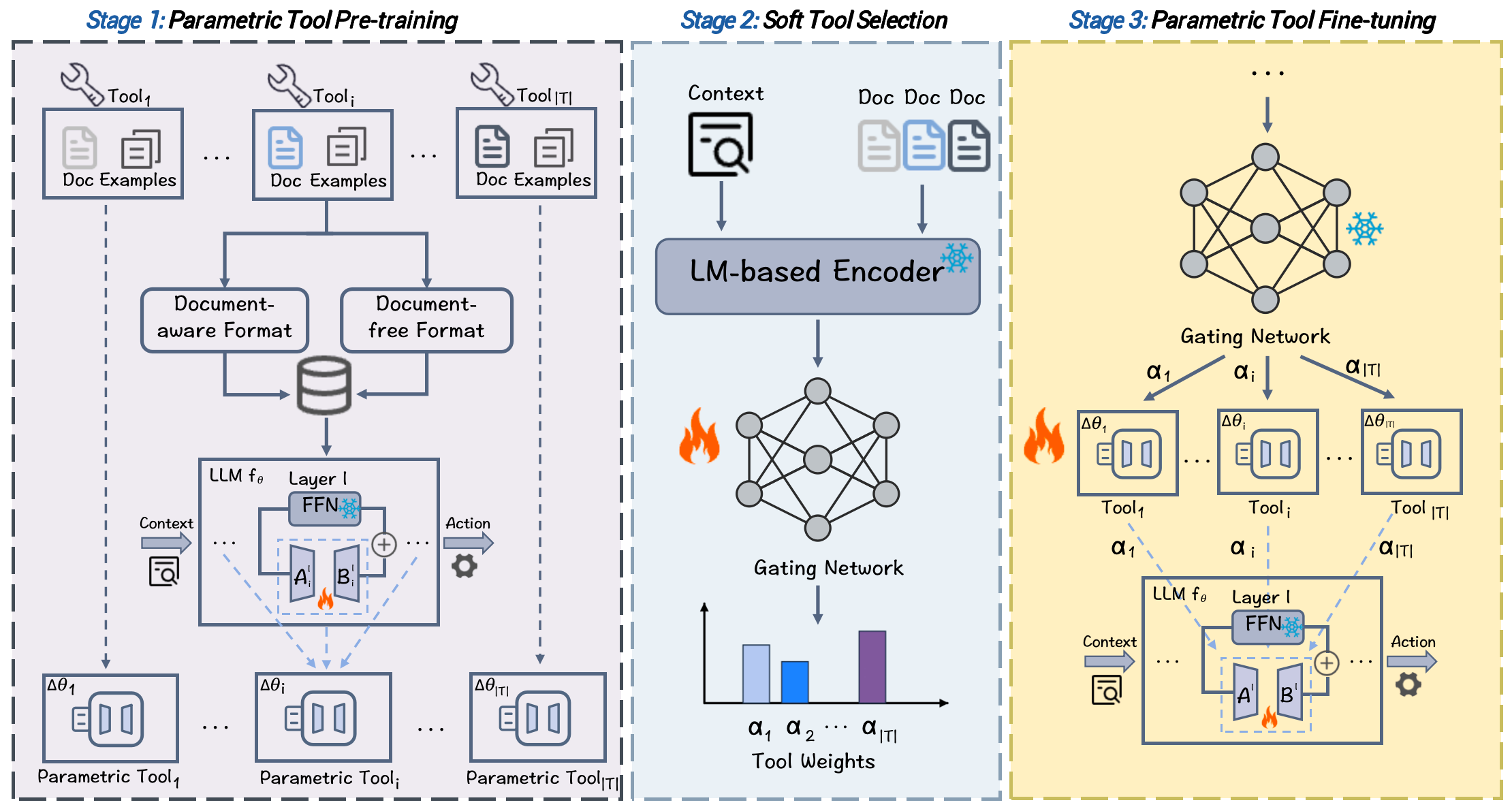}
    \caption{The three-stage pipeline of our proposed \paratool: (1) Parametric Tool Pre-training, where each tool is independently transformed into corresponding parametric representation; (2) Soft Tool Selection, which trains a gating network to predict aggregation weights for the soft composition of tool representations; and (3) Parametric Tool Fine-tuning, where the tool-specific parameters are jointly fine-tuned with the predicted weights to align the training and inference processes.}
    \label{fig:method}
\end{figure*}
\subsection{Problem Formalization}
In this subsection, we will first introduce the tool-use task of LLMs. Then we will formalize the context-based and parameter-based paradigms of tool representations.

\textbf{Tool-use Task.} Typically, the task requires an LLM to iteratively select and execute a proper tool from a given toolset based on query and historical context. Formally, given a query $q$ and a toolset $\mathcal{T}=\{T_i\}_{i=1}^{|\mathcal{T}|}$, the goal of an LLM at the $k$-th step is to generate action $a_k$ by
\begin{equation}
f_\theta(a_k|q,\mathcal{H}_{k-1},\mathcal{T}),
\end{equation}
where action $a_k$ includes a selected tool as well as its arguments, $f_\theta$ is the LLM with parameters $\theta$, and $\mathcal{H}_{k-1}=(a_1,o_1,a_2,o_2\dots a_{k-1},o_{k-1})$ concatenates previous actions and observations (\textit{e.g.,} environmental feedback). 

\textbf{Context-based Tool Representation.} To provide the LLM $f_\theta$ with necessary tool information, previous methods usually concatenate the documents $T_i^{Doc}$ and examples $T_i^{Exp}$ of each tool $T_i$ into the prompt, leading to a relatively long context. Then the action generation can be written as 
\begin{equation}
f_\theta(a_k|q,\mathcal{H}_{k-1},\bigoplus_{i=1}^{|\mathcal{T}|}T_i^{Doc},\bigoplus_{i=1}^{|\mathcal{T}|} T_i^{Exp}),
\end{equation}
where $\bigoplus$ denotes the concatenation operation.

\textbf{Parameter-based Tool Representation.} In this paper, we propose to transform each tool $T_i$ into a small amount of model parameters $\Delta \theta_i$ for the LLM instead. At the $k$-th step of action generation, we will dynamically assign aggregation weight $\alpha_{i}^k\in \mathbb{R}$ for each tool $T_i$, and predict without explicit documents or examples in the prompt:

\begin{equation}
\label{eq:paratool}
f_{\theta+\Delta\theta^k}(a_k|q,\mathcal{H}_{k-1}),
\end{equation}
where $\Delta\theta^k=\sum_{i=1}^{|\mathcal{T}|}\alpha_{i}^k\Delta \theta_i$. The parameter-based tool representation allows the LLM to utilize tools efficiently with much shorter context.

\subsection{Framework Overview}
As shown in Figure~\ref{fig:method}, we design a three-stage pipeline for the proposed parameter-based paradigm.

\textbf{Parametric Tool Pre-training}: Transforming each tool into corresponding LLM parameters based on its documents and examples independently.


\textbf{Soft Tool Selection}: Training a gating module by next tool prediction to obtain the aggregation weights for soft composition of tool representations.

\textbf{Parametric Tool Fine-tuning}: Jointly fine-tuning the parameters of different tools with predicted aggregation weights to align the inference process.

\subsection{Parametric Tool Pre-training}
In this subsection, our objective is to encode the knowledge of every tool $T_i\in \mathcal{T}$ into a parametric representation$\Delta \theta_i$. This representation is designed to empower the LLM with the capability to effectively invoke $T_i$ without explicit documents or examples in the context.



Firstly, we construct a dedicated dataset $\mathcal{D}_i$ for each tool $T_i$, comprising a collection of tool-calling traces with $T_i$ as the next tool to be predicted. In these datasets, each trace is processed into two distinct training instances:

\textit{Document-aware Formatting}: The input prompt includes the full tool documentation and descriptions. This provides rich semantic context to help initialize the tool's parametric representation.

\textit{Document-free Formatting}: We strip away the documentation and descriptions. This forces the LLM to rely on the parametric representation rather than the context.


Subsequently, we leverage the collected dataset $\mathcal{D}_i$ to optimize the parametric representation $\Delta \theta_i$. For the above two formats, the LLM predicts as $f_{\theta+\Delta \theta_i}(a_k|q,\mathcal{H}_{k-1},T_i^{Doc})$ and $f_{\theta+\Delta \theta_i}(a_k|q,\mathcal{H}_{k-1})$, respectively. To enable the LLM to generate correct tool calls, we optimize the log-likelihood of generating all tokens in each target action $a_k$.

For detailed implementation, we adhere to the Low-Rank Adaptation (LoRA) paradigm~\cite{hu2022lora}, and instantiate $\Delta \theta_i$ as a set of low-rank matrices targeting the Feed-Forward Network (FFN) weights of the backbone LLM. This architectural design isolates specific trainable parameters for each tool $T_i$, enabling the model to master different tools in a plug-and-play manner. Formally, assume that the LLM backbone has $L$ layers of FFNs, the parametric representation $\Delta \theta_i=\{A_{i,l} B_{i,l}^\top\}_{l=1}^L$, where $A_{i,l}$ and $B_{i,l}$ are low-rank matrices for the $l$-th layer. We accordingly train the low-rank matrices $\{A_{i,l}, B_{i,l}\}_{l=1}^L$ related to $\Delta \theta_i$ with the instances in $\mathcal{D}_i$. This ensures that the functional semantics are encoded solely into the lightweight parameters.

\subsection{Soft Tool Selection}
\label{sec:soft tool selection}
At the inference phase, a hard selection strategy can pick the most suitable tool and equip its corresponding parameters for action generation. But the tool use performance will heavily rely on the accuracy of tool selection, thereby harming the robustness of the entire framework. To mitigate this issue, we introduce a soft tool selection mechanism that assigns aggregation weights to different tools based on the current context.


Formally, we first encode the context $(q,\mathcal{H}_{k-1})$ into $c_k$ with a frozen language model-based encoder $\operatorname{Enc}(\cdot)$ (\textit{e.g.}, the last token embedding). Similarly, we encode the document of each tool $T_i$ and obtain its vector $d_i$. Then these embeddings are projected via a multi-layer perceptron (MLP) to compute relevance scores, which are then normalized via softmax to yield the weights $\alpha^k \in \mathbb{R}^{|\mathcal{T}|}$:
\begin{equation}
\alpha_i^k=\operatorname{softmax}(\operatorname{MLP}(c_k,d_i,c_k\odot d_i,|c_k-d_i|)),
\end{equation}
where $\odot$ denotes the element-wise product.




The training objective of the above gating network is the standard cross-entropy loss between tool weights $\alpha^k$ and the tool used in the ground truth action $a_k$. To prevent the gate from collapsing into a hard selector, we introduce an entropy regularization term $- \lambda \cdot H(\alpha^k)$ with hyper-parameter $\lambda$, encouraging softer weights for tool composition.
 

\subsection{Parametric Tool Fine-tuning}
\label{sec:parametric tool fin-tuning}
Note that the pre-trained tool parameters are learned only based on its own trace dataset. To align with the inference process, we freeze the gating network and jointly fine-tune the tool parameters under the soft tool composition.


Formally, we first aggregate the tool parameters $\Delta\theta^k=\sum_{i=1}^{|\mathcal{T}|}\alpha_{i}^k\Delta \theta_i$, and then predict next action as Eq.~(\ref{eq:paratool}). In practice, we only preserve the $N$ candidate tools provided by the user or the tools with largest weights, where $N\ll |\mathcal{T}|$. Thus, the calculation of parameter fusion does not need to sum over $|\mathcal{T}|$ terms. Similar to the pre-training stage, we then optimize the matrices by maximizing the log-likelihood of generating all tokens in each target action $a_k$.

The fine-tuning stage can align the training dynamics with the inference phase, where multiple parametric tools are simultaneously activated and superimposed onto the backbone parameters. This forces the parametric representations to adapt to the induced weight distribution, learning to coordinate their updates to maximize collective utility.

\subsection{Theoretical Analysis of Soft Tool Composition}
\label{sec:theory}
In this subsection, we prove that our soft tool composition strategy is more robust than a hard selection one based on the concept of certified robustness radius~\cite{zhai2020macer}.



To simplify the notations in theoretical analysis, we let $x$ denote the input context (comprising query $q$ and history $\mathcal{H}$), and $y$ denote the target action. We define the soft-composed model's loss function as $\mathcal{J}_\alpha(x,y)$, computed by the LLM $f$ with parameters $\theta + \sum_{i=1}^{|\mathcal{T}|} \alpha_i \Delta \theta_i$, where $\alpha \in \Delta_{|\mathcal{T}|-1}$ denotes the aggregation weights with their sum equals to $1$. Let $g_\alpha(x,y) \triangleq \nabla_x \mathcal{J}_\alpha(x,y)$ denote the input gradient of the soft-composed model. Similarly, let $g_i(x,y)$ denote the gradient derived when the model exclusively uses tool $T_i$ (i.e., parameters $\theta + \Delta \theta_i$).

\begin{definition}[Robustness Radius]
\label{def:radius}
The local robustness at input context $x$ is measured by the loss-based radius $R_\varepsilon(x,y;\alpha)$, defined as:
\begin{equation}
\begin{split}
R_\varepsilon(x, y; \alpha) \triangleq \sup \Big\{ z \geq 0 : \ & \forall \delta, \|\delta\|_2 \leq z, \\
& \hspace{-6em}
\mathcal{J}_\alpha(x+\delta, y) \leq \mathcal{J}_\alpha(x, y) + \varepsilon \Big\}.
\end{split}
\label{eq:robust-radius}
\end{equation}
\end{definition}

\begin{assumption}[Local Smoothness] \label{asm:smooth}
The loss function $\mathcal{J}_\alpha$ is locally $\beta$-smooth. Specifically, for a small perturbation $\delta$:
\begin{equation}
\mathcal{J}_\alpha(x+\delta,y)\le \mathcal{J}_\alpha(x,y)+\langle g_\alpha(x,y),\delta\rangle+\frac{\beta}{2}\|\delta\|_2^2.
\end{equation}
\end{assumption}

In this paper, we investigate the robustness radius based on local gradients. In Appendix~\ref{app:proofs}, we show that under Assumption \ref{asm:smooth}, a smaller gradient norm implies a larger lower bound on the robustness radius.

\begin{definition}[Gradient Alignment Coefficient]
\label{def:rho}
To quantify the gradient conflict across the parameter representations of different tools, we define the alignment coefficient $\rho_g \in [0,1]$ as the upper bound of cosine similarity:
\begin{equation}
\rho_g \triangleq \max\Big\{0,\ \sup_{i\neq j}\ \sup_{(x,y)}\ \cos(g_i(x,y),g_j(x,y))\Big\}.
\end{equation}
\end{definition}

Intuitively, when the gradients of different tools are not perfectly aligned (\textit{i.e.,} $\rho_g < 1$), the soft tool composition strategy can enhance stability by exploiting the cancellation effect among gradients. Now we formally discuss how $\alpha$ affects the gradient norm.

\begin{assumption}[Bounded Gradients] \label{asm:bound}
For all tools $T_i \in \mathcal{T}$, the gradient norm is bounded: $\|g_i(x,y)\|_2\le G$.
\end{assumption}

\begin{assumption}[Linear Aggregation with Bounded Residual] \label{asm:agg}
The gradient of the composed model approximates the weighted sum of individual tool gradients:
\begin{equation}
g_\alpha(x,y)=\sum_{i=1}^{|\mathcal{T}|} \alpha_i g_i(x,y)+\xi_\alpha(x,y),
\end{equation}
with the residual bounded by $\|\xi_\alpha\|_2\le \delta_g$.
\end{assumption}

\begin{theorem}[Gradient Norm Bound]
\label{thm:main}
Under Assumptions \ref{asm:bound} and \ref{asm:agg}, for any aggregation weights $\alpha$, we have:
\begin{equation}
\|g_\alpha(x,y)\|_2
\ \le\
G\sqrt{\rho_g+(1-\rho_g)\|\alpha\|_2^2}\ +\ \delta_g.
\end{equation}
\end{theorem}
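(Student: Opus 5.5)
The plan is to separate the residual with the triangle inequality and then bound the norm of the convex combination $\sum_i \alpha_i g_i$ by expanding its squared norm. By Assumption~\ref{asm:agg},
\begin{equation*}
\|g_\alpha(x,y)\|_2 \le \Big\|\sum_{i=1}^{|\mathcal{T}|}\alpha_i g_i(x,y)\Big\|_2 + \|\xi_\alpha(x,y)\|_2 \le \Big\|\sum_{i}\alpha_i g_i\Big\|_2 + \delta_g ,
\end{equation*}
so it suffices to show $\|\sum_i \alpha_i g_i\|_2^2 \le G^2\big(\rho_g + (1-\rho_g)\|\alpha\|_2^2\big)$ and then take square roots.

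Expanding the square gives
\begin{equation*}
\Big\|\sum_i \alpha_i g_i\Big\|_2^2 = \sum_i \alpha_i^2\|g_i\|_2^2 + \sum_{i\neq j}\alpha_i\alpha_j\langle g_i,g_j\rangle .
\end{equation*}
\begin{itemize}
\item \textbf{Diagonal terms.} By Assumption~\ref{asm:bound}, the first sum is at most $G^2\|\alpha\|_2^2$.
\item \textbf{Cross terms.} Since $\alpha\in\Delta_{|\mathcal{T}|-1}$, we have $\alpha_i\alpha_j\ge 0$. We write $\langle g_i,g_j\rangle=\|g_i\|_2\|g_j\|_2\cos(g_i,g_j)$. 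Definition~\ref{def:rho} gives $\cos(g_i,g_j)\le\rho_g$, and $\rho_g\ge 0$ because of the $\max\{0,\cdot\}$. Hence $\langle g_i,g_j\rangle \le \rho_g G^2$, with the case $\langle g_i,g_j\rangle\le 0$ trivially below this nonnegative bound. If either gradient vanishes the inner product is $0\le\rho_g G^2$, so an undefined cosine causes no trouble.
\item \textbf{Combining.} The cross terms total at most $\rho_g G^2\sum_{i\neq j}\alpha_i\alpha_j$. Using $\sum_{i\ne j}\alpha_i\alpha_j = (\sum_i\alpha_i)^2-\|\alpha\|_2^2 = 1-\|\alpha\|_2^2$, the squared norm is at most $G^2\big(\|\alpha\|_2^2+\rho_g(1-\|\alpha\|_2^2)\big)=G^2\big(\rho_g+(1-\rho_g)\|\alpha\|_2^2\big)$.
\end{itemize}

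The only delicate point is the cross-term step. Bounding $\|g_i\|_2\|g_j\|_2\cos(g_i,g_j)$ by $G^2\rho_g$ requires a nonnegative cosine bound, and $\rho_g\ge 0$ is exactly what secures this. It also requires nonnegative weights $\alpha_i\alpha_j$, which is where the simplex constraint on $\alpha$ enters. Everything else is routine algebra, and the result follows by taking square roots and adding $\delta_g$.
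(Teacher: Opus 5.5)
Your proposal is correct and follows essentially the same route as the paper: split off the residual with the triangle inequality, expand $\|\sum_i \alpha_i g_i\|_2^2$, bound the diagonal terms by $G^2\|\alpha\|_2^2$ and the cross terms by $\rho_g G^2$, then use $\sum_{i\neq j}\alpha_i\alpha_j = 1-\|\alpha\|_2^2$. You also spell out why $\alpha_i\alpha_j\ge 0$, $\rho_g\ge 0$, and the vanishing-gradient case are harmless, which the paper's proof uses only implicitly.
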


\begin{corollary}[Robustness of Soft Composition]
\label{cor:softness}
Suppose Assumptions \ref{asm:smooth}, \ref{asm:bound} and \ref{asm:agg} hold. A softer composition (characterized by a lower $\ell_2$-norm of weights $\|\alpha\|_2^2$) strictly decreases the upper bound on the gradient norm $\|g_\alpha(x,y)\|_2$. Consequently, soft composition yields a strictly larger lower bound of robustness radius $R_\varepsilon(x,y;\alpha)$ compared to the hard selection (where $\|\alpha\|_2^2=1$).
\end{corollary}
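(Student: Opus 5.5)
The corollary combines two monotonicity facts: Theorem~\ref{thm:main} bounds the gradient norm by a quantity increasing in $\|\alpha\|_2^2$, and Assumption~\ref{asm:smooth} turns a gradient-norm bound into a lower bound on $R_\varepsilon$ that decreases in that norm. I would first prove the smoothness-to-radius link, then compose it with the theorem.

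\textbf{Step 1: radius lower bound from smoothness.} Fix $(x,y,\alpha)$ and write $\gamma=\|g_\alpha(x,y)\|_2$. For any $\delta$ with $\|\delta\|_2\le z$, Cauchy--Schwarz in Assumption~\ref{asm:smooth} gives
\begin{equation*}
\mathcal{J}_\alpha(x+\delta,y)-\mathcal{J}_\alpha(x,y)\le \gamma z+\tfrac{\beta}{2}z^2 .
\end{equation*}
This is at most $\varepsilon$ whenever $z\le z^\star(\gamma)$, where $z^\star(\gamma)$ is the positive root of $\tfrac{\beta}{2}z^2+\gamma z-\varepsilon=0$:
\begin{equation*}
z^\star(\gamma)=\frac{-\gamma+\sqrt{\gamma^2+2\beta\varepsilon}}{\beta}=\frac{2\varepsilon}{\gamma+\sqrt{\gamma^2+2\beta\varepsilon}} .
\end{equation*}
Hence $R_\varepsilon(x,y;\alpha)\ge z^\star(\gamma)$. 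The second form shows that $z^\star$ is strictly decreasing in $\gamma\ge 0$, for $\varepsilon,\beta>0$. The caveat is that the smoothness inequality is only local, so I would state that the perturbation radii involved stay inside the neighbourhood where Assumption~\ref{asm:smooth} holds, or else read the radius as truncated at that neighbourhood size.

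\textbf{Step 2: compose with Theorem~\ref{thm:main}.} Set
\begin{equation*}
U(\alpha)=G\sqrt{\rho_g+(1-\rho_g)\|\alpha\|_2^2}+\delta_g .
\end{equation*}
For $\rho_g<1$ and $G>0$, $U$ is a strictly increasing function of $s=\|\alpha\|_2^2$, so a softer composition (smaller $s$) strictly lowers the gradient bound. This is the first claim.

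Since $\gamma\le U(\alpha)$ and $z^\star$ is decreasing, $R_\varepsilon\ge z^\star(U(\alpha))$. Because $z^\star\circ U$ is strictly decreasing in $s$, the certified lower bound strictly grows as $s$ shrinks.

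Hard selection corresponds to a vertex of the simplex. There $s=1$ and $U=G+\delta_g$. For any $\alpha\in\Delta_{|\mathcal{T}|-1}$ we have $\|\alpha\|_2^2\le 1$, with equality only at vertices. So any non-vertex $\alpha$ has $s<1$ and a strictly larger certified radius, down to the uniform weights with $s=1/|\mathcal{T}|$.

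\textbf{Main obstacle.} The main difficulty is stating the hypotheses carefully rather than any computation. Strictness needs $\rho_g<1$, $G>0$ and $\varepsilon>0$. If $\rho_g=1$ the bound is flat in $\alpha$ and only weak monotonicity holds, so I would add $\rho_g<1$ explicitly, as the surrounding text suggests.

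One must also be clear that the comparison is between certified lower bounds, not the true radii. The argument additionally uses a single smoothness constant $\beta$ for all $\alpha$, including vertices; I would assume $\beta$ is uniform over $\alpha$ and flag this.
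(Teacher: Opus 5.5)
Your proposal is correct and follows the paper's proof. The paper likewise derives $R_\varepsilon\ge(\sqrt{\|g_\alpha\|_2^2+2\beta\varepsilon}-\|g_\alpha\|_2)/\beta$ (Lemma~\ref{lemma:grad_to_radius}) via Cauchy--Schwarz and the positive root of the quadratic, then uses $\rho_g<1$ to make the bound of Theorem~\ref{thm:main} strictly increasing in $\|\alpha\|_2^2$ and compares it with the hard-selection value $G+\delta_g$. Your explicit chain $R_\varepsilon\ge z^\star(\|g_\alpha\|_2)\ge z^\star(U(\alpha))$ states more cleanly the step the paper leaves loose (from a smaller upper bound on the gradient norm to a larger certified radius), and the hypotheses you flag ($\rho_g<1$, $G>0$, $\beta$ uniform in $\alpha$) are exactly the ones the paper relies on implicitly.
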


This confirms that soft tool composition theoretically improves the lower bound of local robustness (see Appendix~\ref{app:proofs} for detailed proof).

\subsection{Complexity Analysis}
\label{sec:complexity}
\textbf{Context-based Complexity.} For a standard Transformer layer, the computational complexity is typically expressed as $\mathcal{O}(S^2 h + S h^2)$, where $S$ is the sequence length and $h$ is the hidden size. We omit the output length as it is much smaller than the input one in the tool-use scenario. Here $\mathcal{O}(S^2 h)$ corresponds to the quadratic self-attention mechanism, and $\mathcal{O}(S h^2)$ to the linear Feed-Forward Networks (FFN). Following the formulation above, the context-based paradigm concatenates all tool documents and examples into the input, resulting in a total length $S_{ctx} = |q| + |\mathcal{H}_{k-1}| + \sum_{i=1}^{|\mathcal{T}|} (|T_i^{Doc}| + |T_i^{Exp}|)$. Evidently, as $|T_i^{Doc}|$ and $|T_i^{Exp}|$ increase rapidly, this massive static context leads to a drastic explosion in computational overhead.

\textbf{Parameter-based Complexity.} In our parameter-based framework, while the input sequence is reduced to $S_{par} = |q| + |\mathcal{H}_{k-1}|$, additional computational overhead arises from the dynamic aggregation of $N$ LoRA experts. Note that we do not have to calculate the product of low-rank matrices $AB^\top$ by altering the order of matrix multiplications. Thus, the overhead can be reduced to $\mathcal{O}(S_{par} NLhr)$, where $L$ is the number of FFN layers and $r$ is the rank of LoRA matrices. In practice, $NLr\approx h$. Thus, the additional overhead does not increase the overall complexity. Compared to the context-based approach, the parameter-based one enjoys  less computation by transforming the heavy processing of verbose documents into lightweight low-rank computations.

\begin{table*}[h]
\centering
\renewcommand{\arraystretch}{1.2}
\caption{Experimental results across the six categories of Stable Toolbench dataset. The Win Rates are calculated by comparing with the Context+Docs baseline. \textbf{Bold} and \underline{underlined} values indicate the best and second-best performance, respectively.}
\resizebox{\textwidth}{!}{%
    \begin{tabular}{l|S[table-format=2.2] S[table-format=2.2]|S[table-format=2.2] S[table-format=2.2]|S[table-format=2.2] S[table-format=2.2]|S[table-format=2.2] S[table-format=2.2]|S[table-format=2.2] S[table-format=2.2]|S[table-format=2.2] S[table-format=2.2]|S[table-format=2.2] S[table-format=2.2]}
        \hline
        
        \multicolumn{1}{l}{\multirow{2}{*}{}} & 
        \multicolumn{2}{c}{\underline{\textbf{I1-Inst.}}} & 
        \multicolumn{2}{c}{\underline{\textbf{I1-Cat.}}} & 
        \multicolumn{2}{c}{\underline{\textbf{I1-Tool.}}} & 
        \multicolumn{2}{c}{\underline{\textbf{I2-Inst.}}} & 
        \multicolumn{2}{c}{\underline{\textbf{I2-Cat.}}} & 
        \multicolumn{2}{c}{\underline{\textbf{I3-Inst.}}} & 
        \multicolumn{2}{c}{\underline{\textbf{Average}}} \\
        
         & {\textbf{Pass}} & {\textbf{Win}} & {\textbf{Pass}} & {\textbf{Win}} &
           {\textbf{Pass}} & {\textbf{Win}} & {\textbf{Pass}} & {\textbf{Win}} &
           {\textbf{Pass}} & {\textbf{Win}} & {\textbf{Pass}} & {\textbf{Win}} &
           {\textbf{Pass}} & {\textbf{Win}} \\
        \hline
        
        \multicolumn{15}{l}{\textit{Model: Llama3.1-8B-Instruct}} \\
        \hline

        \textbf{\docs} & {51.63} & {-} & {59.51} & {-} & {53.80} & {-} & {58.06} & {-} & {61.32} & {-} & {52.46} & {-} & {56.13} & {-} \\
        \textbf{\qa} & {62.75} & \underline{65.36} & {66.87} & {68.01} & {62.66} & {60.76} & {68.55} & {63.71} & \underline{69.81} & {67.92} & \underline{63.93} & {55.74} & \underline{65.67} & {63.58} \\
        \midrule
        \textbf{Global Parameterization} & {34.64} & {37.91} & {32.52} & {34.97} & {36.08} & {30.38} & {41.94} & {38.71} & {36.79} & {32.08} & {29.25} & {21.31} & {35.20} & {32.56} \\
        \textbf{ToolLLaMA} & \underline{64.42} & {62.58} & \underline{68.63} & \underline{71.24} &{61.11} & {64.55} & \underline{70.75} & {67.92} & {66.94} & \underline{69.35} & {60.66} & \underline{57.38} & {65.41} & \underline{65.50} \\
        \midrule
        \textbf{Parameter+Embsim} & {54.25} & {56.86} & {60.74} & {63.90} & {55.06} & {55.70} & {62.10} & {64.52} & {57.55} & {60.38} & {47.54} & {44.26} & {56.21} & {57.59} \\      
        \textbf{Parameter+ToolRetriever} & {60.13} & {56.68} & {63.19} & {66.26} & {59.49} & {61.39} & {65.32} & {64.52} & {61.32} & {62.26} &
        {50.82} & {52.46} & {60.05} & {60.60} \\
        \textbf{Parameter+Reinvoke} & {62.09} & {63.40} & {65.64} & {68.01} & \underline{63.92} & \underline{66.46} & {68.55} & \underline{70.16} & {65.09} & {66.98} & {57.38} & {55.74} & {63.78} & {65.13} \\
        \textbf{\paratool(ours)} & \textbf{67.97} & \textbf{69.93} & \textbf{69.94} & \textbf{73.01} & \textbf{67.09} & \textbf{69.62} & \textbf{74.19} & \textbf{75.00} & \textbf{72.64} & \textbf{74.53} & \textbf{65.57} & \textbf{68.85} & \textbf{69.57} & \textbf{71.82} \\

        \hline
        \multicolumn{15}{l}{\textit{Model: Qwen2.5-7B-Instruct}} \\
        \hline

        \textbf{\docs} & {70.49} & {-} & {58.28} & {-} & {55.09} & {-} & {51.00} & {-} & {54.72} & {-} & {58.82} & {-} & {58.06} & {-} \\
        \textbf{\qa} & \underline{73.77} & \underline{77.12} & {64.89} & {68.71} & \underline{68.35} & {69.62} & {65.00} & {61.29} &
        {64.15} & {68.87} & \underline{65.57} & {68.85} & {66.96} & {69.08} \\
        \midrule
        \textbf{Global Parameterization} & {42.48} & {39.87} & {37.42} & {34.36} &{37.97} & {34.18} & {44.35} & {47.58} & {42.45} & {45.28} & {31.33} & {27.87} & {39.33} & {38.19} \\
        \textbf{ToolLLaMA} & {72.40} & {68.10} & \underline{70.59} & \underline{75.16} & {65.82} & \underline{71.52} & {66.98} & {63.20} & {68.54} & \underline{72.58} & \underline{65.57} & \underline{70.49} & \underline{68.31} & {70.18} \\
        \midrule
        \textbf{Parameter+Embsim} & {65.36} & {71.89} & {57.67} & {62.58} & {55.06} & {59.45} & {54.84} & {62.10} & {58.49} & {61.32} & {50.82} & {54.10} & {57.04} & {61.91} \\      
        \textbf{Parameter+ToolRetriever} & {69.28} & {73.62} & {59.90} & {63.80} & {57.59} & {60.13} & {59.68} & {62.90} & {62.62} & {60.38} & {52.46} & {55.74} & {60.26} & {62.76} \\
        \textbf{Parameter+Reinvoke} & {70.59} & {75.81} & {68.71} & {74.23} & {67.72} & {70.25} & \underline{69.35} & \underline{73.39} & \underline{72.64} & {71.70} & {60.66} & {62.30} & {68.28} & \underline{71.28} \\
        \textbf{\paratool(ours)} & \textbf{79.09} & \textbf{88.23} & \textbf{76.07} & \textbf{80.98} & \textbf{75.95} & \textbf{82.28} & \textbf{77.42} & \textbf{79.03} & \textbf{78.30} & \textbf{80.19} & \textbf{68.85} & \textbf{73.77} & \textbf{75.95} & \textbf{80.75} \\
        \hline
    \end{tabular}%
}
\label{tab:result_for_stb}
\end{table*}



\section{Experiments}
\begin{table*}[h]
\centering
\caption{{Experimental results across the six categories of BFCL dataset.   \textbf{Bold} and \underline{underlined} values indicate the best and second-best performance, respectively. Note that the Parallel categories does not require any tool selection.}}

\renewcommand{\arraystretch}{1.2}
\setlength{\tabcolsep}{4pt} 

\resizebox{0.9\textwidth}{!}{%
\begin{tabular}{l
    S[table-format=2.2] S[table-format=2.2] S[table-format=2.2] S[table-format=2.2]
    S[table-format=2.2] S[table-format=2.2] S[table-format=2.2] S[table-format=2.2]}
\toprule
\multirow{2.5}{*}{\textbf{Method}} &
\multicolumn{4}{c}{\textbf{Non-live}} &
\multicolumn{4}{c}{\textbf{Live}} \\
\cmidrule(lr){2-5} \cmidrule(lr){6-9}
 &
 {\textbf{Parallel}} & {\textbf{Multiple}} & {\textbf{Parallel Multiple}} & {\textbf{Average}} &
 {\textbf{Parallel}} & {\textbf{Multiple}} & {\textbf{Parallel Multiple}} & {\textbf{Average}}\\
\midrule
\multicolumn{9}{l}{\textit{Model: Llama3.1-8B-Instruct}} \\
\midrule

\textbf{\docs} & {88.33} & {95.00} & {85.50} & {89.61} & {56.25} & {71.23} & {55.56} & {61.01}\\

\textbf{\qa} & \underline{88.50} & \underline{95.17} & \underline{87.00} & \underline{90.22} & \underline{79.17} & {73.88} & \underline{61.11} & \underline{71.39}\\

\midrule
\textbf{Global Parameterization} & {72.50} & {62.67} & {46.83} & {60.67} & {60.42} & {57.99} & {30.56} & {49.66}\\

\textbf{ToolLLaMA} & {72.00} & {87.00} & {58.50} & {72.50} & {62.50} & {58.21} & {33.33} & {51.35}\\

\midrule
\textbf{Parameter+Embsim} & {-} & {91.33} & {63.50} & {81.39} & {-} & \underline{77.08} & {48.61} & {70.37}\\

\textbf{Parameter+Tool Retriever} & {-} & {92.50} & {64.33} & {82.06} & {-} & {76.45} & {43.06} & {68.31}\\

\textbf{Parameter+Reinvoke} & {-} & {91.83} & {66.33} & {82.50} & {-} & {76.48} & {51.39} & {71.10}\\

\textbf{\paratool (ours)} & \textbf{89.33} & \textbf{95.33} & \textbf{89.50} & \textbf{91.39} & \textbf{85.42} & \textbf{78.00} & \textbf{65.28} & \textbf{76.23}\\

\midrule
\multicolumn{9}{l}{\textit{Model: Qwen2.5-7B-Instruct}} \\
\midrule

\textbf{\docs} & {88.17} & {95.33} & {84.17} & {89.22} & {58.33} & {75.28} & {62.50} & {65.37}\\

\textbf{\qa} &
\underline{88.83} & \underline{96.17} & \underline{87.50} & \underline{90.83} &
\underline{75.00} & \underline{77.90} & \underline{69.44} & \underline{74.12}\\

\midrule
\textbf{Global Parameterization} & {72.83} & {83.83} & {53.33} & {70.00} &{58.33} & {47.04} & {31.94} & {45.77}\\

\textbf{ToolLLaMA} & {72.17} & {89.50} & {58.33} & {73.33} & {56.25} & {52.14} & {29.17} & {45.85}\\

\midrule
\textbf{Parameter+Embsim} & {-} & {91.83} & {69.00} & {83.56} & {-} & {77.02} & {51.39} & {69.19}\\
\textbf{Parameter+ToolRetriever} & {-} & {92.83} & {69.33} & {84.00} & {-} & {76.95} & {50.00} & {68.71}\\
\textbf{Parameter+Reinvoke} &{-} & {92.50} & {69.67} & {84.00} & {-} & {76.92} & {56.94} & {71.01}\\
\textbf{\paratool (ours)} &
\textbf{89.83} & \textbf{96.67} & \textbf{89.50} & \textbf{92.00} &
\textbf{79.17} & \textbf{79.01} & \textbf{73.61} & \textbf{77.26}\\

\bottomrule
\end{tabular}%
}
\label{tab:result_for_bfcl}
\end{table*}

\subsection{Datasets}
To thoroughly evaluate the effectiveness of \paratool, we conduct experiments on two representative datasets: Stable ToolBench~\cite{qin2024toolllm} and the Berkeley Function-Calling Leaderboard (BFCL-V2)~\cite{patilberkeley}. Stable ToolBench encompasses $2,098$ distinct tools and $765$ tasks across six subsets, ranging from single-tool to complex multi-tool instructions. BFCL-V2 features $2,034$ distinct tools and $1,693$ test cases, spanning categories that include \texttt{Parallel}, \texttt{Multiple}, and the challenging \texttt{Parallel Multiple} tasks. The tasks are further divided into Non-live and Live versions, yielding six categories. These tasks have diverse user queries characterized by varying tones, multi-turn interactions, and multilingual inputs. More details about the datasets can be found at Appendix~\ref{appendix:benchmark_details}.


Besides, we collect trace datasets that can be utilized by both context-based and parameter-based methods (see Appendix~\ref{appendix:synthesis_details} for more details):

$\bullet$ For \textbf{Stable ToolBench:} We construct examples based on the training set of ToolBench~\cite{qin2024toolllm}. For each target tool, we collect user queries requiring the tool and enforce a decontamination process to prevent data leakage, ensuring that no queries from the Stable ToolBench test set are included. Subsequently, we utilize Best-of-N sampling to generate high-quality trajectory for each query based on GPT-4o. This yields an average of $75$ examples per tool.

$\bullet$ For \textbf{BFCL:} We begin by extracting and deduplicating tool schemas from the official dataset. Utilizing GPT-4o, we then generate $20$ atomic examples for each tool, consisting of user queries and compliant function calls. These atomic examples are composed into complex trajectories varying from single-tool invocations to multi-tool selection tasks.





\subsection{Experimental Settings}
We conduct experiments based on Llama-3.1-8B-Instruct~\cite{grattafiori2024llama} and Qwen2.5-7B-Instruct~\cite{team2024qwen2} using 8 NVIDIA A800-40G GPUs. For both pre-training and fine-tuning stages, we configure LoRA with a rank $r=16$ and a scaling factor $\alpha=64$. For the gating network, the hidden dimension is set to $512$ and the number of layers is $3$. Detailed hyperparameters for each training stage (including learning rates, batch sizes, and regularization terms specific to different datasets) are provided in Appendix~\ref{appendix:hyperparameters}.

Following the evaluation protocols of Stable ToolBench, we employ Pass Rate (\%) and Win Rate (\%) as metrics: Pass Rate calculates the proportion of successfully completed instructions within a limited budget, while Win Rate utilizes an LLM (\textit{i.e.,} GPT-4o) to evaluate and select the superior trajectory between two provided options. For BFCL, we adopt the official Abstract Syntax Tree (AST) accuracy for evaluation.  All reported values are averaged over three runs. 

\subsection{Baselines}
To validate our proposed method, we compare against seven baselines ranging from standard prompting to advanced tool retrieval strategies:

$\bullet$ \textbf{Context+Docs:} We directly concatenate raw tool documents into the context window, relying solely on the model's knowledge for inference without specific fine-tuning.

$\bullet$ \textbf{Context+ Docs \& Examples :} Besides the tool documents, we also provide the model with the collected examples for each tool to establish a strong in-context learning baseline. The number of provided examples is carefully tuned between $[0,40]$.

$\bullet$ \textbf{Tuning-based Methods:} To assess the necessity of our tool-specific representations, we compare \paratool against two tuning-based methods that encode tool information into a shared parameter space.

\begin{itemize}
    \item \textbf{Global Parameterization:} This baseline replaces our tool-specific parameterization with a single shared LoRA module, which is trained on the entire dataset and applied uniformly across all tools.
    \item \textbf{ToolLLaMA:} A classic and representative tuning-based approach is adopted as the baseline. Specifically, we preprocess the tool-calling traces into both document-aware and document-free formats, and retrain the LoRA variant of ToolLLaMA\cite{qin2024toolllm} on Llama-3.1-8B-Instruct and Qwen2.5-7B-Instruct. During inference, to ensure a fair comparison with \paratool, we use its tool retriever to select relevant tools from the full toolset, rather than providing all tool schemas in the context.
\end{itemize}

$\bullet$ \textbf{Retrieval-based Tool Selectors:} To isolate the efficacy of our gating mechanism for tool selection, we replace it with three established retrieval methods while retaining our backbone of tool parameterization:

\begin{itemize}
     \item \textbf{EmbSim:} A dense retrieval method using OpenAI’s \texttt{text-embedding-3-large}. It selects tools based on the cosine similarity between the query embedding and tool document embeddings.
    \item \textbf{Re-Invoke:} An unsupervised strategy proposed by \citet{chen2024re} that employs query rewriting and document expansion to bridge the lexical gap between queries and tool documents.
    \item \textbf{ToolRetriever:} A BERT-based retriever fine-tuned via contrastive learning \citep{qin2024toolllm}, designed to capture deep semantic relevance for tool selection.
\end{itemize}

\subsection{Main Results}
As presented in Tables \ref{tab:result_for_stb} and \ref{tab:result_for_bfcl}, \paratool consistently achieves state-of-the-art performance across all evaluated models (Llama3.1-8B, Qwen2.5-7B) and diverse benchmark categories: whether in the rigorous format constraints of BFCL or the multi-turn interaction scenarios of Stable ToolBench. Notably, on Stable ToolBench, \paratool delivers a significant relative improvement of 5.9\% and 11.2\% in Average Pass Rate over the strongest baselines for Llama3.1 and Qwen2.5, respectively. For BFCL, the relative improvements are 6.8\% and 4.2\%, respectively. This consistent superiority empirically validates the feasibility of our proposed method.



Compared to the context-based methods, the advantage of \paratool is particularly pronounced in the \texttt{Live Parallel Multiple} category of BFCL, a challenging category for evaluating high-precision, multi-turn tool execution. Specifically, our method outperforms the \textit{\docs} and \textit{\qa} baselines by relative margins of 17.6\% and 6.4\%, respectively. Unlike context-based approaches that rely on passively retrieving information from the context window, our parameter-based paradigm instills tool capabilities as inherent skills, resulting in more accurate invocations.

The performance of \textit{Global Parameterization} highlights the limitations of monolithic representations. For example, its average performance on BFCL is 33\% lower than that of the proposed \paratool. By compressing all tool knowledge into a single global set of parameters, the model struggles to disentangle the functionalities of individual tools, which results in severe catastrophic forgetting and tool hallucinations. The results of \textit{ToolLLaMA} further support this observation: \textit{ToolLLaMA} frequently fails to generate correct tool calls due to catastrophic forgetting.



While retrieval-based tool selectors (\textit{e.g.}, \textit{EmbSim}, \textit{Re-Invoke}) perform adequately in single-turn scenarios (such as \texttt{Multiple}), they fall short in complex scenarios. For example, in the \texttt{Parallel Multiple} task, \paratool outperforms the best tool selector baseline by 31.7\%. Compared to our method, these retrievers struggle to analyze the execution feedback chains required in complex tasks. 


\subsection{Ablation Study}
\label{sec:ablation_study}

In this subsection, we conduct ablation studies to thoroughly validate the design of \paratool. Specifically, we consider four variants: \textit{Param.+Average} simply averages the tool parameters; \textit{Param.+Top-1} keeps our trained gating network unchanged while restricting the model to load only the single tool with the highest weight (Top-1) during inference; \textit{Param.+ Oracle} represents the variant where the ground-truth tool parameters are manually activated; and \textit{ParaTool w/o Finetuning} removes the parametric tool finetuning stage. The results on BFCL with Llama3.1-8B are shown in Figure~\ref{fig:ablation_bfcl_llama} (more results in Appendix~\ref{sec:ablation}). 


Firstly, we verify the necessity of soft tool composition. We can see that the hard selection strategy (\textit{Param.+Top-1}) consistently degrades the accuracy of tool invocation. Especially in the \texttt{Live Parallel Multiple} category, the performance drops by nearly 17\%. This aligns with our theoretical analysis in Section \ref{sec:theory}. Also, \textit{Param.+Average} performs the worse among different weighting strategies. Thus, a proper weight assignment is necessary for tool parameter aggregation. Secondly, \paratool has a substantial improvement of 27.4\% over \textit{ParaTool w/o Finetuning}. This validates the effectiveness of parametric tool finetuning that aligns the training and inference processes. Third, it is worth noting that \paratool performs exceedingly close to the variant \textit{Param.+ Oracle} with oracle tool selector. Even in the most challenging \texttt{Live Parallel Multiple} category, the gap is merely 6.0\%. 
\begin{figure}[!t]
    \centering
    \begin{subfigure}[b]{0.49\linewidth}
        \centering
        \includegraphics[width=\linewidth]{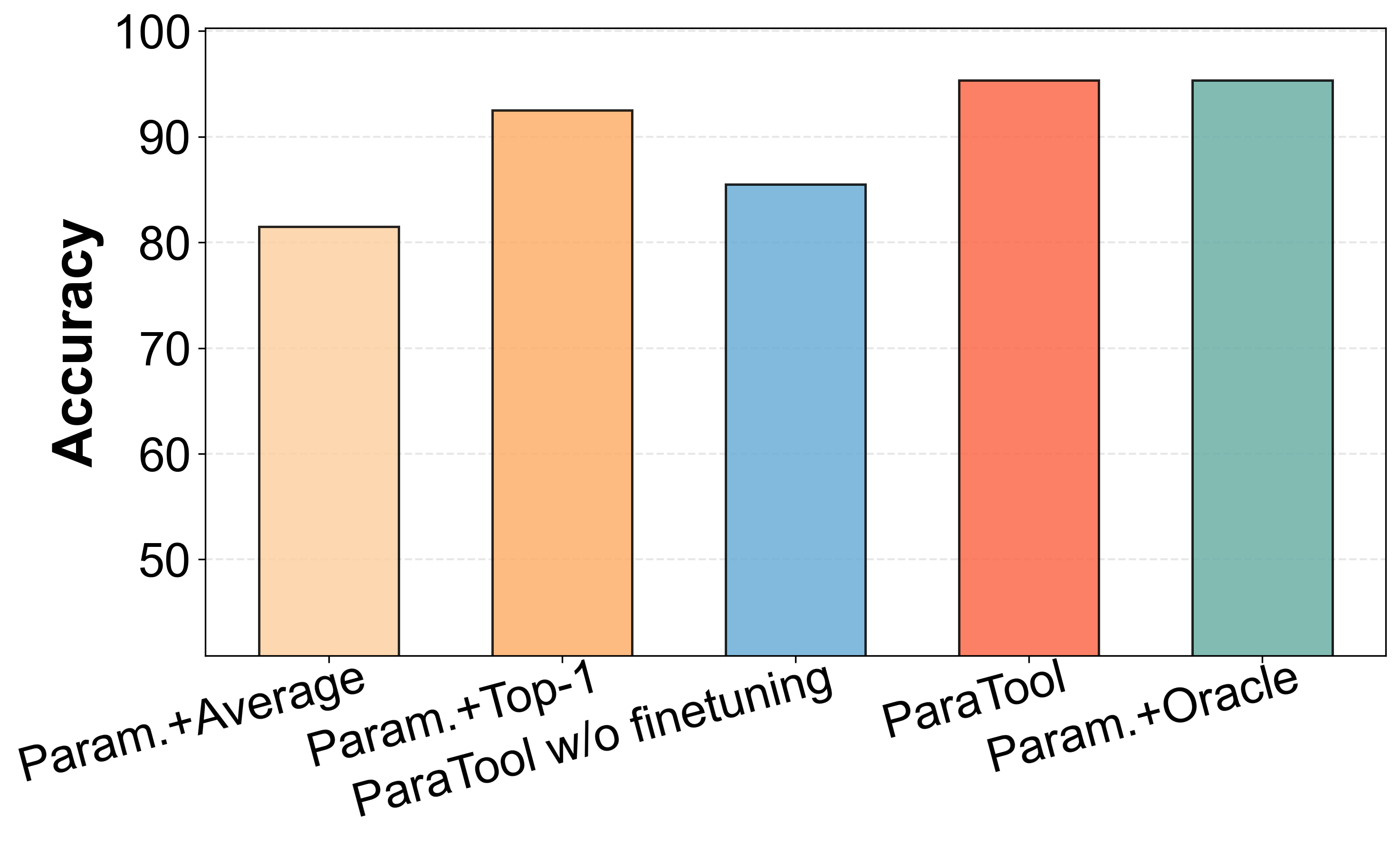}
        \caption{Multiple}
    \end{subfigure}%
    \begin{subfigure}[b]{0.49\linewidth}
        \centering
        \includegraphics[width=\linewidth]{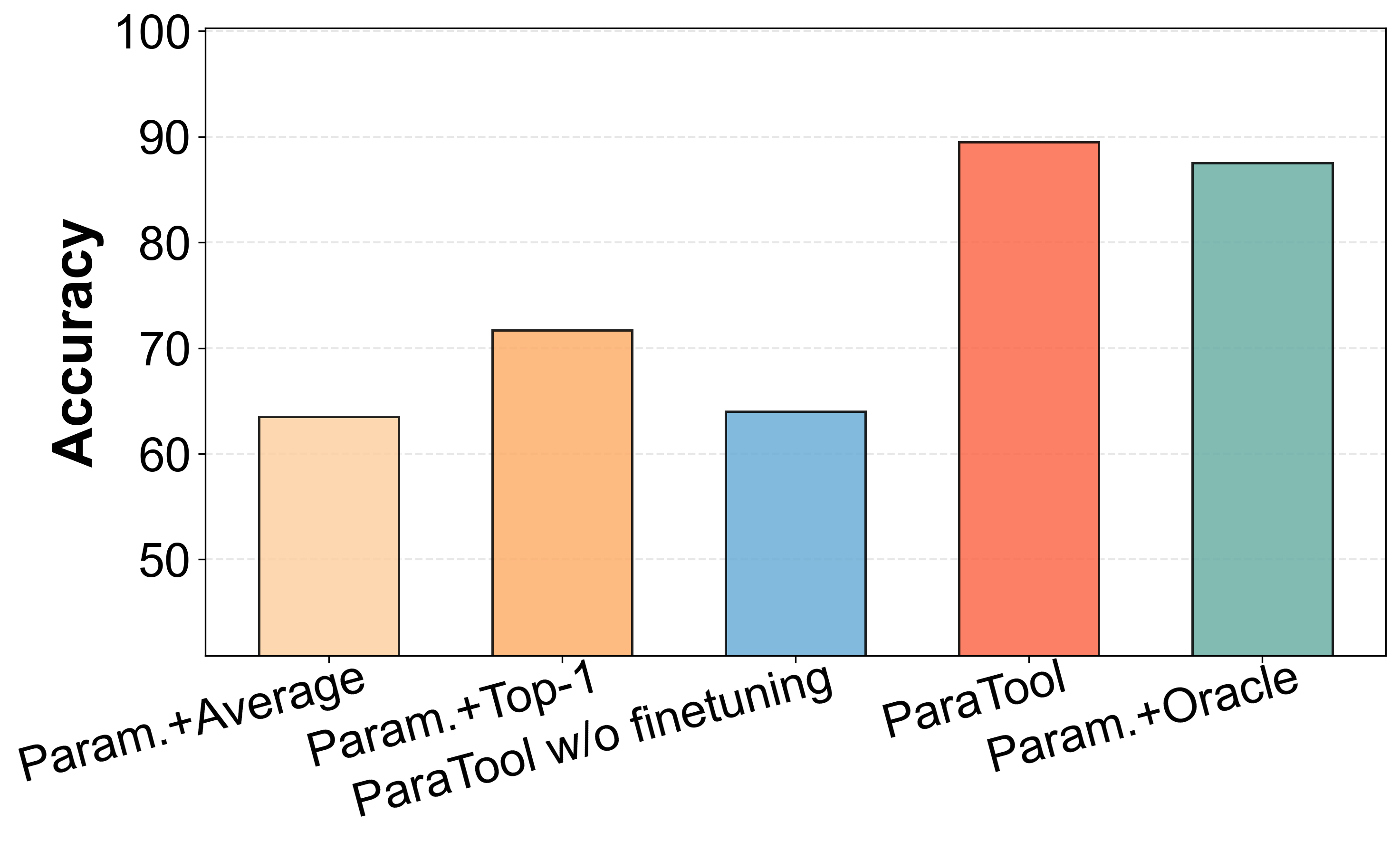}
        \caption{Parallel Multiple}
    \end{subfigure}

    \vspace{1mm} 

    \begin{subfigure}[b]{0.49\linewidth}
        \centering
        \includegraphics[width=\linewidth]{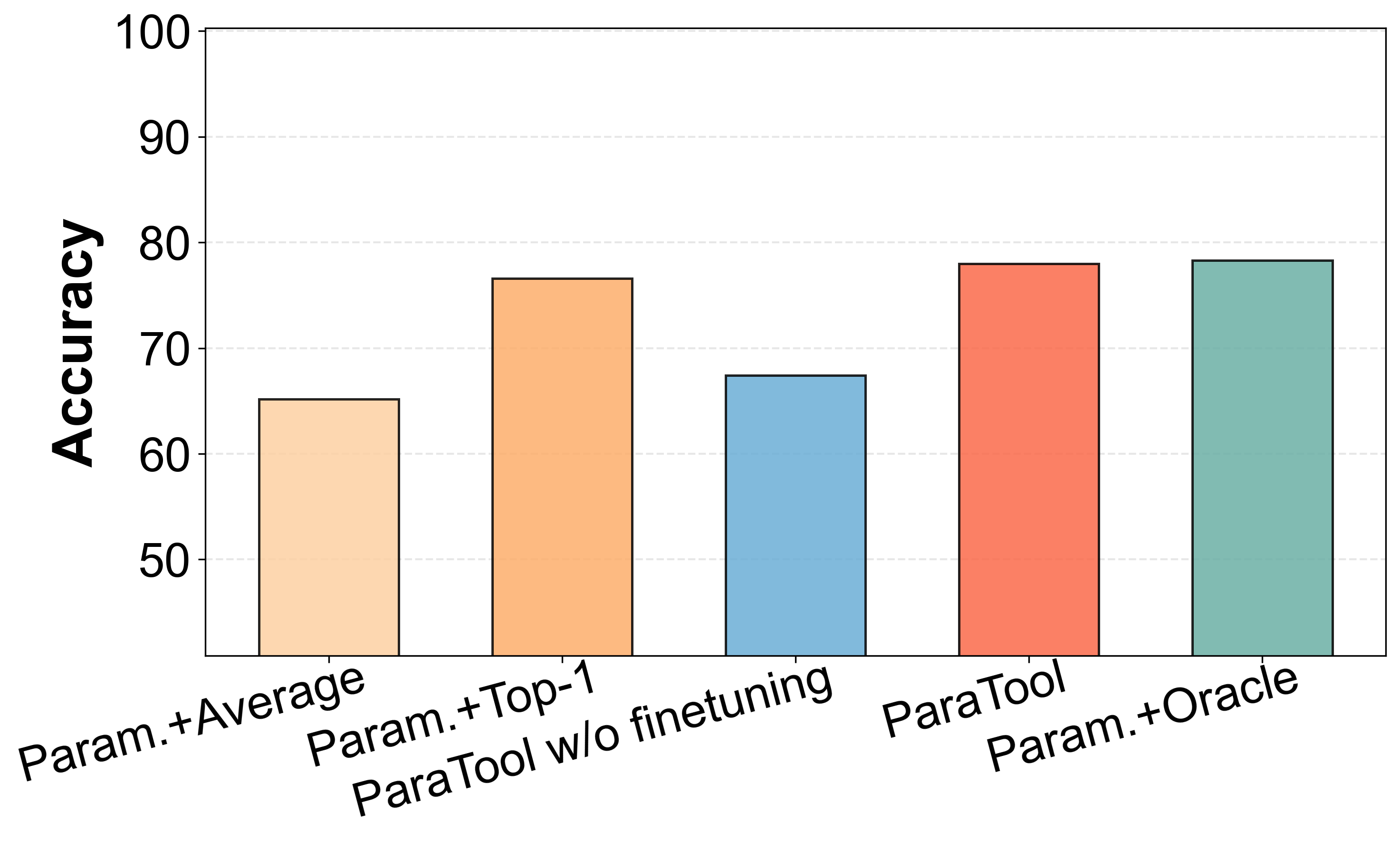}
        \caption{Live Multiple}
        \label{fig:image3}
    \end{subfigure}%
    \begin{subfigure}[b]{0.49\linewidth}
        \centering
        \includegraphics[width=\linewidth]{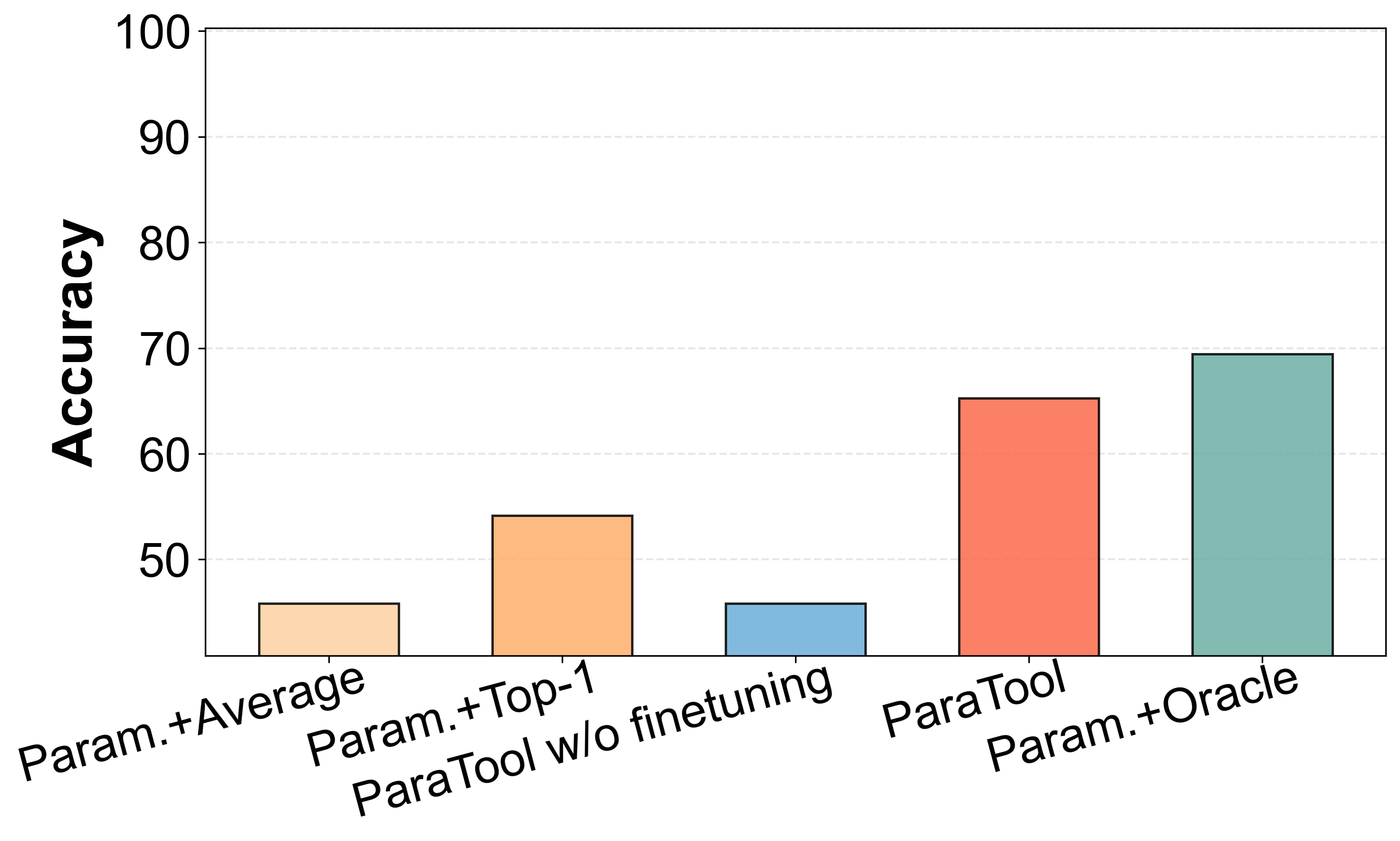}
        \caption{Live Parallel Multiple}
        \label{fig:image4}
    \end{subfigure}

    \caption{Ablation studies on BFCL dataset with Llama3.1-8B. The Parallel categories are omitted since tool selection is not required.}
    \label{fig:ablation_bfcl_llama}
\end{figure}

\subsection{Computational Complexity}
\begin{table}[!h]
    \centering
    \caption{FLOPs estimation for context-based (\qa) and parameter-based (\paratool) methods on the Stable ToolBench. All values are measured in TFLOPs ($10^{12}$).}
    \setlength{\tabcolsep}{4pt} 
    \begin{tabular}{lcccc} 
        \toprule
        \multirow{2}{*}{\textbf{Dataset}} & \multicolumn{2}{c}{\textbf{Llama3.1-8B}} & \multicolumn{2}{c}{\textbf{Qwen2.5-7B}} \\
        \cmidrule(lr){2-3} \cmidrule(lr){4-5}
         & \textbf{Context} & \textbf{Parameter} & \textbf{Context} & \textbf{Parameter} \\
        \midrule
        \textbf{I1-Cat.}  & 904.09 & $42.45_{+1.66}$ & 687.54 & $21.95_{+1.04}$ \\
        \textbf{I1-Inst.} & 594.85 & $54.05_{+2.52}$ & 424.52 & $35.19_{+1.99}$ \\
        \textbf{I1-Tool}  & 699.85 & $68.19_{+2.93}$ & 521.99 & $36.95_{+1.95}$ \\
        \textbf{I2-Cat.}  & 666.06 & $42.71_{+2.14}$ & 522.98 & $37.87_{+2.27}$ \\
        \textbf{I2-Inst.} & 584.46 & $63.76_{+3.42}$ & 432.27 & $47.57_{+3.09}$ \\
        \textbf{I3-Inst.} & 621.23 & $60.73_{+3.33}$ & 1104.35 & $65.54_{+4.27}$ \\
        \bottomrule
    \end{tabular}
    \label{tab:FLOPs_STB}
\end{table}
To empirically quantify the complexity of \paratool, we calculate the floating-point operations (FLOPs) required during inference. The total inference FLOPs are primarily derived from three components: (1) Base Linear Operations, encompassing Feed-Forward Networks (FFNs) and various linear projections; (2) Attention Matrix Multiplications, specifically involving the computation of attention scores and weighted sums; and (3) Method-Specific Overhead, which includes the computations for LoRA branches used to load parameterized tools and the cost of context encoding.

Table~\ref{tab:FLOPs_STB} compares the average FLOPs per problem between context-based and parameter-based paradigms, where the footnotes in the Parameter columns indicate the additional cost introduced by the dynamic aggregation of LoRA matrices. Context-based methods inevitably face a quadratic complexity explosion as the length of tool observations and examples expands. In contrast, \paratool significantly reduces computational overhead, achieving an average reduction of $92.22\%$. Furthermore, the additional overhead introduced by our method is less than $5\%$ of the total inference cost. More results can be found in Appendix \ref{sec:appendix_complexity}.



\subsection{Hyperparameter Analysis}
\begin{figure}[!t]
    \centering
    \begin{subfigure}[b]{0.41\linewidth}  
        \centering
        \includegraphics[width=\linewidth]{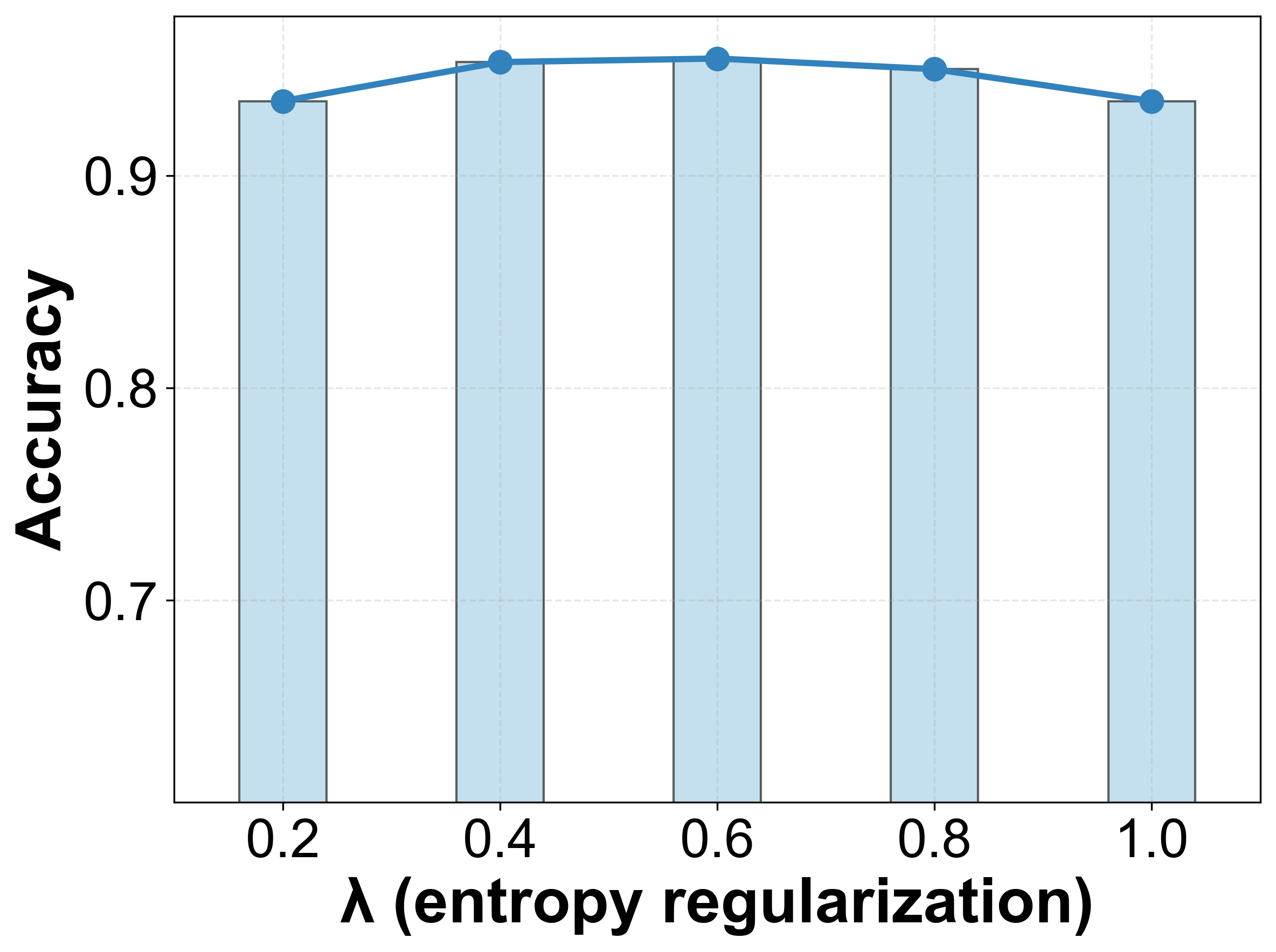}
        \caption{Multiple}
    \end{subfigure}%
    \hspace{1em}
    \begin{subfigure}[b]{0.41\linewidth}
        \centering
        \includegraphics[width=\linewidth]{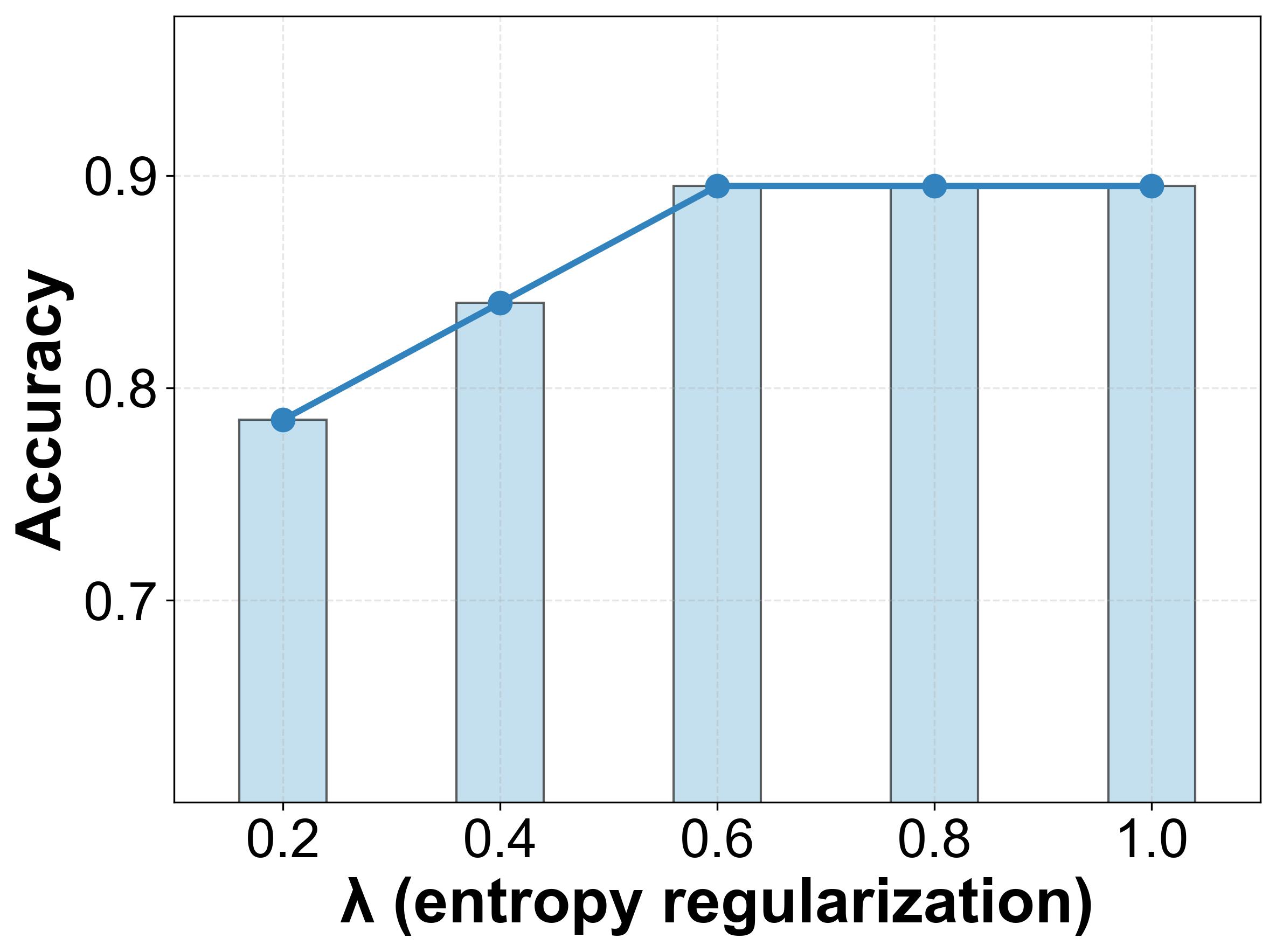}
        \caption{Parallel Multiple}
    \end{subfigure}%
    \hfill
    \begin{subfigure}[b]{0.41\linewidth}
        \centering
        \includegraphics[width=\linewidth]{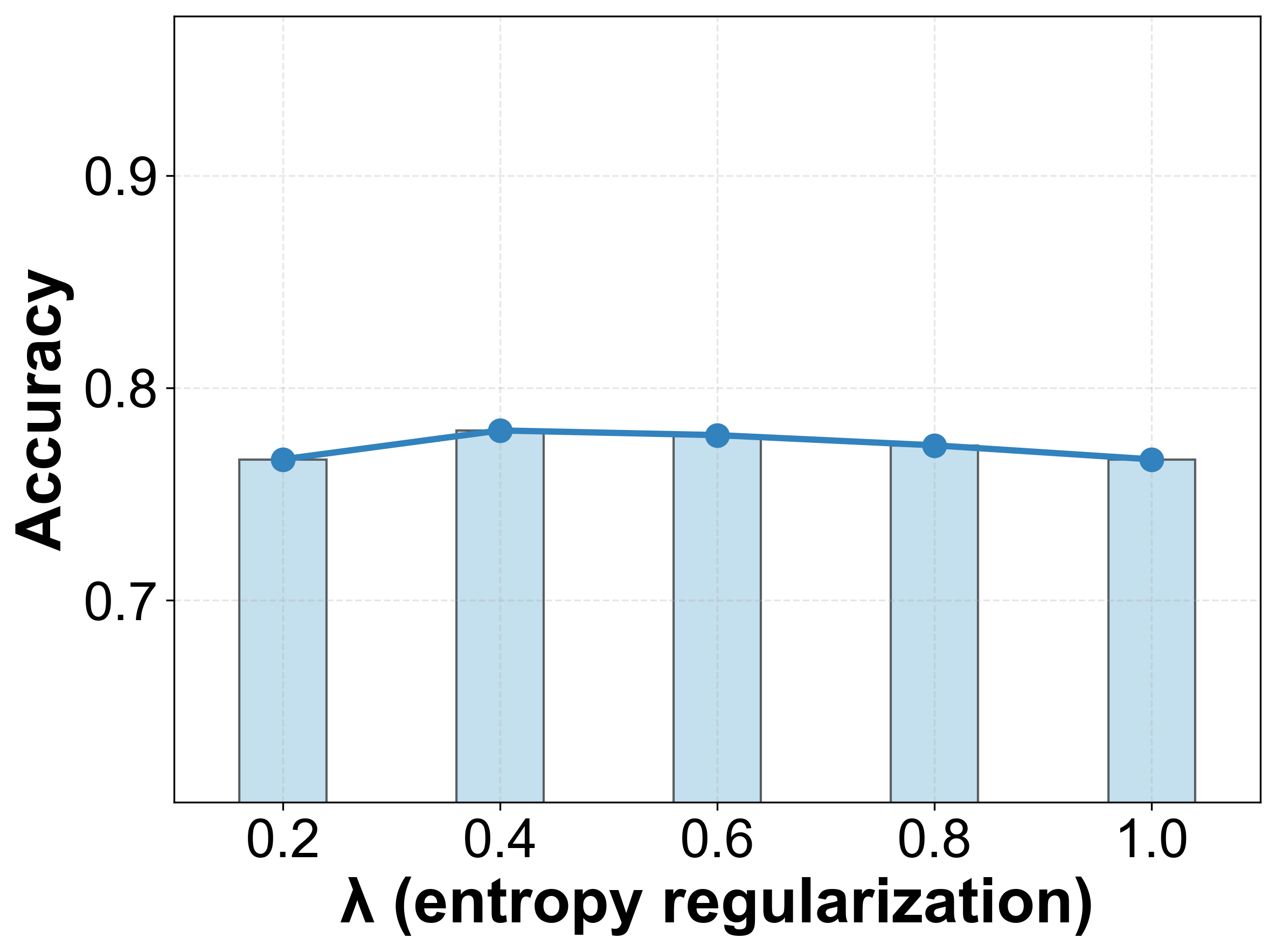}
        \caption{Live Multiple}
    \end{subfigure}%
    \hspace{1em}
    \begin{subfigure}[b]{0.41\linewidth}
        \centering
        \includegraphics[width=\linewidth]{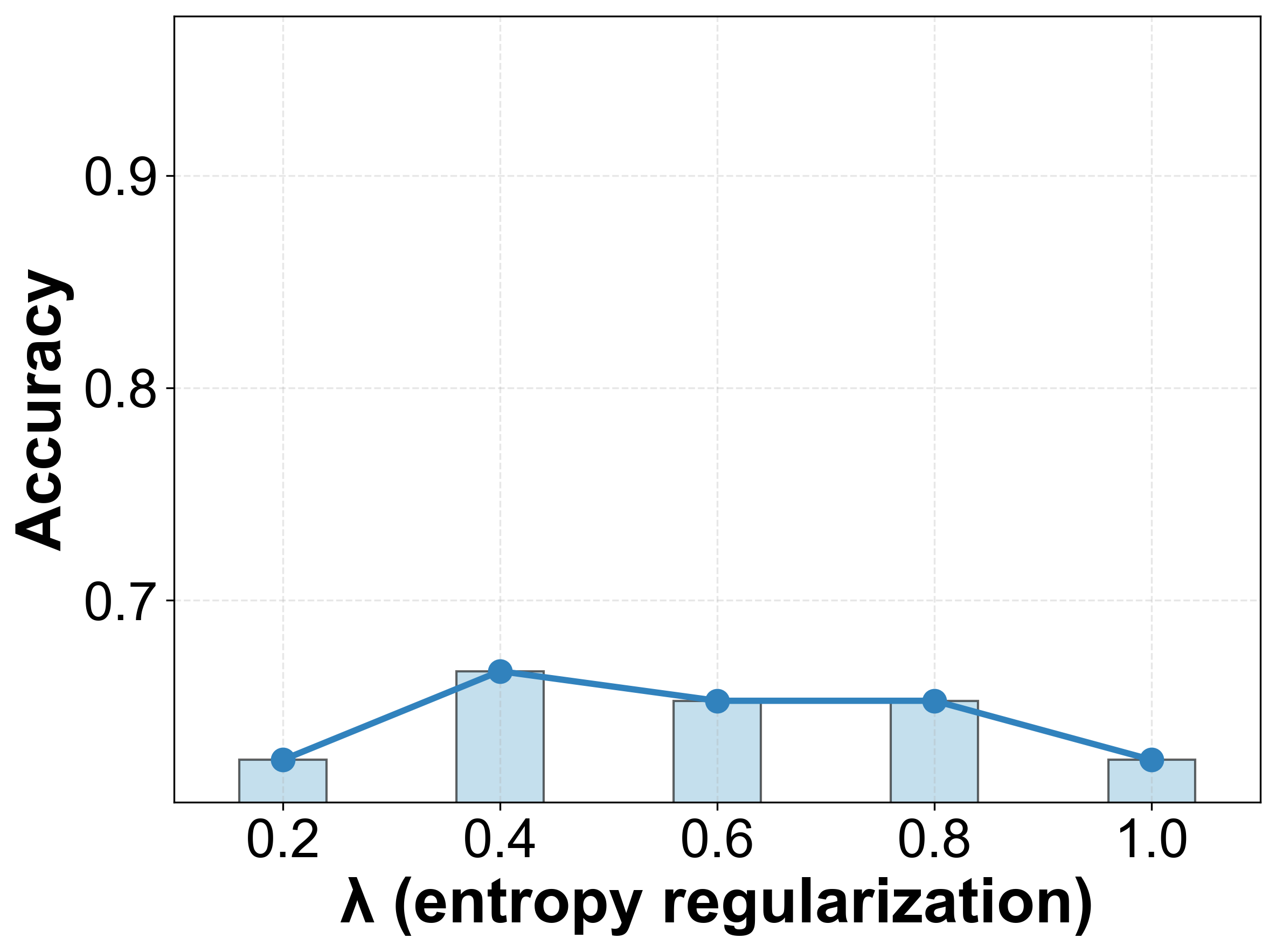}
        \caption{live Parallel Multiple}
    \end{subfigure}

    \caption{Hyperparameter studies on BFCL dataset.}
    \label{fig:hyperparamter_bfcl}
\end{figure}
We investigate the impact of the entropy regularization coefficient $\lambda$, introduced in the gating network's loss function. This hyperparameter controls the trade-off between exploration and exploitation in tool selection. As formulated, the term $-\lambda \cdot H(\alpha_k)$ encourages the distribution of attention weights to be more uniform. Specifically, a small $\lambda$ tends to cause the gating network to collapse into a hard selector (one-hot distribution), diminishing the benefits of soft parameter composition and hindering gradient flow during joint fine-tuning. Conversely, with a large $\lambda$, the total loss may become dominated by the negative entropy term, causing the optimization to prioritize uncertainty over accuracy and preventing the network from learning precise tool selection rules. In practice, we conduct experiments with $\lambda \in \{0.2, 0.4, 0.6, 0.8, 1.0\}$ to identify the optimal settings. To address BFCL's diverse task structures and entropy needs (Figure~\ref{fig:hyperparamter_bfcl}), we optimize $\lambda$ individually for each subset. More details are in Appendix \ref{appendix:hyperparameters}.

\section{Conclusion}
In this paper, we introduce \paratool, a framework that shifts LLM tool usage from context-heavy prompting to efficient parameter-based execution. By encoding tool knowledge into loadable parameters modulated by soft selection, our three-stage pipeline enables models to internalize tool calling capabilities in a plug-and-play manner. This approach facilitates dynamic adaptation to diverse user intents without relying on verbose in-context documentation or examples. Experiments on Stable ToolBench and BFCL demonstrate that \paratool outperforms ICL-based baselines. Theoretical and empirical analyses  confirm that these gains are achieved with enhanced robustness and reduced computational complexity. For future work, we will further improve the adaptability of this framework to more dynamic and open-ended tool-use environments.

\section*{Limitations}
Although \paratool improves the efficiency and effectiveness of tool calling by shifting tool knowledge from context to parameters, it still has several limitations. 

Firstly, \paratool adopts a lightweight MLP-based gating network for soft tool selection. While this design keeps the routing overhead small, it may become less reliable when user queries involve complex multi-step reasoning or when candidate tools have highly similar schemas and overlapping functionalities. In such cases, the gating network may assign insufficient weight to the most relevant tool, or distribute weights over semantically similar but incorrect tools, thereby limiting the model's access to the appropriate parametric knowledge. Future work could explore more expressive routing mechanisms, such as uncertainty-aware gating, LLM-assisted tool selection, or hybrid selectors that invoke a stronger router only for ambiguous cases.

Secondly, \paratool currently requires each tool to be parameterized during training. As a result, the model cannot directly use newly introduced or unseen tools unless their knowledge has been encoded into corresponding parameter modules. This limits its flexibility in open-ended tool-use environments where tools may be frequently updated or newly released. A promising direction is to develop meta-learning approaches, such as hypernetworks, that can translate tool documentation or schemas directly into functional parametric representations, enabling rapid adaptation to unseen tools without full retraining.
\section*{Impact Statement}
Our proposed \paratool framework significantly advances the effectiveness of LLM tool usage. By eliminating the need for processing detailed tool documents and examples during inference, this work directly contributes to overcoming the fundamental context window constraints that limit the scalability of current agentic systems. Additionally, the improved robustness of parametric tool execution fosters the development of more reliable AI assistants.

\section*{Acknowledgments}
This work is supported in part by the National Natural Science Foundation of China (No. 62550138, 62192784, 62572064, 62472329), and the Beijing Natural Science Foundation (No. 253004).

\bibliography{example_paper}
\bibliographystyle{icml2026}

\newpage
\appendix
\onecolumn
\newpage
\section{Proofs of Theoretical Results}
\label{app:proofs}

In this section, we provide detailed proofs for the theoretical analysis presented in Section \ref{sec:theory}. 

\begin{lemma}[Relationship between Gradient Norm and Robustness Radius]
\label{lemma:grad_to_radius}
Under Assumption \ref{asm:smooth} (Local $\beta$-smoothness), for any input context $x$ (query and history), target action $y$, and weighting $\alpha$, the robustness radius $R_\varepsilon(x,y;\alpha)$ is lower-bounded by a monotonically decreasing function of the gradient norm $\|g_\alpha(x,y)\|_2$. Specifically:
\begin{equation}
R_\varepsilon(x,y;\alpha) \ge \frac{\sqrt{\|g_\alpha(x,y)\|_2^2 + 2\beta\varepsilon} - \|g_\alpha(x,y)\|_2}{\beta}.
\end{equation}
\end{lemma}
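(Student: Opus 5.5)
The plan is to use Assumption~\ref{asm:smooth} to bound the loss increase over any ball, and then solve a scalar quadratic inequality for the largest admissible radius. Write $g = g_\alpha(x,y)$ and fix any perturbation $\delta$ with $\|\delta\|_2 \le z$. Assumption~\ref{asm:smooth} and Cauchy--Schwarz, $\langle g,\delta\rangle \le \|g\|_2\|\delta\|_2$, give
\begin{equation*}
\mathcal{J}_\alpha(x+\delta,y) - \mathcal{J}_\alpha(x,y) \le \|g\|_2\, z + \frac{\beta}{2} z^2 \eqqcolon \phi(z).
\end{equation*}
Since this bound holds uniformly over the ball, every $z \ge 0$ with $\phi(z) \le \varepsilon$ belongs to the set whose supremum defines $R_\varepsilon(x,y;\alpha)$ in Definition~\ref{def:radius}.

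Next, solve $\phi(z)=\varepsilon$, i.e.\ $\frac{\beta}{2}z^2 + \|g\|_2 z - \varepsilon = 0$, taking the nonnegative root
\begin{equation*}
z^\star = \frac{-\|g\|_2 + \sqrt{\|g\|_2^2 + 2\beta\varepsilon}}{\beta}.
\end{equation*}
For $\varepsilon \ge 0$ this root is $\ge 0$, and because $\phi$ is increasing on $[0,\infty)$ we have $\phi(z) \le \varepsilon$ for all $z \in [0, z^\star]$. Hence $z^\star$ lies in the admissible set, so $R_\varepsilon \ge z^\star$, which is the claimed bound.

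It remains to check monotonicity. Rationalizing gives
\begin{equation*}
z^\star = \frac{2\varepsilon}{\sqrt{\|g\|_2^2 + 2\beta\varepsilon} + \|g\|_2},
\end{equation*}
which is visibly decreasing in $\|g\|_2$ for $\varepsilon > 0$. This rationalized form also handles the limit $\beta \to 0$ cleanly, giving $z^\star \to \varepsilon/\|g\|_2$.

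The only delicate point is that Assumption~\ref{asm:smooth} is stated only for "small" $\delta$. I would read it as holding on a neighborhood containing the ball of radius $z^\star$, or else cap the bound at the assumed locality radius. I would state this explicitly rather than try to remove it.
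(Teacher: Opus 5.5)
Your proof is correct and follows the paper's argument step for step: smoothness plus Cauchy--Schwarz bounds the loss increment by $\|g\|_2 z + \frac{\beta}{2}z^2$, and the nonnegative root of the resulting quadratic is an admissible radius. The differences are cosmetic. You prove monotonicity via the rationalized form $2\varepsilon/(\sqrt{\|g\|_2^2+2\beta\varepsilon}+\|g\|_2)$ where the paper checks a derivative, and you state explicitly the locality caveat in Assumption~\ref{asm:smooth} that the paper leaves implicit.
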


\begin{proof}
By Assumption \ref{asm:smooth}, for any perturbation $\delta$ with $\|\delta\|_2 \le z$, the loss increment is bounded by:
\begin{equation}
\mathcal{J}_\alpha(x+\delta,y) - \mathcal{J}_\alpha(x,y)
\le \langle g_\alpha(x,y), \delta \rangle + \frac{\beta}{2}\|\delta\|_2^2
\le \|g_\alpha(x,y)\|_2 z + \frac{\beta}{2}z^2.
\end{equation}
According to Definition \ref{def:radius}, to ensure the model is robust within radius $z$ (\textit{i.e.,} the loss increment $\le \varepsilon$), it suffices to satisfy:
\begin{equation}
\|g_\alpha(x,y)\|_2 z + \frac{\beta}{2}z^2 \le \varepsilon.
\end{equation}
Solving the quadratic equation $\frac{\beta}{2}z^2 + \|g_\alpha(x,y)\|_2 z - \varepsilon = 0$ for the positive root yields the tightest guaranteed radius $r^*$:
\begin{equation}
z^* = \frac{-\|g_\alpha(x,y)\|_2 + \sqrt{\|g_\alpha(x,y)\|_2^2 + 2\beta\varepsilon}}{\beta}.
\end{equation}
Thus, $R_\varepsilon(x,y;\alpha) \ge z^*$. Let $h(u) = \frac{\sqrt{u^2+C}-u}{\beta}$ (where $u=\|g_\alpha\|_2$ and $C=2\beta\varepsilon$). Since the derivative $h'(u) < 0$ for $u \ge 0$, a smaller gradient norm strictly implies a larger lower bound on the robustness radius.
\end{proof}

\subsection{Proof of Theorem \ref{thm:main}}

\textbf{Theorem \ref{thm:main} (Gradient Norm Bound).}
\textit{Under Assumptions \ref{asm:bound} and \ref{asm:agg}, for any $\alpha\in\Delta_{|\mathcal{T}|-1}$:}
$$
\|g_\alpha(x,y)\|_2 \le G\sqrt{\rho_g+(1-\rho_g)\|\alpha\|_2^2} + \delta_g.
$$

\begin{proof}
By Assumption \ref{asm:agg} (Linear Aggregation) and the triangle inequality, we have:
\begin{equation}
\|g_\alpha(x,y)\|_2 = \left\| \sum_{i=1}^{|\mathcal{T}|} \alpha_i g_i(x,y) + \xi_\alpha(x,y) \right\|_2
\le \left\| \sum_{i=1}^{|\mathcal{T}|} \alpha_i g_i(x,y) \right\|_2 + \|\xi_\alpha(x,y)\|_2.
\end{equation}
Using the bound $\|\xi_\alpha\|_2 \le \delta_g$, let $G_{mix} \triangleq \sum_{i=1}^{|\mathcal{T}|} \alpha_i g_i(x,y)$. We analyze the squared norm $\|G_{mix}\|_2^2$:
\begin{equation}
\begin{aligned}
\|G_{mix}\|_2^2 &= \left\langle \sum_{i=1}^{|\mathcal{T}|} \alpha_i g_i, \sum_{j=1}^{|\mathcal{T}|} \alpha_j g_j \right\rangle \\
&= \sum_{i=1}^{|\mathcal{T}|} \alpha_i^2 \|g_i\|_2^2 + \sum_{i \neq j} \alpha_i \alpha_j \langle g_i, g_j \rangle.
\end{aligned}
\end{equation}
Using Definition \ref{def:rho} (Gradient Alignment Coefficient), we know that for $i \neq j$, $\langle g_i, g_j \rangle \le \rho_g \|g_i\|_2 \|g_j\|_2$. Combined with Assumption \ref{asm:bound} ($\|g_i\|_2 \le G$), we have:
\begin{equation}
\begin{aligned}
\|G_{mix}\|_2^2 &\le \sum_{i=1}^{|\mathcal{T}|} \alpha_i^2 G^2 + \sum_{i \neq j} \alpha_i \alpha_j \rho_g G^2 \\
&= G^2 \left( \sum_{i=1}^{|\mathcal{T}|} \alpha_i^2 + \rho_g \sum_{i \neq j} \alpha_i \alpha_j \right).
\end{aligned}
\end{equation}
Note that $(\sum_{i=1}^{|\mathcal{T}|} \alpha_i)^2 = \sum_{i=1}^{|\mathcal{T}|} \alpha_i^2 + \sum_{i \neq j} \alpha_i \alpha_j$. Since $\alpha \in \Delta_{|\mathcal{T}|-1}$, we have $\sum_{i=1}^{|\mathcal{T}|} \alpha_i = 1$, which implies $\sum_{i \neq j} \alpha_i \alpha_j = 1 - \|\alpha\|_2^2$. Substituting this back:
\begin{equation}
\begin{aligned}
\|G_{mix}\|_2^2 &\le G^2 \left( \|\alpha\|_2^2 + \rho_g (1 - \|\alpha\|_2^2) \right) \\
&= G^2 \left( \rho_g + (1-\rho_g)\|\alpha\|_2^2 \right).
\end{aligned}
\end{equation}
Taking the square root and adding the residual bound $\delta_g$, we obtain the stated bound:
\begin{equation}
\|g_\alpha(x,y)\|_2 \le G\sqrt{\rho_g + (1-\rho_g)\|\alpha\|_2^2} + \delta_g.
\end{equation}
\end{proof}

\subsection{Proof of Corollary \ref{cor:softness}}

\textbf{Corollary \ref{cor:softness} (Robustness of Soft Composition).}
\textit{Suppose Assumptions \ref{asm:smooth}--\ref{asm:bound} hold. When tool gradients are not perfectly aligned ($\rho_g < 1$), a softer composition (lower $\|\alpha\|_2^2$) strictly decreases the upper bound on the gradient norm $\|g_\alpha(x,y)\|_2$. Consequently, soft composition yields a strictly larger certified robustness radius $R_\varepsilon(x,y;\alpha)$.}

\begin{proof}
Let $B(\alpha) \triangleq G\sqrt{\rho_g + (1-\rho_g)\|\alpha\|_2^2} + \delta_g$ be the upper bound derived in Theorem \ref{thm:main}.

Firstly, we analyze the effect of weights $\alpha$ on $B(\alpha)$. Since $\rho_g < 1$, the coefficient $(1-\rho_g)$ is strictly positive. Therefore, $B(\alpha)$ is a strictly increasing function of the squared $\ell_2$-norm of the weights, $\|\alpha\|_2^2$.
\begin{itemize}
    \item \textbf{Hard Selection:} For a one-hot vector (hard selection), $\|\alpha_{hard}\|_2^2 = 1$. The bound is $B_{hard} = G + \delta_g$.
    \item \textbf{Soft Composition:} For any non-one-hot distribution (e.g., uniform weights where $\|\alpha_{unif}\|_2^2 = 1/|\mathcal{T}|$), we have $\|\alpha_{soft}\|_2^2 < 1$.
\end{itemize}
Thus, for any soft composition, $B(\alpha_{soft}) < B(\alpha_{hard})$.

Secondly, linking this to robustness, Lemma \ref{lemma:grad_to_radius} establishes that the certified robustness radius lower bound is strictly monotonically decreasing with respect to the gradient norm $\|g_\alpha\|_2$.
Since soft composition strictly reduces the upper bound of the gradient norm compared to hard selection, it guarantees a larger lower bound of certified robustness radius.
\end{proof}
\newpage
\section{Benchmark Details}
\label{appendix:benchmark_details}
Stable ToolBench is a large-scale, reproducible benchmark for evaluating LLMs’ tool-learning capabilities. It aims to address two major sources of instability commonly observed in ToolBench-style evaluations that rely on real online APIs: non-stationary API behavior over time (e.g., endpoints going offline, quota limits, or response-format drift) and randomness in the evaluation process. Built on the tasks and tool ecosystem of ToolBench, StableToolBench introduces a virtual API server that combines a caching system with API simulators, substantially improving reproducibility while preserving key characteristics of real tool interactions. It also provides a stable evaluation system, leveraging LLM-based automatic judging and metrics such as solvable pass and win rate to reduce evaluation variance, making comparisons of tool-use performance across models and methods more stable and reliable.Please refer to Table \ref{tab:stb_stats} for the number of questions in each category and the number of distinct tools.

BFCL (Berkeley Function-Calling Leaderboard) is a standardized evaluation benchmark for large language models’ function/tool-calling capabilities. It is designed to systematically assess a model’s ability, in realistic workflows, to select appropriate tools and generate executable call arguments. The benchmark covers diverse invocation patterns, including simple, multiple, parallel, and parallel-multiple function calls. Its function documentation and user queries are more diverse in style, domain, and language, and include more challenging cases such as function selection among many candidates and deeply nested parameter structures. On the evaluation side, BFCL emphasizes structured alignment and executability checks of function-call outputs (e.g., AST-based structural matching and execution-based validation by actually triggering calls), enabling more reproducible, quantitative measurements of tool-use performance.We exclude the two easiest categories from evaluation, \textit{i.e.,} Simple and Live Simple, where the desired tool is explicitly stated in the problem input.Please refer to Table \ref{tab:bfcl_stats} for the number of questions in each category and the number of distinct tools.

\begin{table}[h]
\centering
\caption{Stabel ToolBench Dataset Statistics}
\label{tab:stb_stats}
\begin{tabular}{lcc} 
\toprule
\textbf{Category} & \textbf{Question Count} & \textbf{Tools Count} \\
\midrule
I1-Category      & 153 & 364 \\
I1-Instruction   & 163 & 810 \\
I1-Tool          & 158 & 499 \\
I2-Category      & 124 & 433 \\
I2-Instruction   & 106 & 604 \\
I3-Instruction   & 61  & 44  \\
\bottomrule
\end{tabular}
\end{table}

\begin{table}[h]
\centering
\caption{BFCL Dataset Statistics}
\label{tab:bfcl_stats}
\begin{tabular}{lrr} 
\toprule
\textbf{Category} & \textbf{Question Count} & \textbf{Tool Count} \\
\midrule
Multiple & 200 & 468 \\
Parallel & 200 & 197 \\
Parallel Multiple & 200 & 462 \\
Live Parallel & 16 & 18 \\
Live Parallel Multiple & 24 & 82 \\
Live Multiple & 1053 & 1029 \\
\bottomrule
\end{tabular}
\end{table}

\newpage
\section{Data Synthesis Details}
\label{appendix:synthesis_details}
\subsection{Data Synthesis for Stable ToolBench}
\label{appendix:stb_details}
\subsubsection{Query Retrieval and Decontamination}
Our data construction begins with the query retrieval from the \textbf{ToolBench} dataset~\cite{qin2024toolllm}. For each target tool within Stable ToolBench, we extract user queries where the tool is explicitly identified in the \texttt{relevant API} field. To guarantee the integrity of our experimental evaluation, we enforce a strict decontamination protocol: retrieved queries are cross-referenced against the official Stable ToolBench test set, and any overlapping instances are meticulously excluded to eliminate the risk of \textit{data leakage}.

\subsubsection{Trajectory Generation}
Based on the retrieved queries, we synthesize high-quality execution trajectories from scratch using the open-source models. 
\begin{itemize}
    \item \textbf{Generation Configuration:} To explore diverse solution paths, we employ a {Best-of-N sampling strategy} ($N=8$, temperature $T=0.9$). Given the complexity of tool interactions, we allow a maximum of {50 retry attempts} per generation turn to mitigate potential formatting errors or API exceptions.
    \item \textbf{Evaluation and Tournament Selection:} The generated trajectories are first verified by the official Stable ToolBench evaluator. Subsequently, we utilize a \textit{Tournament Selection} mechanism to perform pairwise comparisons among candidates marked as ``Solved'', prioritizing trajectories with coherent reasoning and minimal execution steps. The evaluation and selection are based on GPT-4o.
\end{itemize}

\subsubsection{Post-processing}
We also perform post-processing on the generated trajectories:
\begin{enumerate}
    \item \textbf{Status Filtering:} We exclusively retain samples with a verified evaluation status of ``Solved''. Any trajectories resulting in execution errors, timeouts, or marked as ``Unsolved''/``Unsure'' are discarded.
    \item \textbf{Schema Alignment:} To address minor discrepancies between model generations and standard API definitions (\textit{e.g.}, casing mismatches), we apply standardization mapping rules. This process ensures that all final API calls strictly adhere to the tool schema definitions syntactically.
\end{enumerate}

\subsection{Data Synthesis for BFCL}
\label{appendix:bfcl_details}
\subsubsection{Tool Processing and Example Synthesis}
We derive tool schemas from the BFCL dataset. To eliminate redundancy, we deduplicate tools based on the SHA1 hash of their complete schema definitions, rather than relying solely on tool names. For each unique tool, we employ GPT-4o to synthesize diverse \textit{examples}, each comprising a natural language query and its corresponding standard API call. 

\subsubsection{Quality Assurance}
To control the quality of generated data, a validation module parses the LLM outputs to ensure that:
\begin{enumerate}
    \item The response adheres to a valid Python function call format.
    \item Function names align exactly with the provided schema.
    \item All mandatory parameters are included.
    \item Parameter types are correct or implicitly convertible to the required specifications.
\end{enumerate}

\subsubsection{Complex Trajectory Construction}
Building upon the \textbf{20 atomic examples} generated for each unique tool, we construct instances across varying levels of complexity:

\begin{enumerate}
    \item For a given target tool, we retrieve a candidate set of 10 ``distractor'' tools based on the token-level Jaccard similarity of their schemas. The model is tasked with identifying and selecting the correct tool from a candidate list containing both the target and distractors. To increase diversity, each atomic example is expanded into {$20$ distinct trajectories}: $10$ instances involving a dual-tool candidate list (target + $1$ distractor) and $10$ instances involving a triple-tool list (target + $2$ distractors).
    
    \item We randomly sample and combine the atomic examples of a specific tool into multi-turn episodes involving {2, 3, and 4 distinct calls}. The corresponding queries are linguistically merged into a coherent user request. In total, we construct \textbf{$60$ samples} per tool.
    
    \item For each target tool, we first identify 10 candidate tools and aggregate 10 atomic examples from each into a \textit{candidate query pool}. We then sample from this pool to create composite commands. Specifically, each atomic example of the main tool is augmented and expanded into {$60$ complex instances}: $20$ dual-tool, $20$ triple-tool, and $20$ quadruple-tool combinations. 
    
\end{enumerate}
To prevent positional bias, we randomly shuffle the order of the tools and their corresponding sub-queries.


\subsection{Data Formats for Training ParaTool}
We format the generated instances as a next tool prediction task. We truncate the interaction trajectory immediately before the target tool's execution. The history of preceding tool calls is appended to the user query to provide necessary context, structured as: \texttt{User Query \textbackslash n\textbackslash n[TOOLCALL\_HISTORY]\textbackslash n[Preceding Calls]}. Here we present an example for illustrating document-aware and document-free formats.
\subsubsection{Document-aware format}
This prompt template (see Figure \ref{fig:BFCL_document_aware}) facilitates a document-aware format by incorporating the full tool documentation and descriptions.
\begin{figure}[H]
    \centering
    \includegraphics[width=0.9\linewidth]{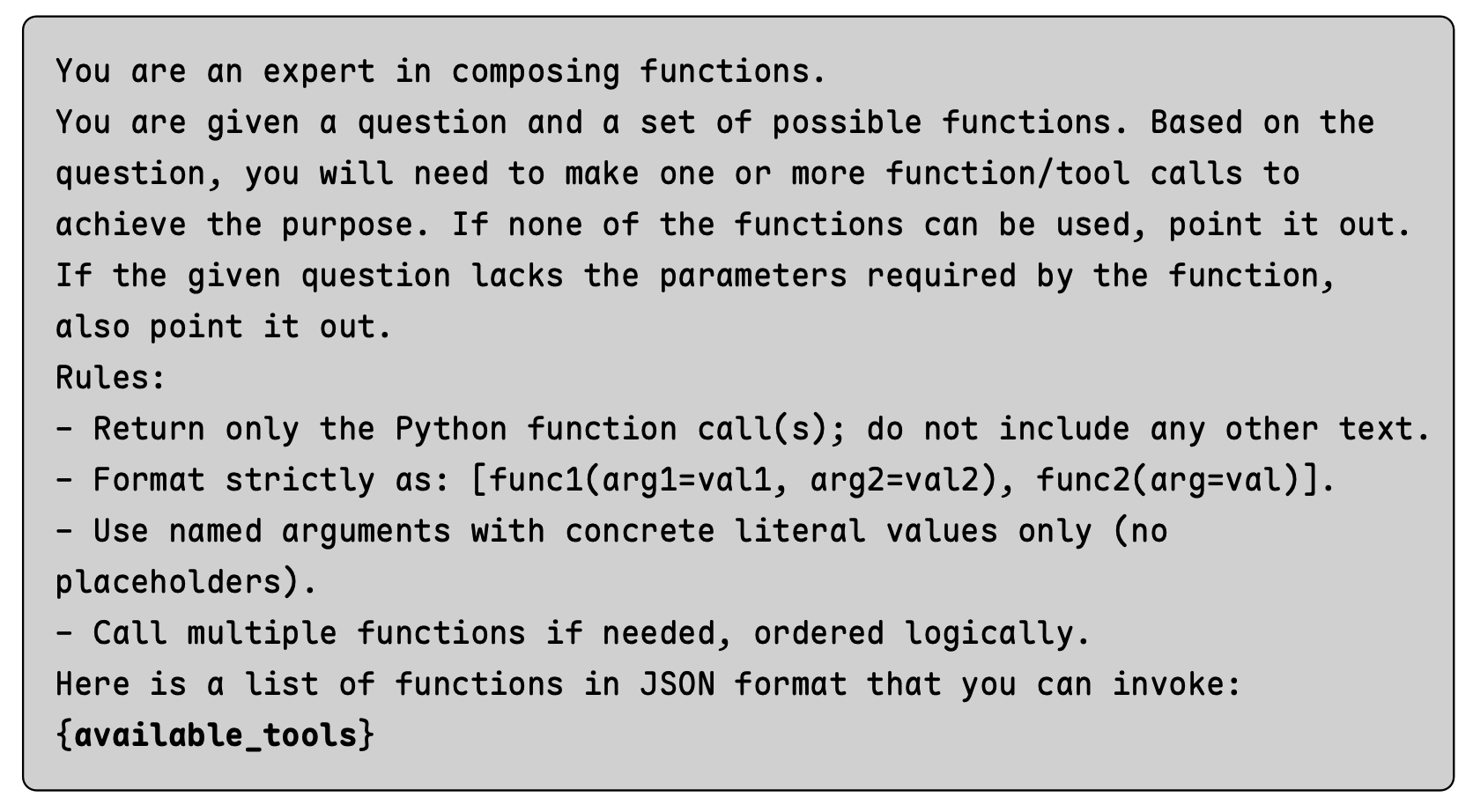}
    \caption{The prompt template of the document-aware format.}
    \label{fig:BFCL_document_aware}
\end{figure}
\subsubsection{Document-free format}
This prompt template (see Figure\ref{fig:BFCL_document_free}) is intended for document-free scenarios, where tool documentation and descriptions are omitted.
\begin{figure}[H]
    \centering
    \includegraphics[width=0.9\linewidth]{Figures//Prompts/BFCL_document_free.png}
    \caption{The prompt template of the document-free format.}
    \label{fig:BFCL_document_free}
\end{figure}
\newpage
\section{Hyperparameter Settings}
\label{appendix:hyperparameters}
This section details the specific hyperparameter configurations for the three training stages. We retain $20\%$ training data for validation. All experiments are conducted on 8 NVIDIA A800-40G GPUs using the AdamW optimizer.

As shown in Figure~\ref{fig:hyperparameter_stb}, the Stable ToolBench dataset benefits from a unified training approach with a fixed $\lambda=0.8$ for the gating network, balancing selection determinism and parameter fusion due to its consistent task format.
\begin{figure}[h]
    \centering
    \begin{subfigure}[b]{0.32\linewidth}
        \centering
        \includegraphics[width=\linewidth]{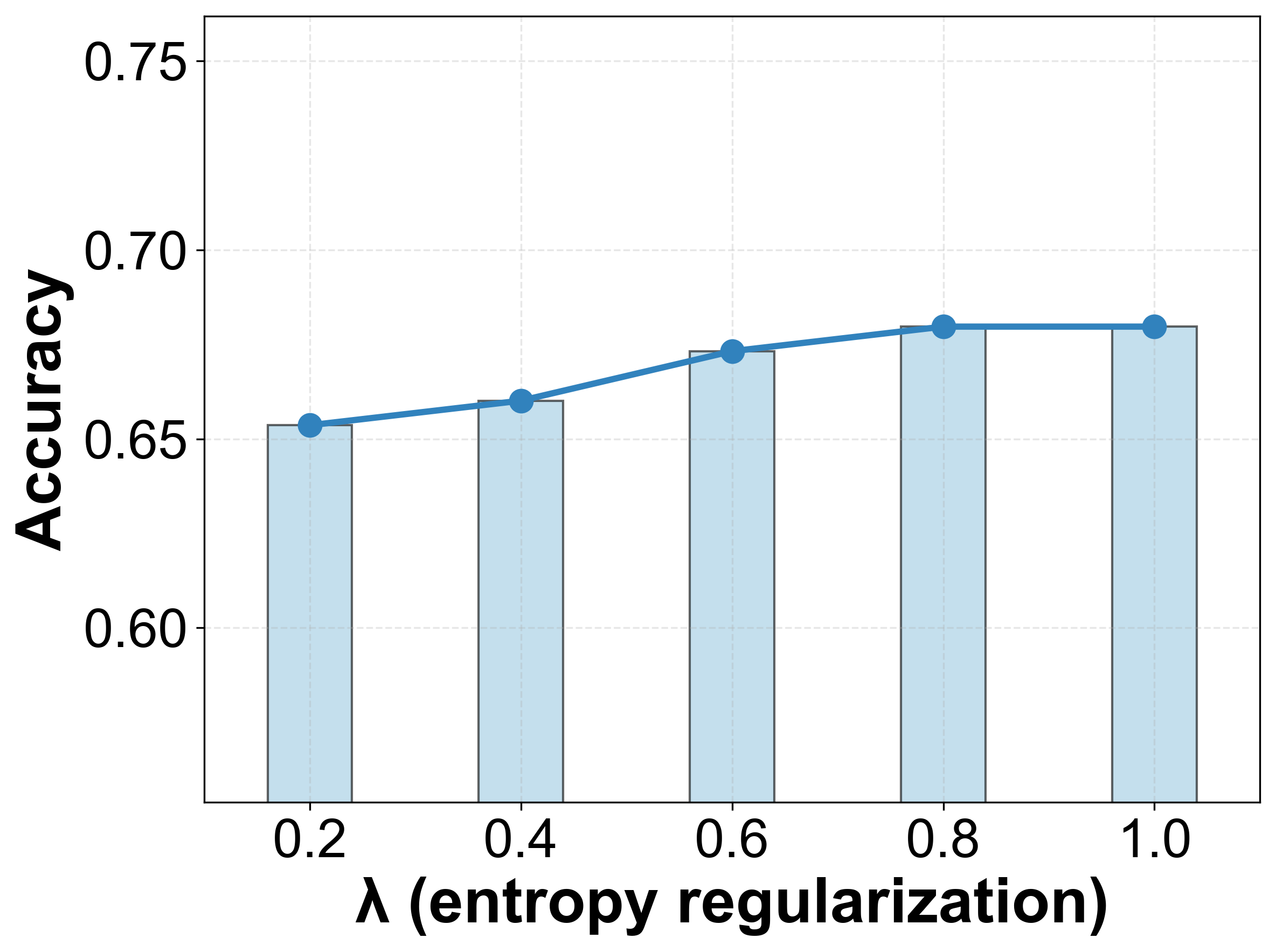} 
        \caption{I1-Category} 
        \label{fig:g1_cat}
    \end{subfigure}
    \hfill 
    \begin{subfigure}[b]{0.32\linewidth}
        \centering
        \includegraphics[width=\linewidth]{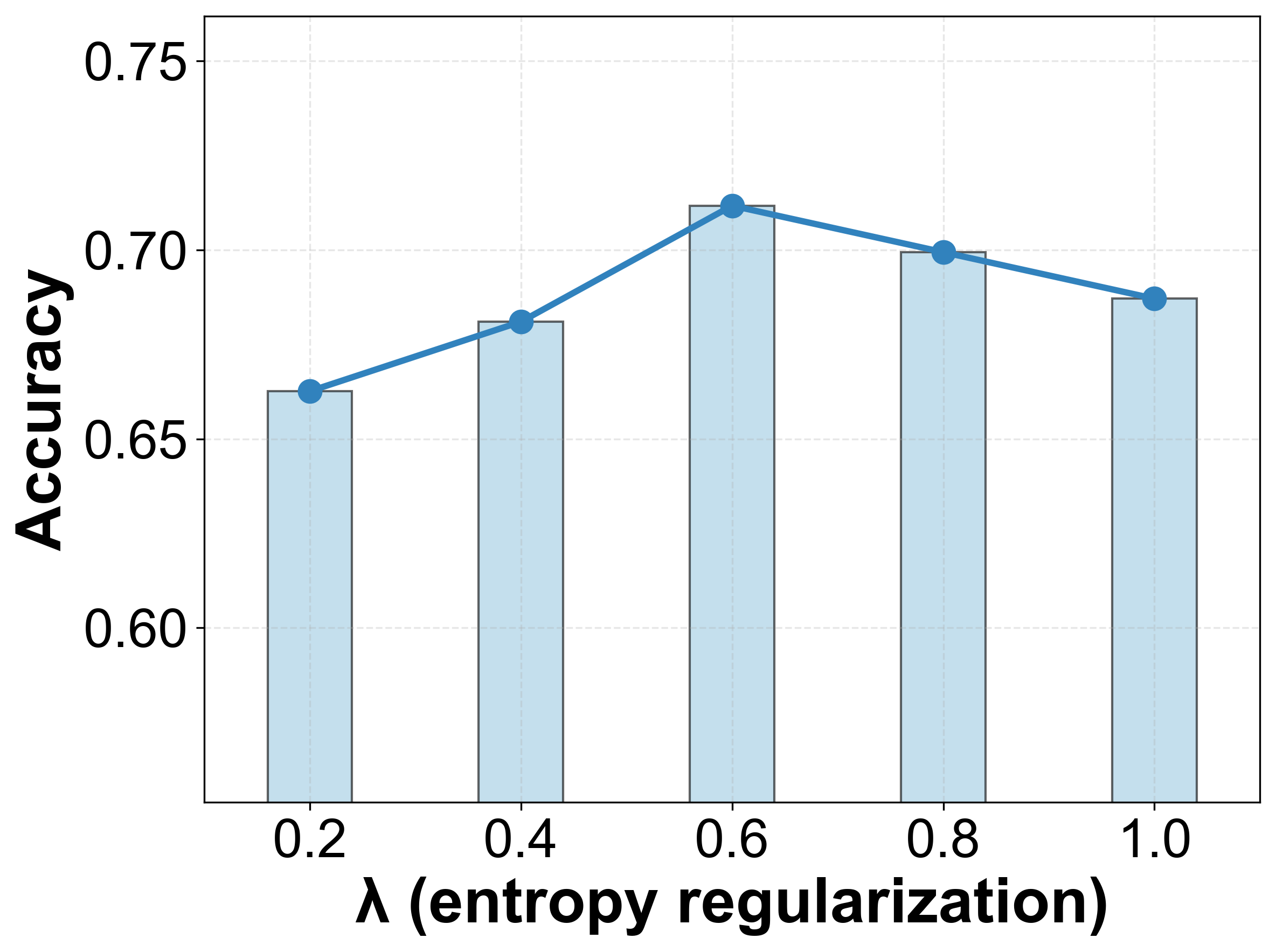} 
        \caption{I2-Instruction}
        \label{fig:g1_inst}
    \end{subfigure}
    \hfill
    \begin{subfigure}[b]{0.32\linewidth}
        \centering
        \includegraphics[width=\linewidth]{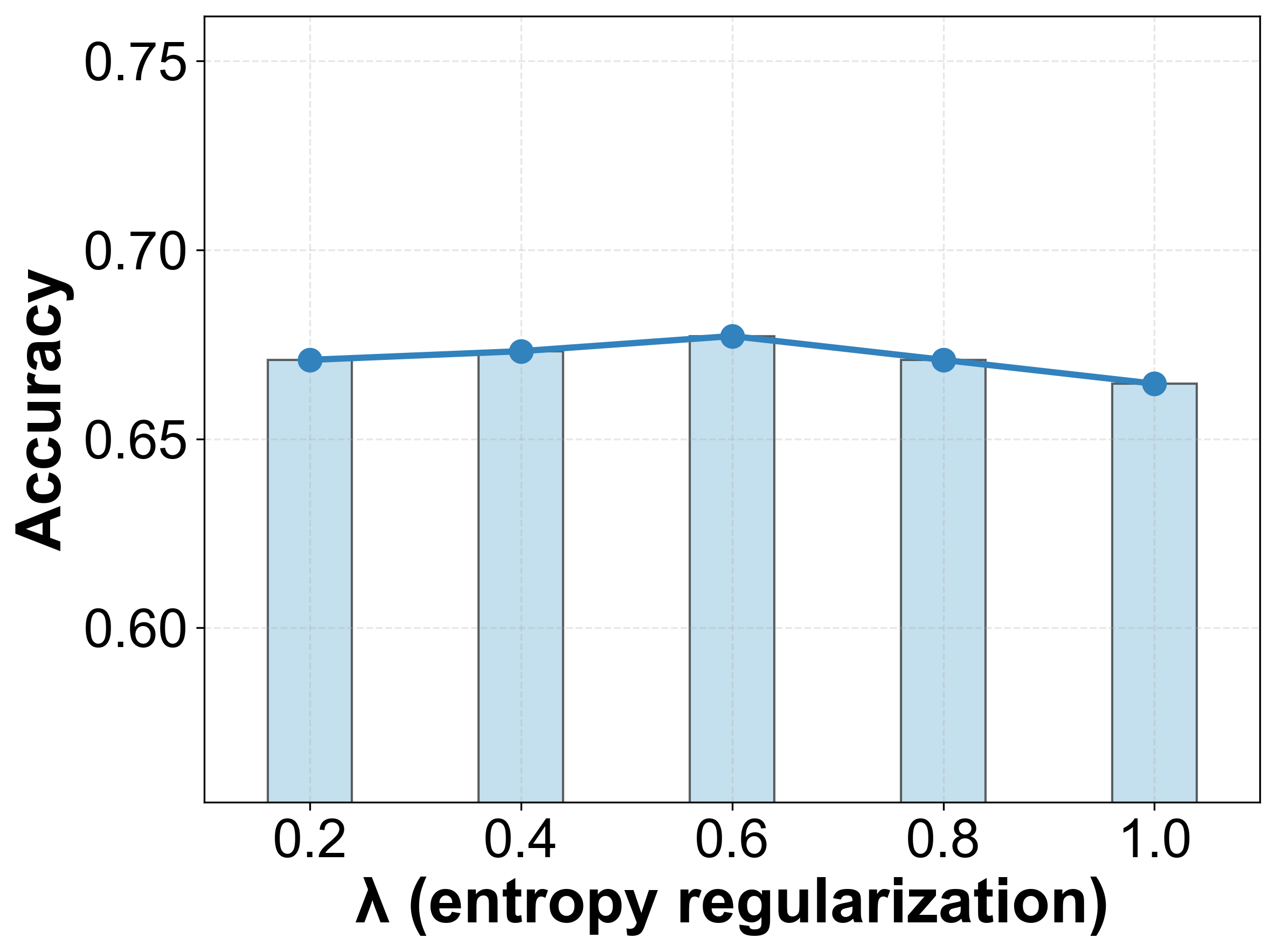}
        \caption{I2-Tool}
        \label{fig:g1_tool}
    \end{subfigure}
    
    \vspace{0.5cm} 
    
    \begin{subfigure}[b]{0.32\linewidth}
        \centering
        \includegraphics[width=\linewidth]{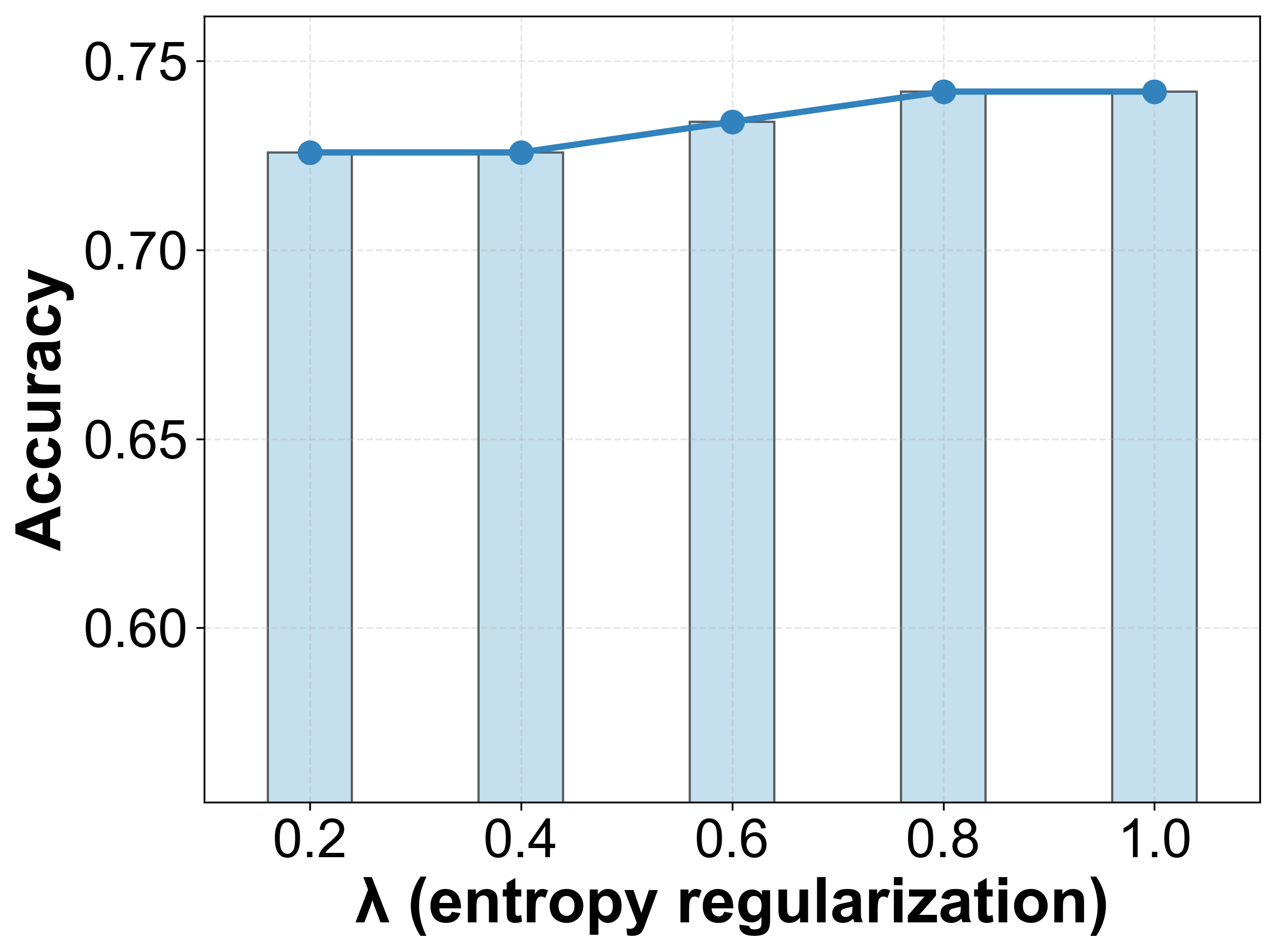} 
        \caption{I2-Category}
        \label{fig:g2_cat}
    \end{subfigure}
    \hfill
    \begin{subfigure}[b]{0.32\linewidth}
        \centering
        \includegraphics[width=\linewidth]{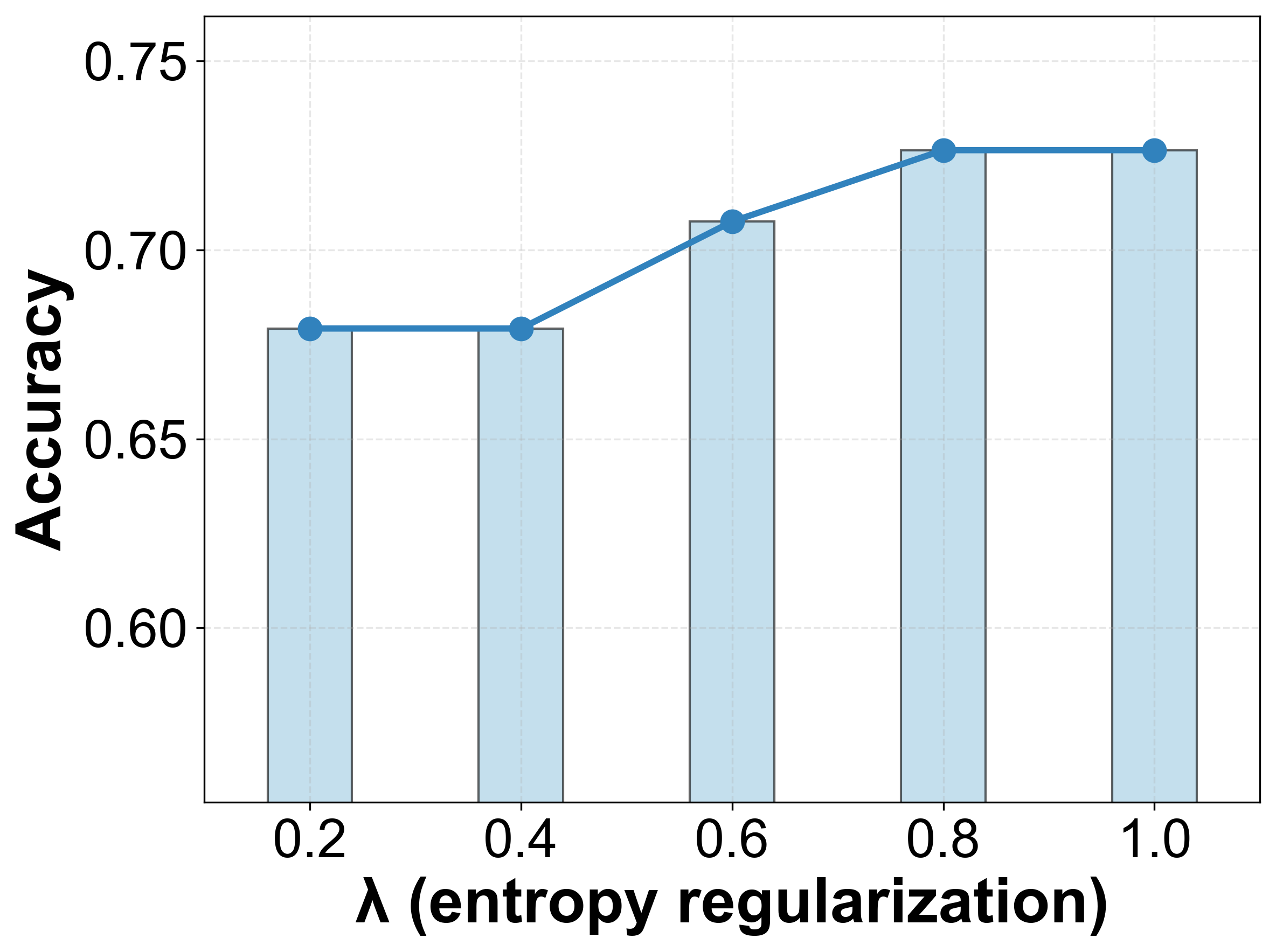}
        \caption{I2-Instruction}
        \label{fig:g2_inst}
    \end{subfigure}
    \hfill
    \begin{subfigure}[b]{0.32\linewidth}
        \centering
        \includegraphics[width=\linewidth]{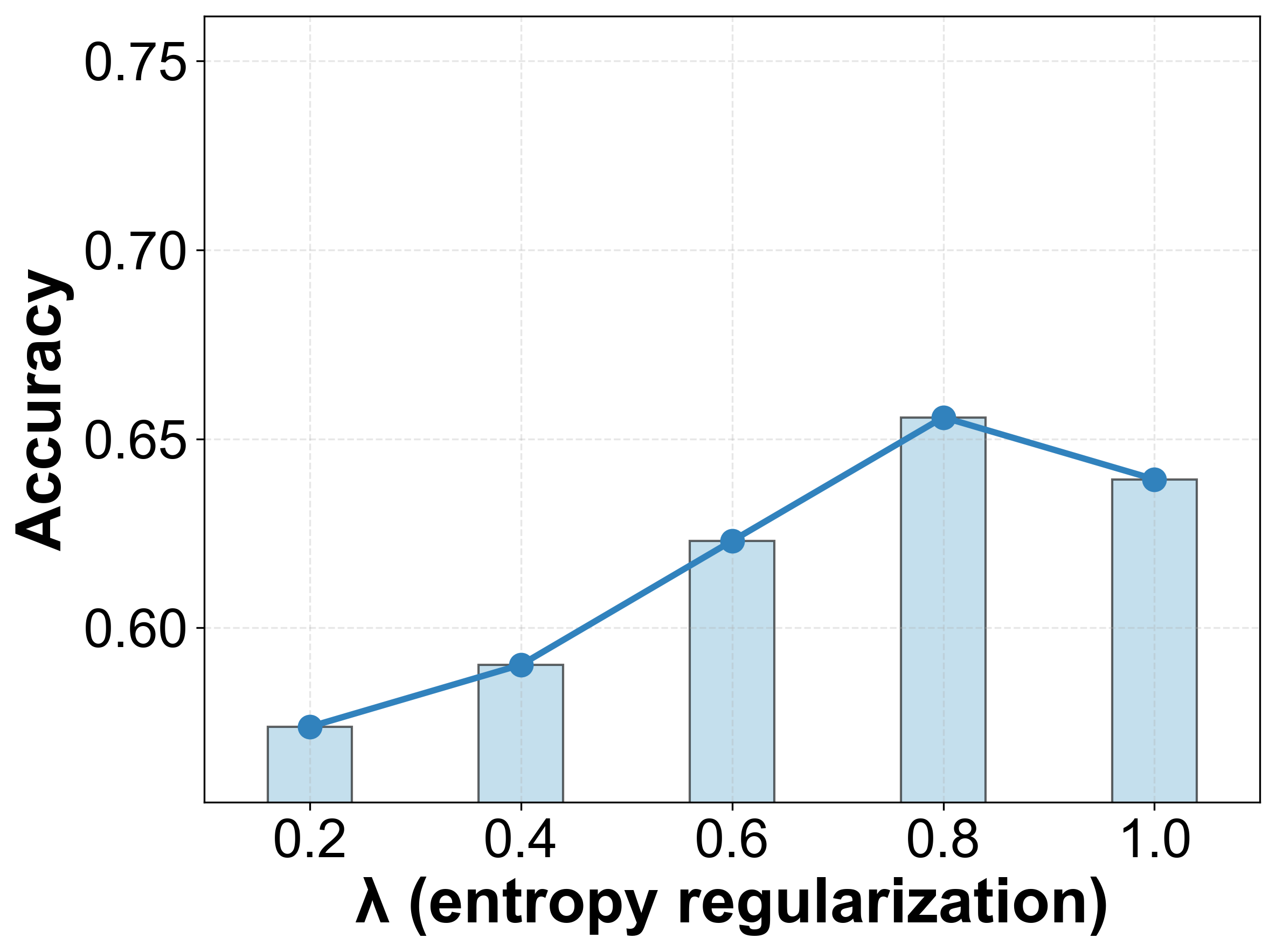}
        \caption{I3-Instruction}
        \label{fig:g3_inst}
    \end{subfigure}
    
    \caption{Hyperparameter studies on Stable ToolBench dataset.}
    \label{fig:hyperparameter_stb}
\end{figure}
\subsection{Parametric Tool Pre-training}
In the stage, we set the maximum sequence length to $8,000$ tokens, with padding aligned to multiples of 8. The learning rate is set to 1e-4 with a batch size of 1. Regarding training epochs, we use 3 epochs for all model-dataset combinations.

\subsection{Soft Tool Selection}
We train the gating network for $3$ epochs with a learning rate of 5e-4. The entropy regularization weight is adjusted based on the dataset: $0.6$ for the two categories of BFCL-V1, $0.4$ for the two categories of BFCL-V2 (Figure \ref{fig:hyperparamter_bfcl}), and $0.8$ (Figure \ref{fig:hyperparamter_bfcl}) for Stable ToolBench. Since the problems in the benchmarks specify the candidate tools, we do not need to set the maximum number $N$ for tool aggregation.

\subsection{Parametric Tool Fine-tuning}
In this stage, the learning rate is set to 1e-4 with 1 training epoch for BFCL; the learning rate is set to 1e-5 with 2 training epochs for Stable ToolBench.

\newpage
\section{More Results for Ablation Study}
\label{sec:ablation}
In this section, we present the ablation results for Qwen2.5-7B on the BFCL dataset (Figure \ref{fig:abaltion_bfcl_qwen}), as well as for Llama3.1-8B and Qwen2.5-7B on Stable ToolBench (shown in Figures \ref{fig:abaltion_stb_llama} and \ref{fig:ablation_stb_qwen}, respectively). The patterns are similar to the reported results in Section~\ref{sec:ablation_study}.
\begin{figure}[h]
    \centering
    \begin{subfigure}[b]{0.24\linewidth}
        \centering
        \includegraphics[width=\linewidth]{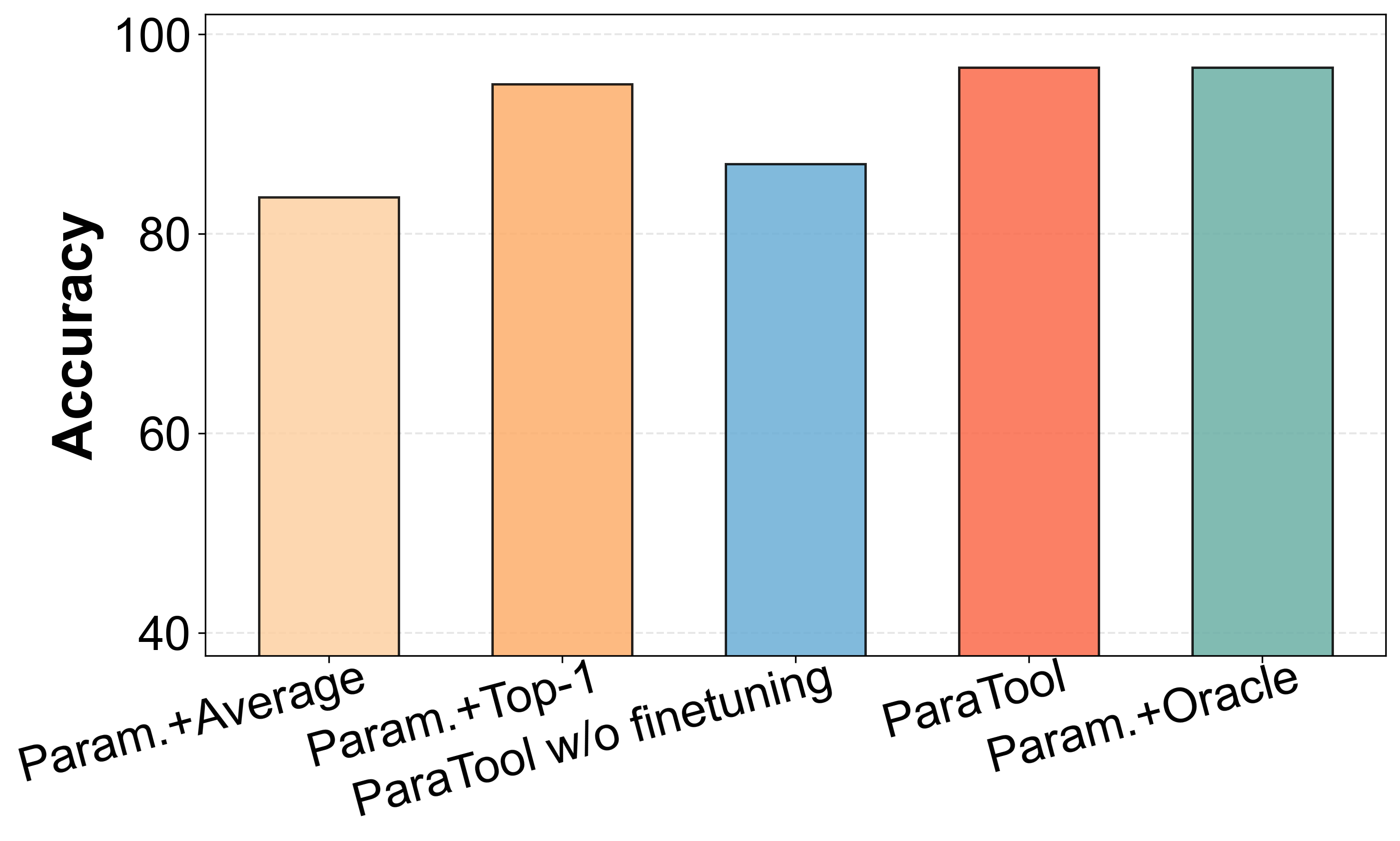}
        \caption{Multiple}
    \end{subfigure}
    \hfill
    \begin{subfigure}[b]{0.24\linewidth}
        \centering
        \includegraphics[width=\linewidth]{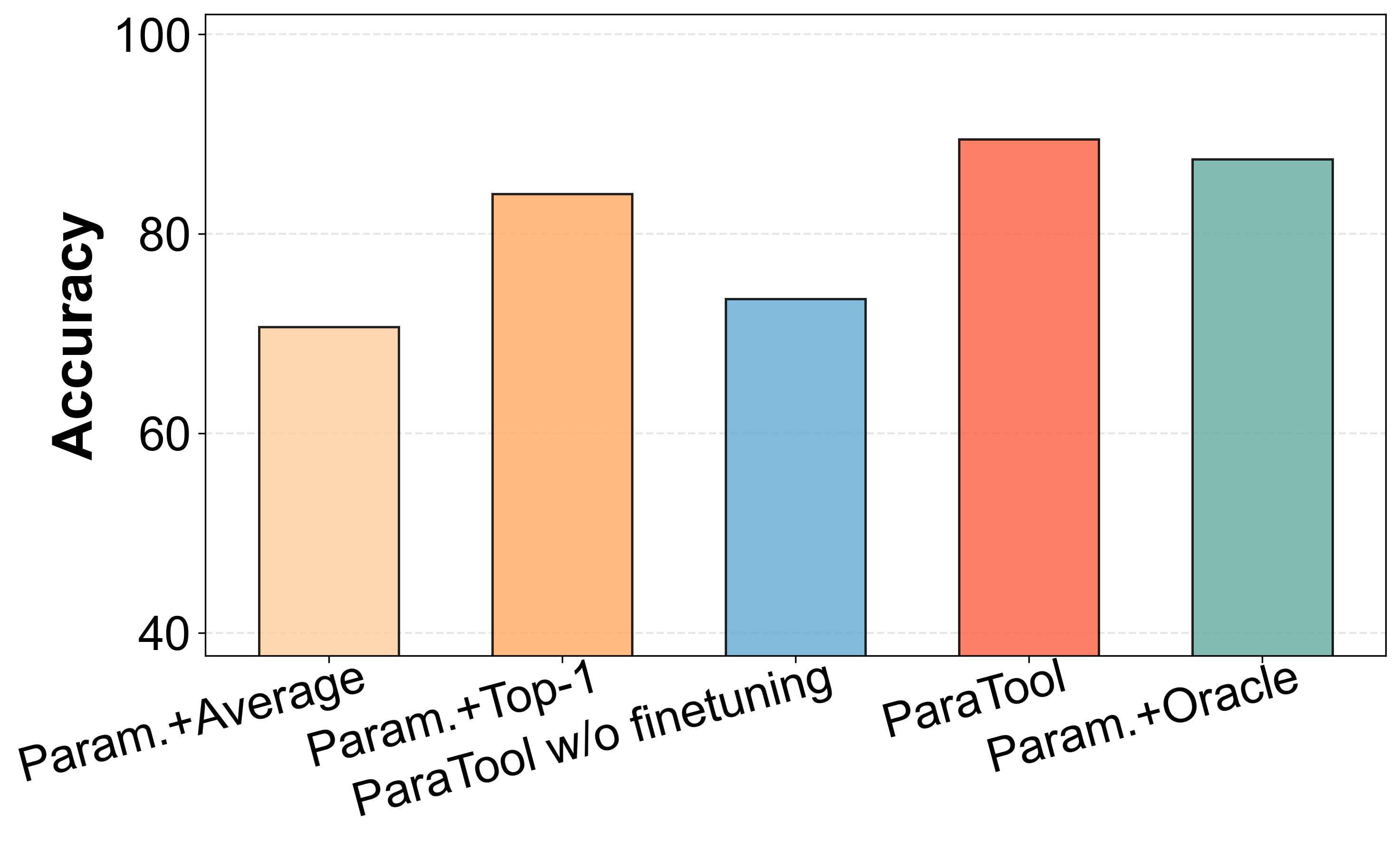}
        \caption{Parallel Multiple}
    \end{subfigure}
    \hfill
    \begin{subfigure}[b]{0.24\linewidth}
        \centering
        \includegraphics[width=\linewidth]{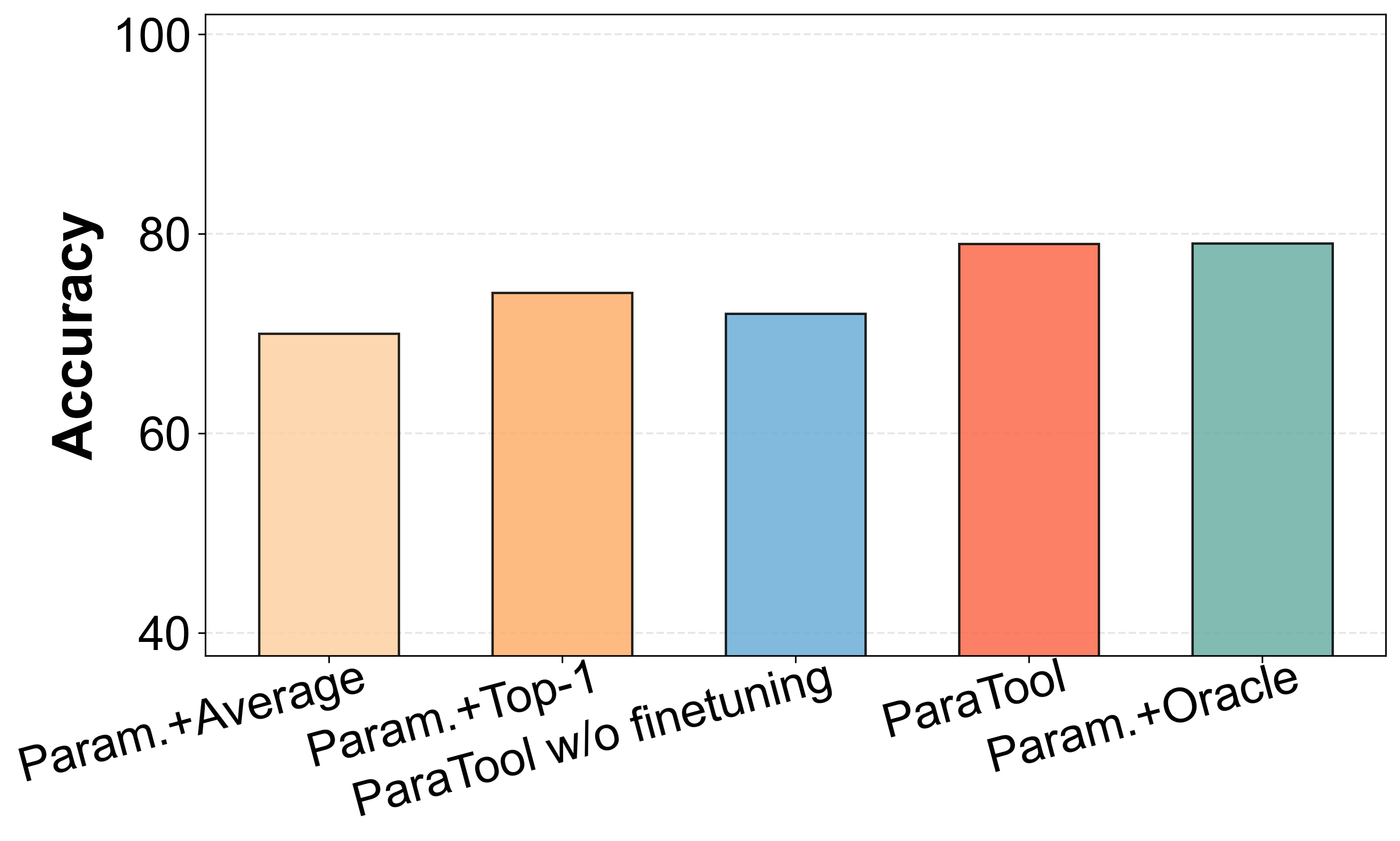}
        \caption{Live Multiple}
    \end{subfigure}
    \hfill
    \begin{subfigure}[b]{0.24\linewidth}
        \centering
        \includegraphics[width=\linewidth]{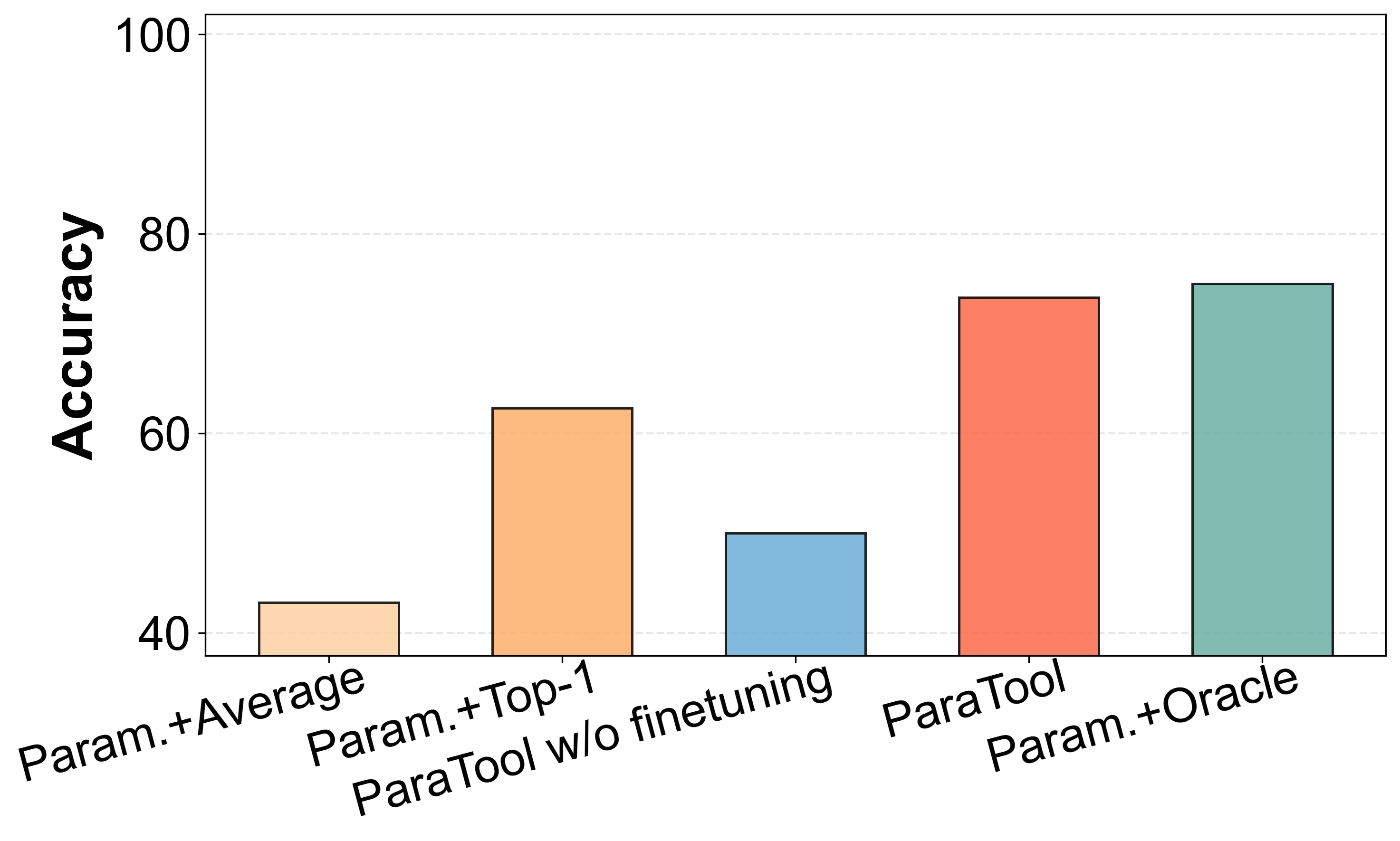}
        \caption{Live Parallel Multiple}
    \end{subfigure}
    \caption{Ablation studies on BFCL dataset with Qwen2.5-7B as
the LLM backbone. The Parallel categories are omitted since tool
selection is not required.}
    \label{fig:abaltion_bfcl_qwen}
\end{figure}
\begin{figure}[!htbp]
    \centering
    \begin{subfigure}[b]{0.26\linewidth}
        \includegraphics[width=\linewidth]{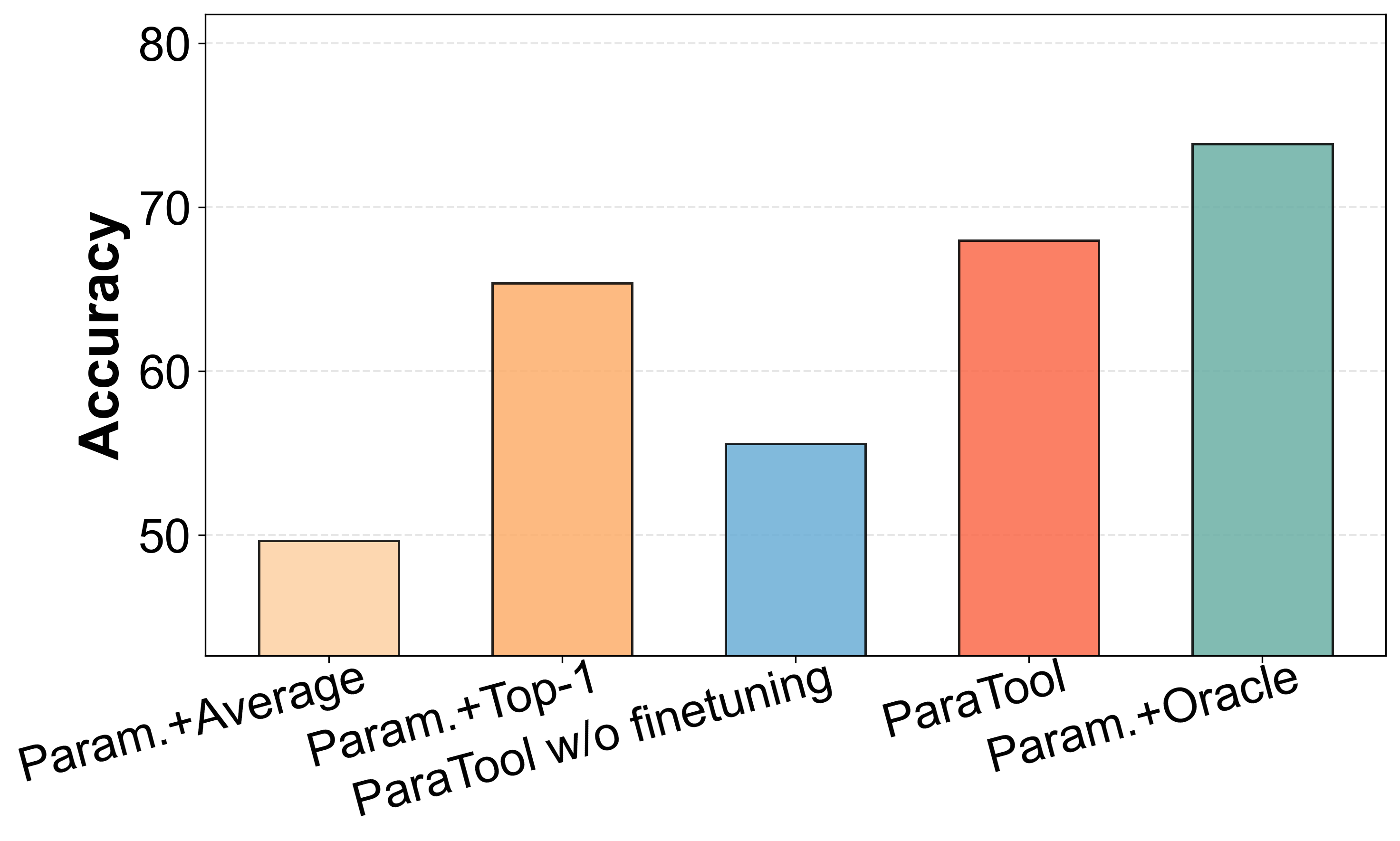}
        \caption{I1-Category}
    \end{subfigure}
    \hfill
    \begin{subfigure}[b]{0.26\linewidth}
        \includegraphics[width=\linewidth]{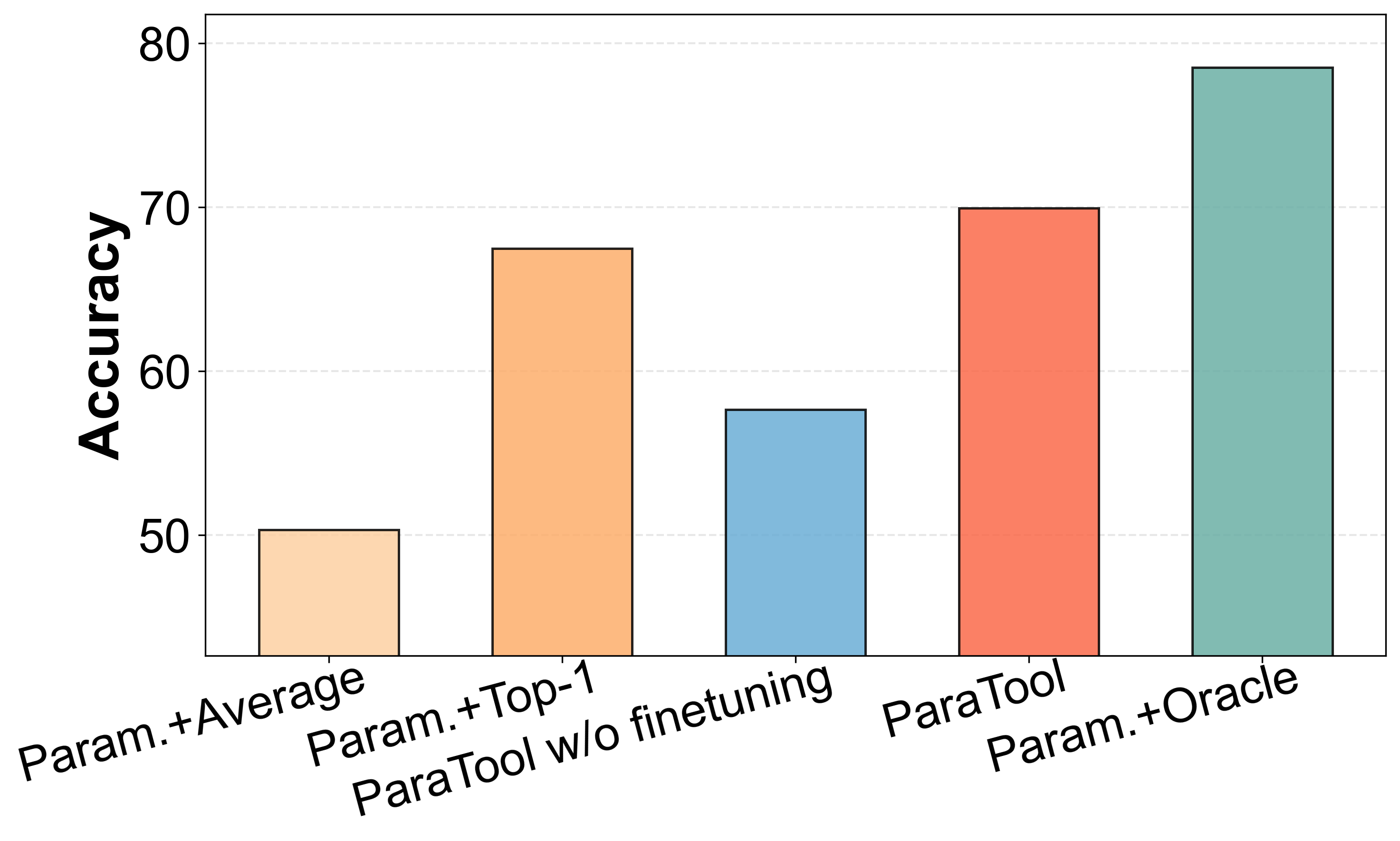}
        \caption{I1-Instruction}
    \end{subfigure}
    \hfill
    \begin{subfigure}[b]{0.26\linewidth}
        \includegraphics[width=\linewidth]{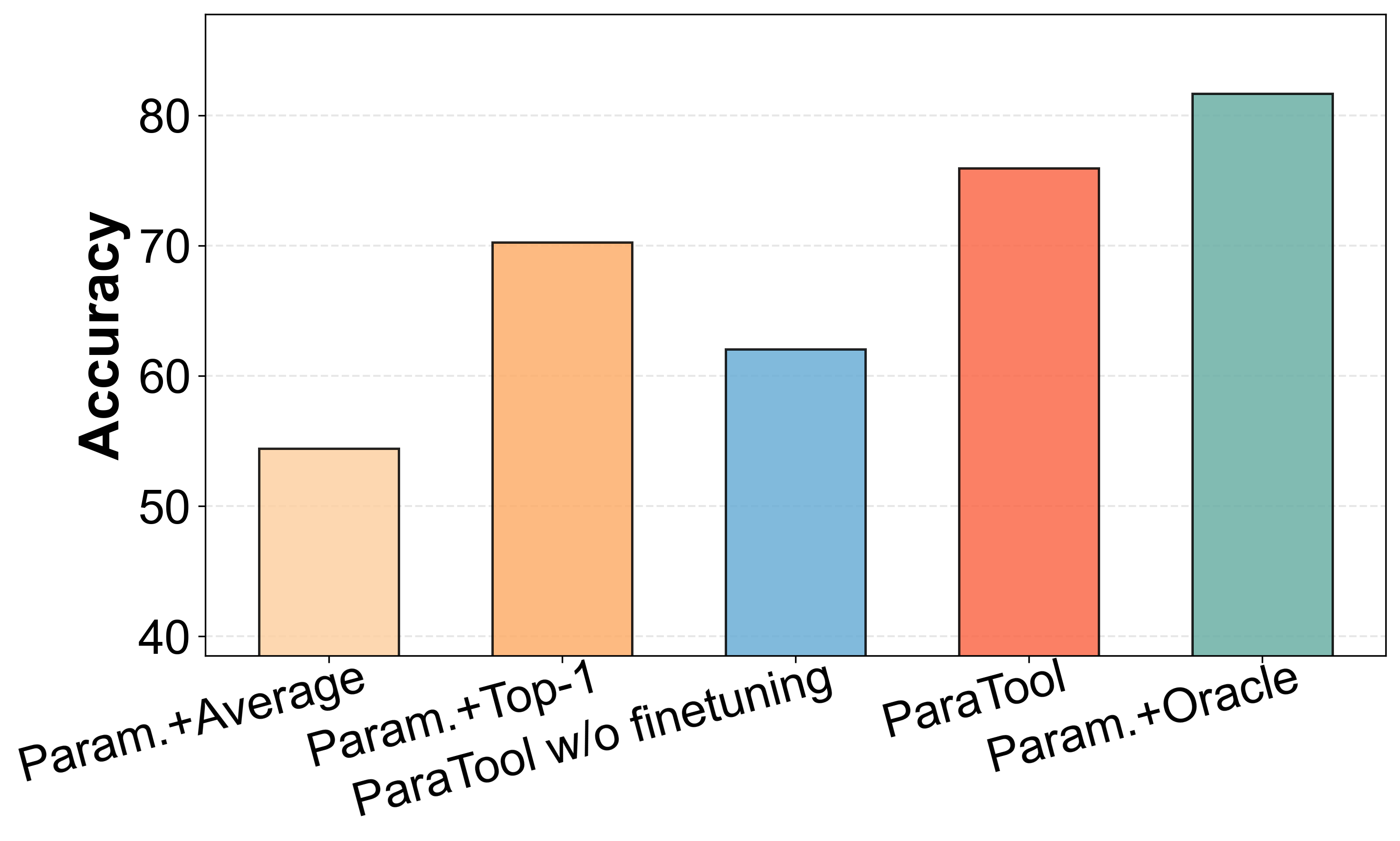}
        \caption{I1-Tool}
    \end{subfigure}
    
    \vspace{0.5em} 
    
    \begin{subfigure}[b]{0.26\linewidth}
        \includegraphics[width=\linewidth]{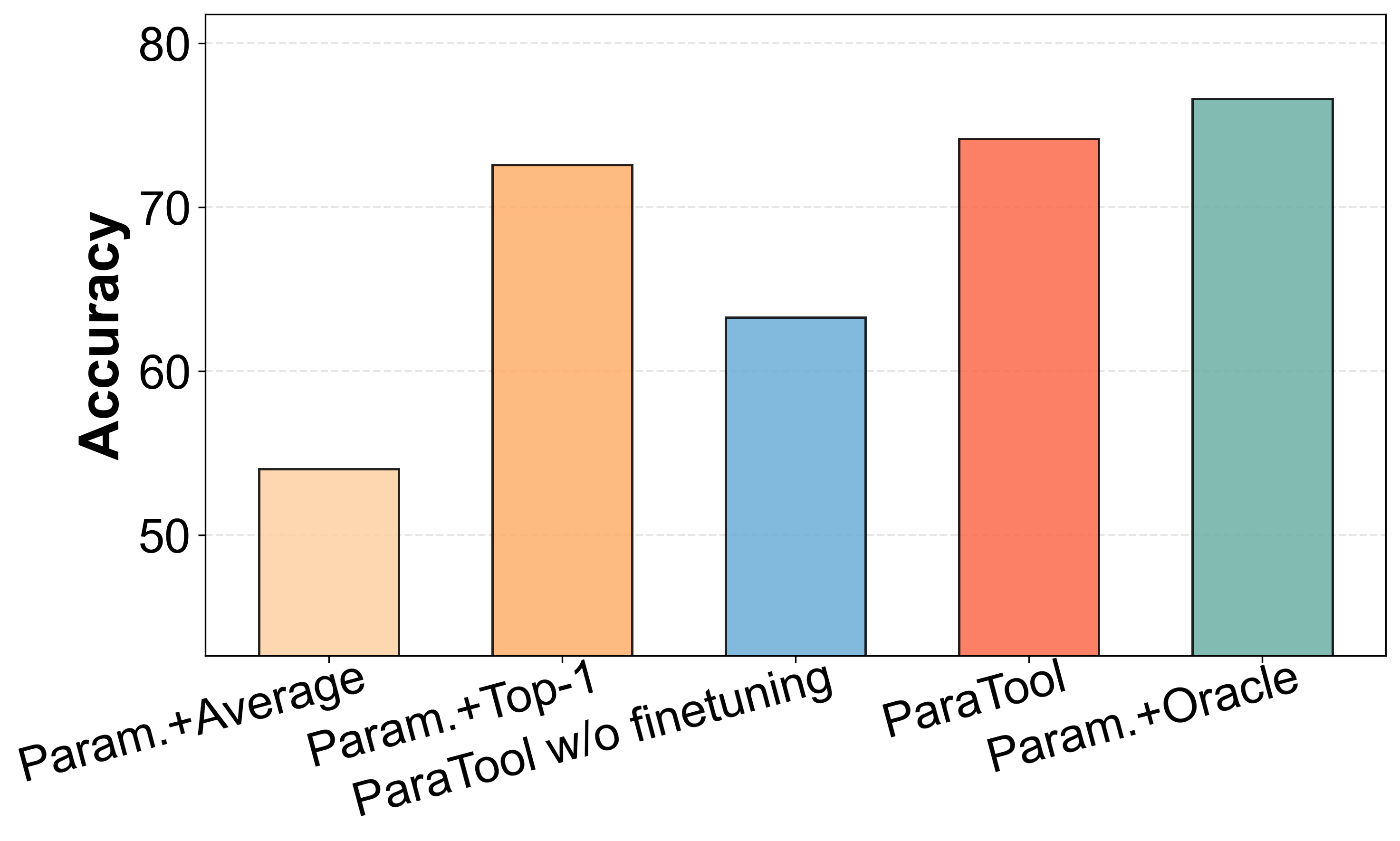}
        \caption{I2-Category}
    \end{subfigure}
    \hfill
    \begin{subfigure}[b]{0.26\linewidth}
        \includegraphics[width=\linewidth]{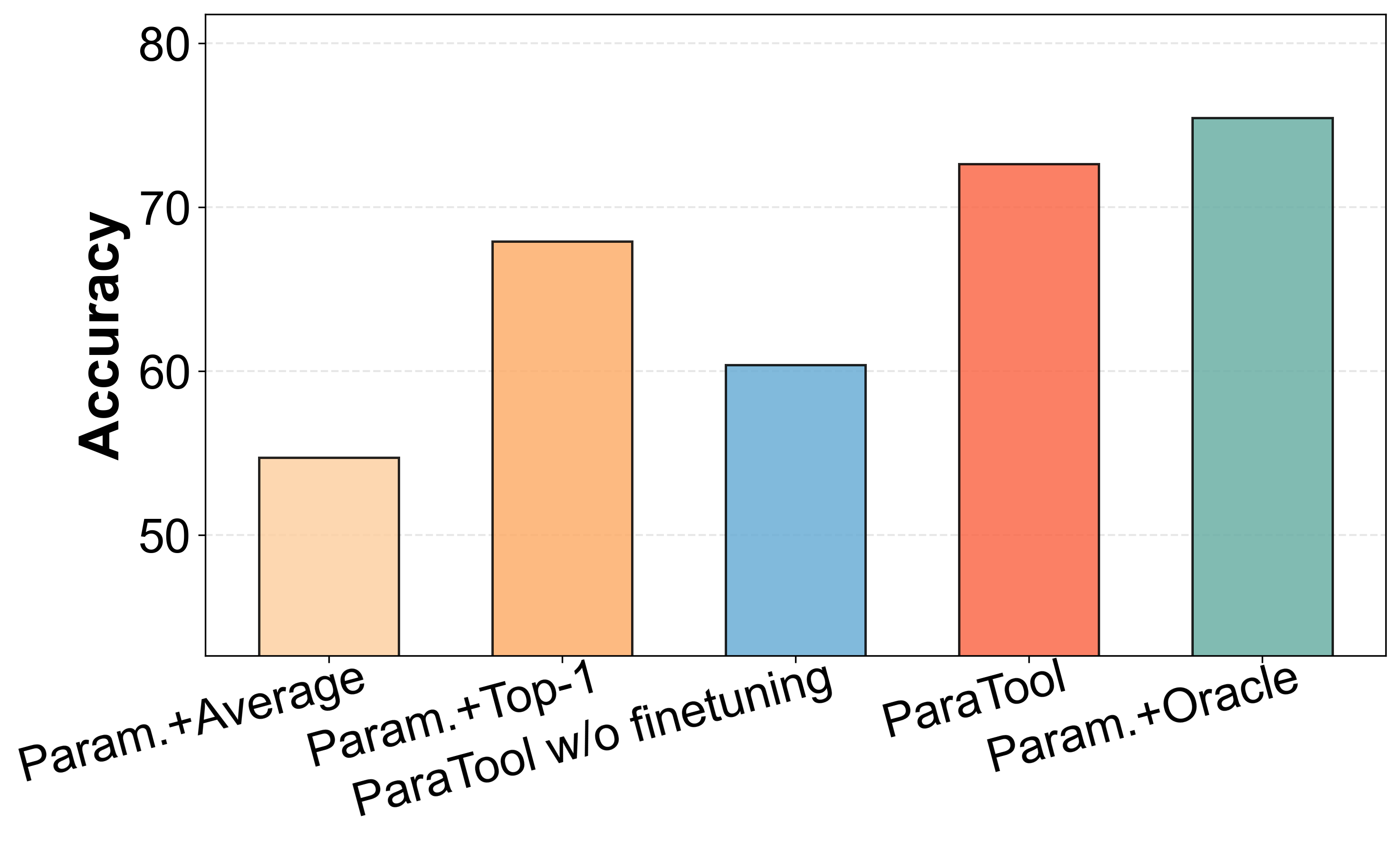}
        \caption{I2-Instruction}
    \end{subfigure}
    \hfill
    \begin{subfigure}[b]{0.26\linewidth}
        \includegraphics[width=\linewidth]{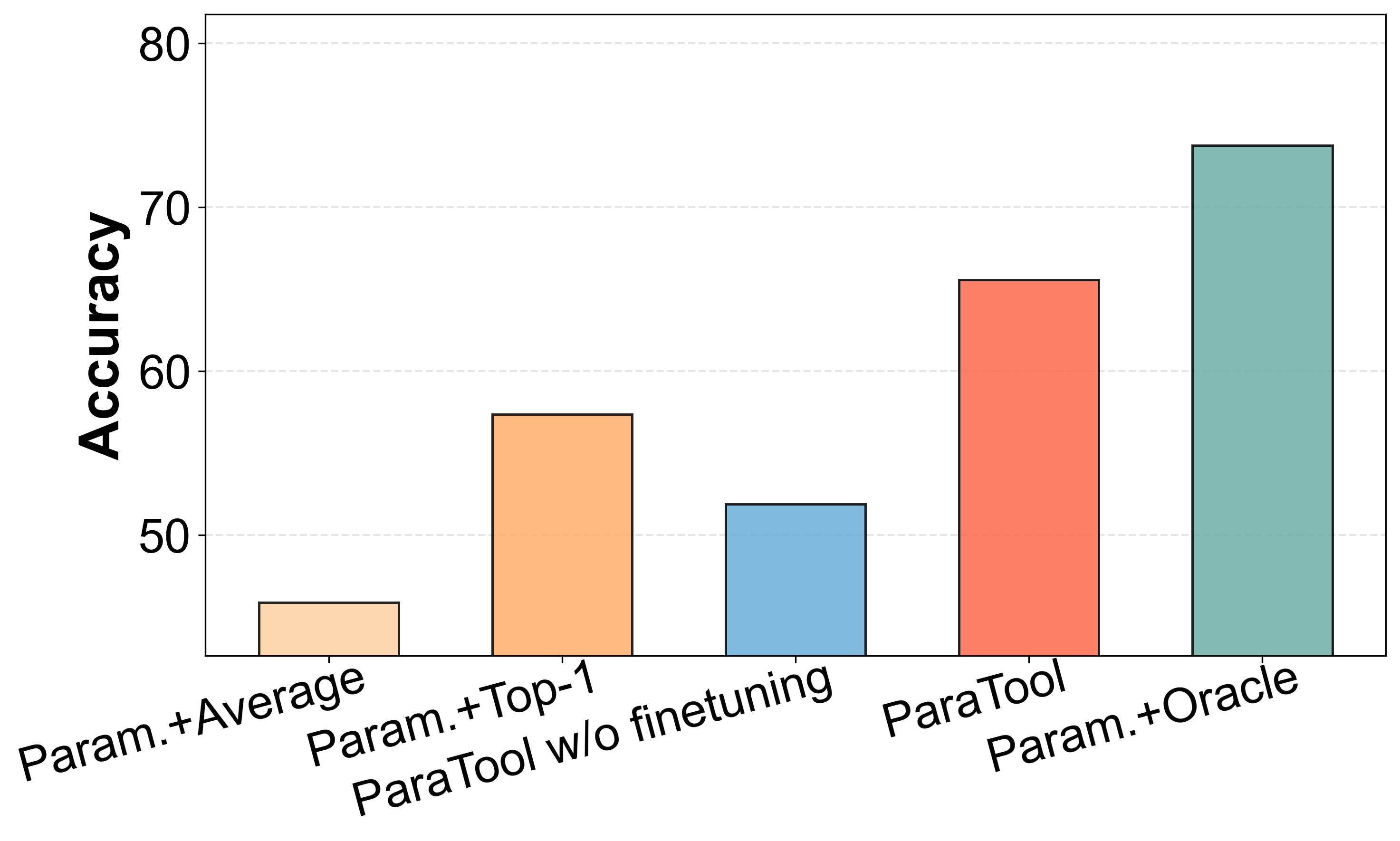}
        \caption{I3-Instruction}
    \end{subfigure}
    
    \caption{Ablation studies on Stable ToolBench dataset with Llama3.1-8b as the LLM backbone.}
    \label{fig:abaltion_stb_llama}
    
    \vspace{1.5em} 
    
    \begin{subfigure}[b]{0.26\linewidth}
        \includegraphics[width=\linewidth]{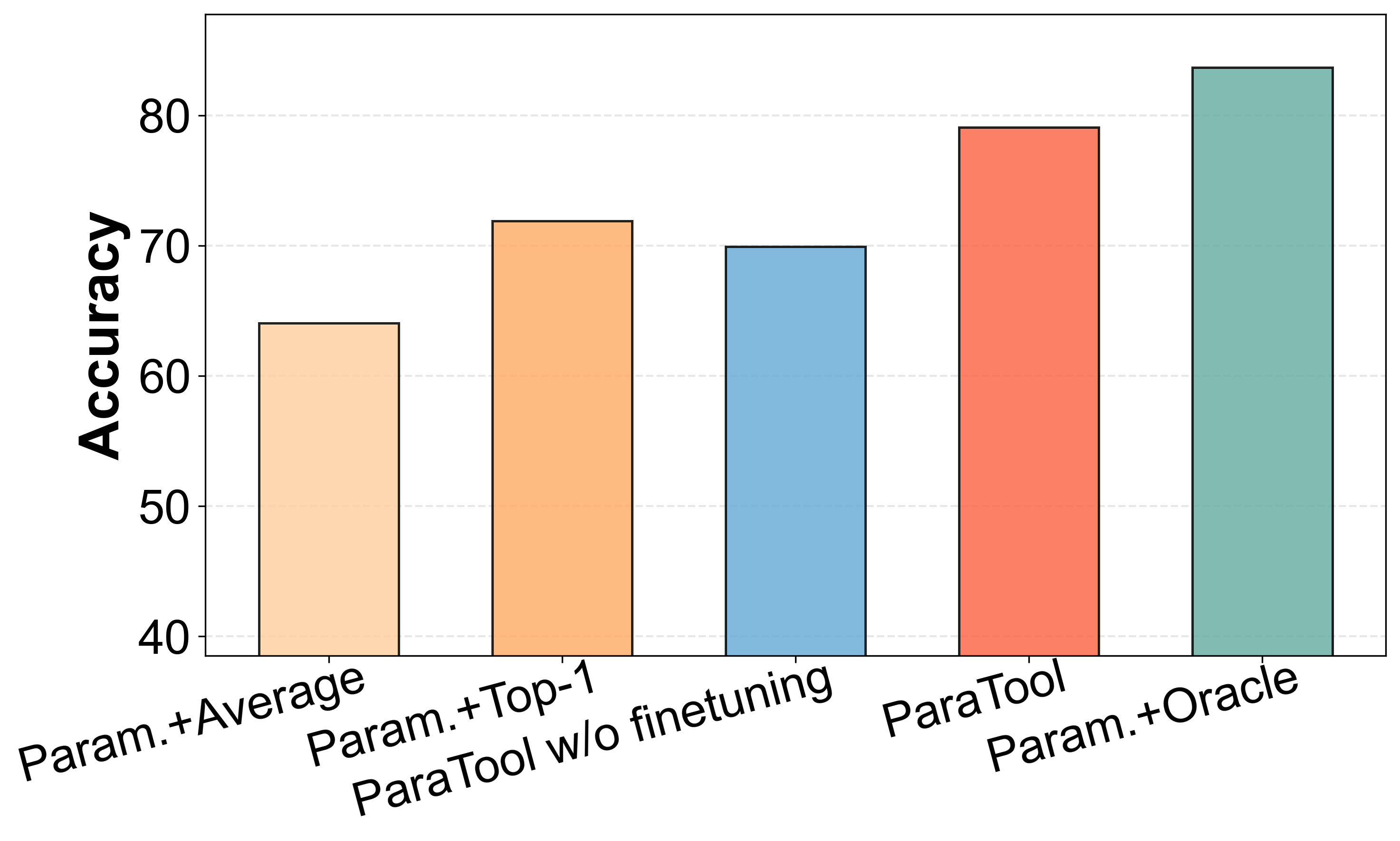}
        \caption{I1-Category} 
    \end{subfigure}
    \hfill
    \begin{subfigure}[b]{0.26\linewidth}
        \includegraphics[width=\linewidth]{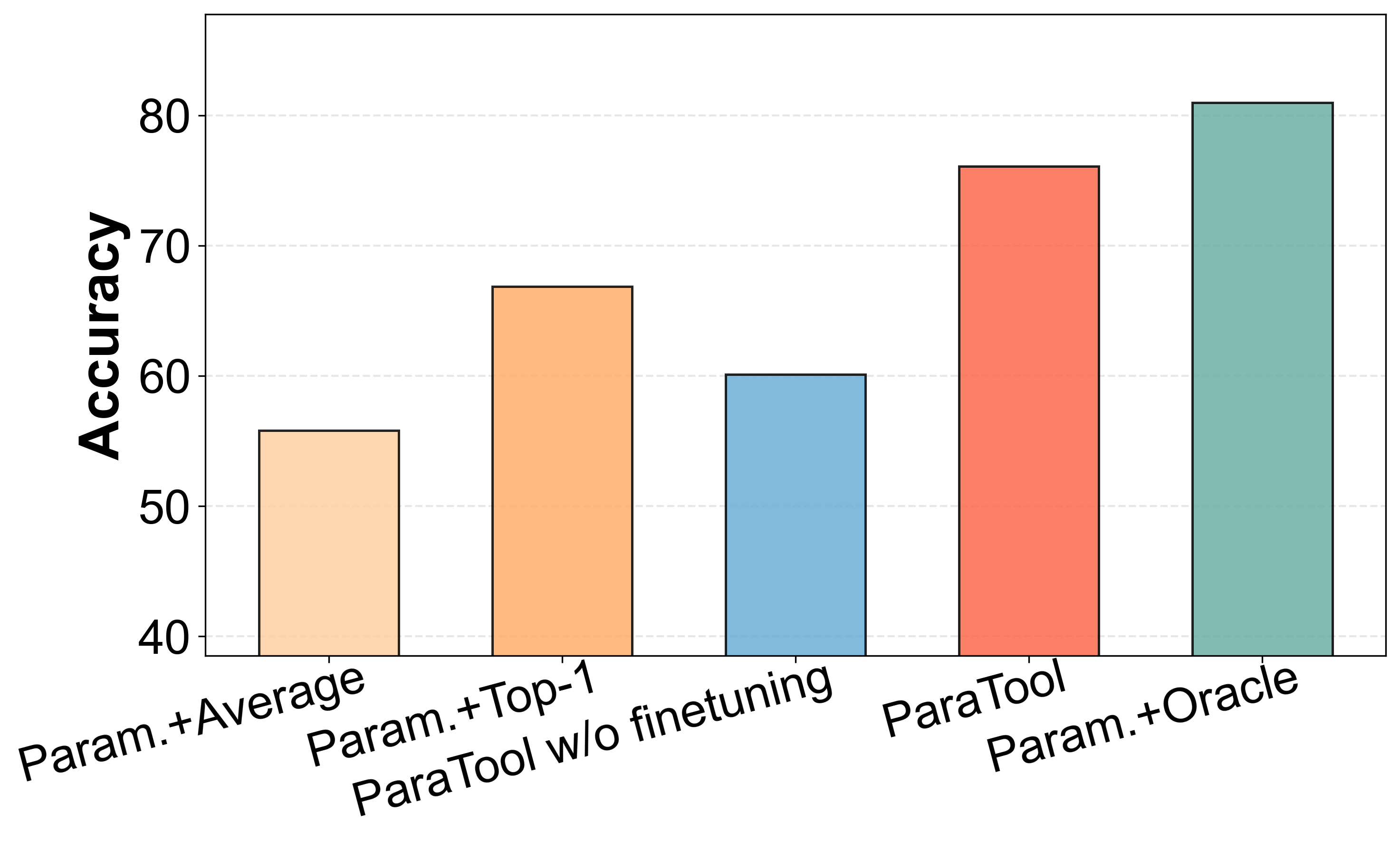}
        \caption{I1-Instruction}
    \end{subfigure}
    \hfill
    \begin{subfigure}[b]{0.26\linewidth}
        \includegraphics[width=\linewidth]{Figures/Ablation/STB_Qwen2.5-7B_G1_tool.png}
        \caption{I1-Tool}
    \end{subfigure}
    
    \vspace{0.5em}
    
    \begin{subfigure}[b]{0.26\linewidth}
        \includegraphics[width=\linewidth]{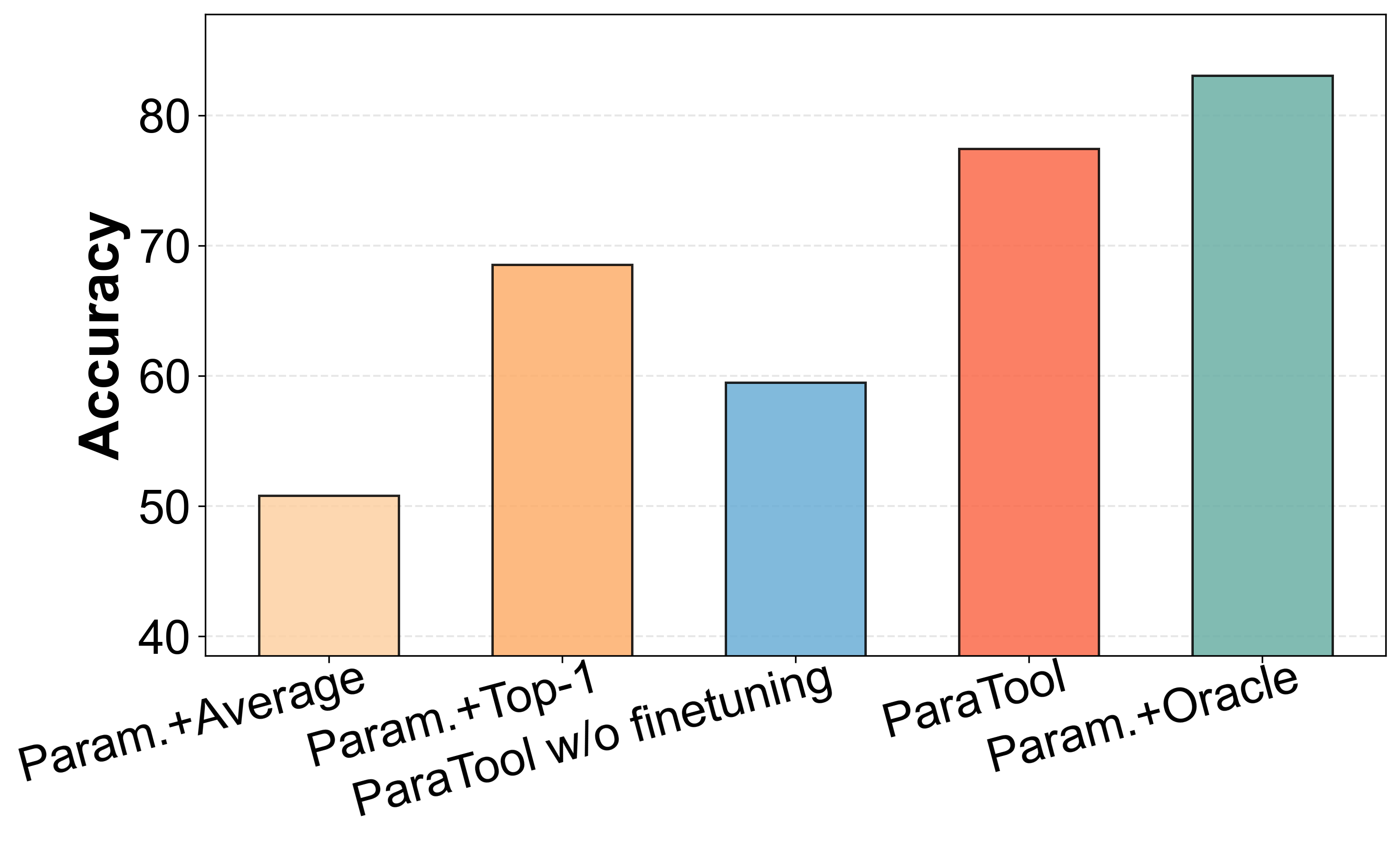}
        \caption{I2-Category}
    \end{subfigure}
    \hfill
    \begin{subfigure}[b]{0.26\linewidth}
        \includegraphics[width=\linewidth]{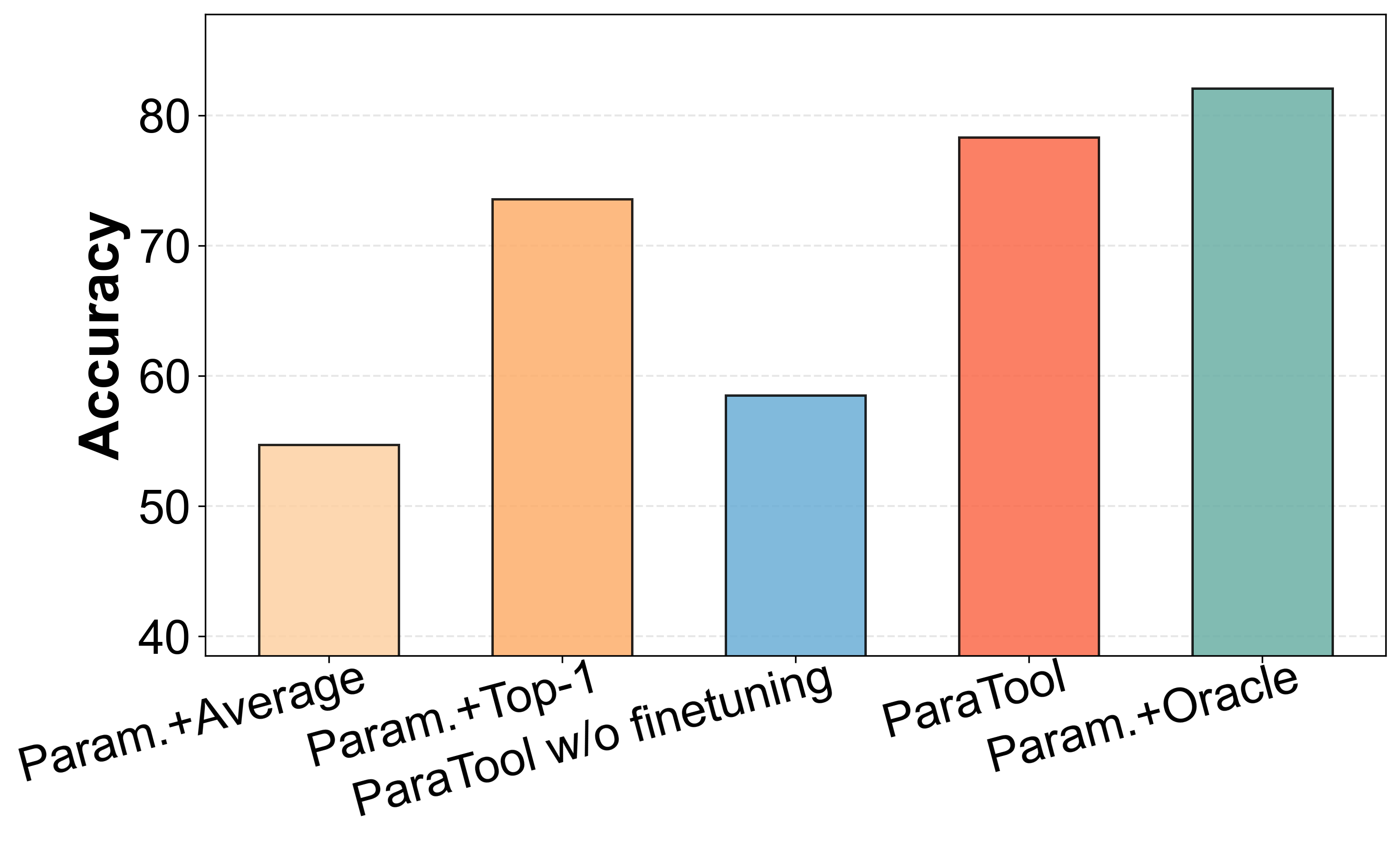}
        \caption{I2-Instruction}
    \end{subfigure}
    \hfill
    \begin{subfigure}[b]{0.26\linewidth}
        \includegraphics[width=\linewidth]{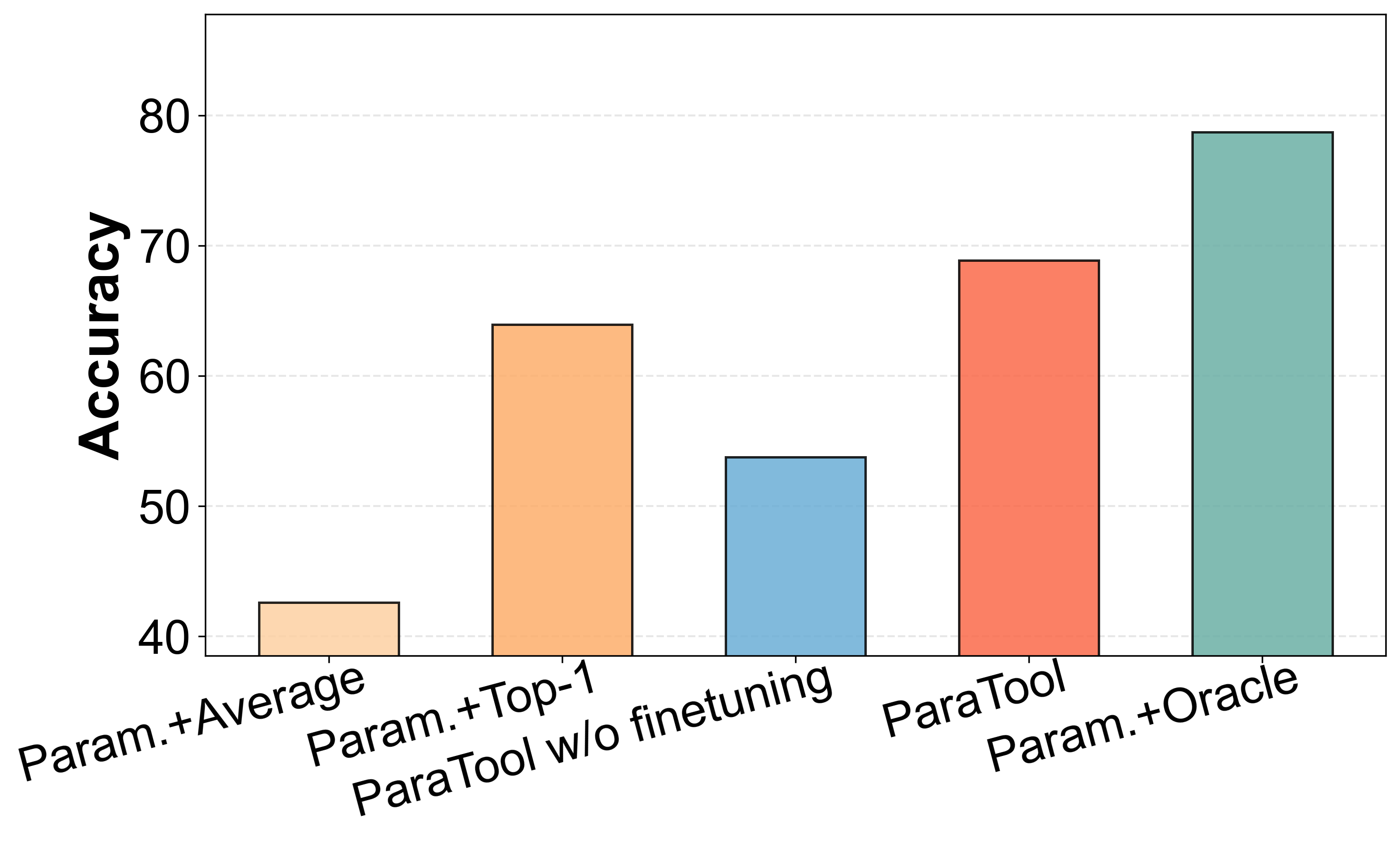}
        \caption{I3-Instruction}
    \end{subfigure}
    
    \caption{Ablation studies on Stable ToolBench dataset with Qwen2.5-7b as the LLM backbone.}
    \label{fig:ablation_stb_qwen}
\end{figure}

\newpage
\section{Details of Computational Complexity}
\label{sec:appendix_complexity}
In this section, we provide a detailed verification of the computational complexity reported in the main text. We calculate the floating-point operations (FLOPs) required during inference across different categories of Stable ToolBench and BFCL, as reported in Tables \ref{tab:FLOPs_STB} and \ref{tab:FLOPs_BFCL}.
\begin{table}[h]
    \centering
    \setlength{\tabcolsep}{3pt} 
    \small
    \caption{FLOPs estimation for context-based (\qa) and parameter-based (\paratool) methods on BFCL. All values are measured in TFLOPs ($10^{12}$).}
    \begin{tabular}{lcccc}
        \toprule
        \multirow{2}{*}{\textbf{Dataset}} & \multicolumn{2}{c}{\textbf{Llama3.1-8B}} & \multicolumn{2}{c}{\textbf{Qwen2.5-7B}} \\
        \cmidrule(lr){2-3} \cmidrule(lr){4-5}
         & \textbf{Context} & \textbf{Parameter} & \textbf{Context} & \textbf{Parameter} \\
        \midrule
        
        \textbf{Parallel} & 34.57 & $4.84_{+0.10}$ & 22.40 & $4.03_{+0.10}$ \\
        \textbf{Multiple} & 271.09 & $4.90_{+0.16}$ & 81.46 & $3.93_{+0.15}$ \\
        \textbf{Parallel Multiple} & 136.58 & $8.33_{+0.26}$ & 117.98 & $7.12_{+0.26}$ \\   
        \textbf{Live Parallel} & 136.95 & $4.10_{+0.08}$ & 47.44 & $3.30_{+0.08}$ \\
        \textbf{Live Multiple} & 511.12 & $5.04_{+0.20}$ & 324.77 & $4.53_{+0.21}$ \\ 
        \textbf{Live Parallel Multiple} & 565.73 & $7.53_{+0.30}$ & 300.51 & $6.39_{+0.30}$ \\
        \bottomrule
    \end{tabular}
    \label{tab:FLOPs_BFCL}
\end{table}

\textbf{Breakdown of Computational Components.} The inference FLOPs are primarily derived from three distinct components:\begin{enumerate}\item \textbf{Base Linear Operations:} This includes computations from Feed-Forward Networks (MLPs) and various linear projections within the Transformer blocks.\item \textbf{Attention Matrix Multiplications:} This involves the core self-attention mechanisms, specifically the computation of query-key scores and the weighted aggregation of values.\item \textbf{Method-Specific Overhead:} For our proposed framework, this comprises the additional computations for the LoRA branches used to load parameterized tools and the cost of context encoding.\end{enumerate}

\textbf{Comparative Analysis.} A detailed breakdown of the computational behavior reveals why \paratool achieves superior efficiency compared to context-based baselines. \paratool internalizes tool information directly into model parameters, allowing for a significant truncation of the input sequence such that $S_{par} \ll S_{ctx}$. This mechanism results in an order-of-magnitude reduction in computational overhead. Specifically, our method reduces computational costs by $10\times$ to $20\times$ across most datasets (e.g., a reduction of $94.06\%$ in Qwen \textit{I3-Inst}), ensuring scalability. We also address the potential concern regarding the computational cost introduced by the dynamic aggregation of LoRA experts. As indicated by the subscript values in Table \ref{tab:FLOPs_STB} (e.g., $+2.52$), this overhead derived from the gating network and LoRA computations is minimal. Supporting our theoretical complexity analysis of $\mathcal{O}(S_{par} NLhr)$, this cost typically accounts for less than $5\%$ of the total inference FLOPs. Compared to the massive savings obtained from context reduction, this additional burden is negligible, confirming that our framework effectively empowers the model to utilize tools within the parameter space without imposing a heavy computational penalty.




\newpage
\section{Tool Selection Accuracy Analysis}
Here we investigate whether the tool selection performance of the gating network acts as a hard constraint on selecting the correct tool in action generation. Specifically, we define Gating Accuracy as the probability that the ground truth tool is assigned the highest weight by the gating network, and Action Accuracy as the probability that the model successfully invokes the ground truth tool in the action generation.

As shown in Figure~\ref{fig:bottleneck}, a compelling pattern emerges: the Action Accuracy consistently surpasses the Gating Accuracy, particularly in complex parallel tasks. For instance, in the \texttt{Live Parallel Multiple} category with Llama3.1-8B, while the Gating Accuracy is only $65.28\%$, the final Action Accuracy surges to $83.33\%$, marking a relative increase of $27.65\%$. This result empirically proves that our framework is not a rigid pipeline where a gating error leads to inevitable failure. Instead, even when the gating network fails to rank the target tool as Top-1 (assigning it a lower but non-zero weight), an LLM can identify the correct tool semantics from the loaded parameter mixture and rectify the selection. This observation shows that the LLM acts as a robust ``backstop'' for the gating network, effectively breaking the bottleneck often seen in hard-retrieval systems.
\begin{figure}[h]
    \centering
    \begin{subfigure}[b]{0.49\linewidth}
        \centering
        \includegraphics[width=\linewidth]{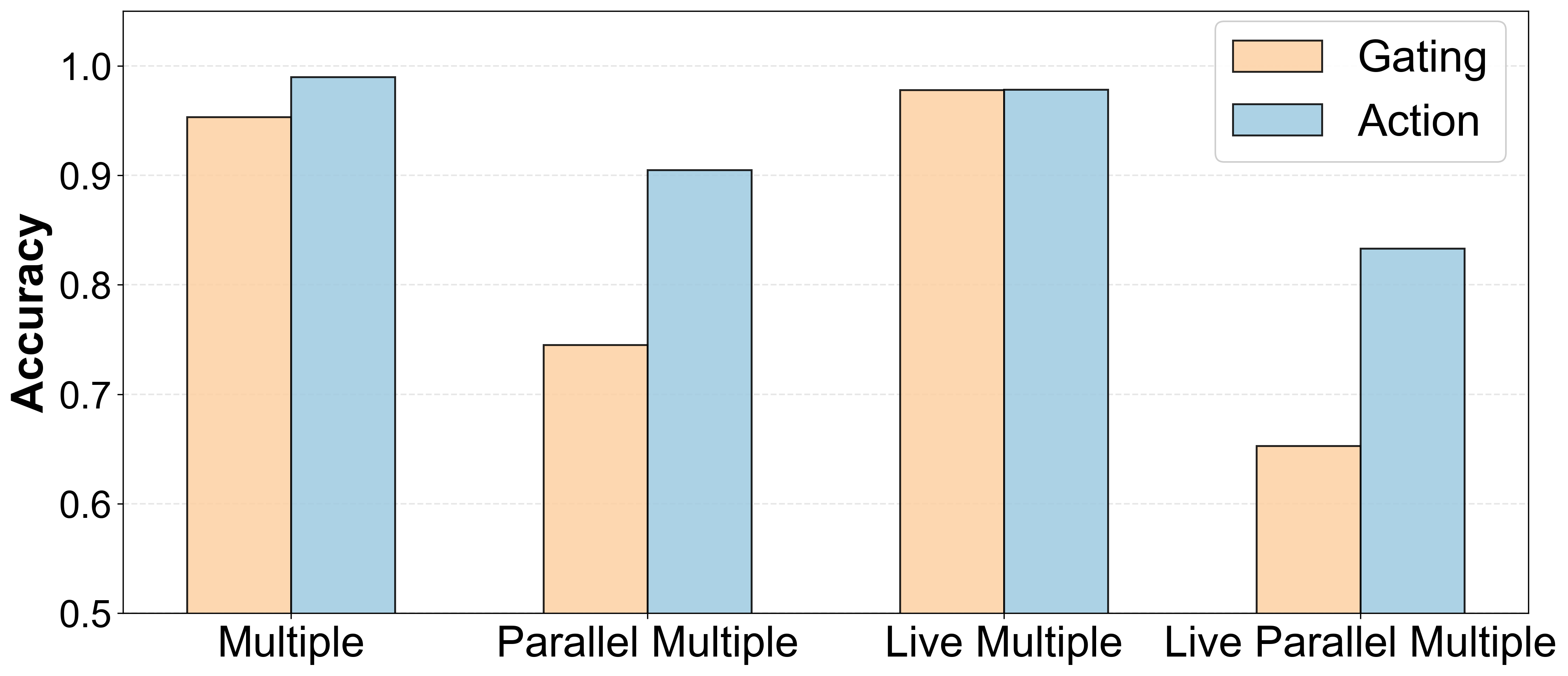} 
    \end{subfigure}%
    \begin{subfigure}[b]{0.49\linewidth}
        \centering
        \includegraphics[width=\linewidth]{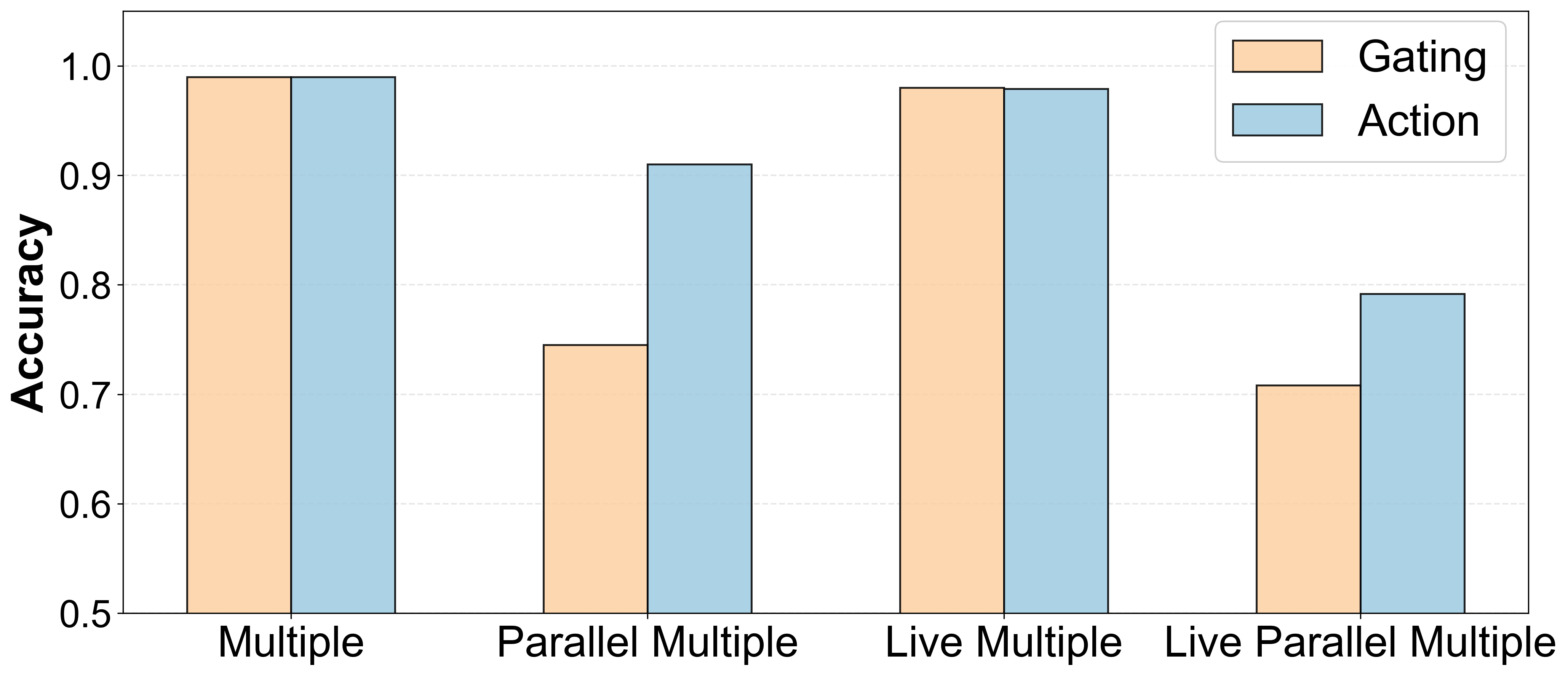} 
    \end{subfigure}
    
    \caption{Comparison of accuracy between Gating and Action. The accuracy is defined as the ratio of correct tool selections by the Gating network and \paratool, respectively.}
    \label{fig:bottleneck}
\end{figure}
\newpage
\section{Case Study}
We select a complex query from the \texttt{I2-Instruction} category of Stable ToolBench to visualize the decision-making process. The user request involves six distinct sub-tasks: retrieving lists for airports, airlines, and airplanes, followed by querying specific details for ``DFW'' (Airport), ``LH'' (Airline), and ``777'' (Airplane). The challenge lies in the high semantic similarity among the available tools (e.g., \texttt{get\_airport\_details} vs. \texttt{get\_airline\_details}), which requires the model to precisely distinguish between tools based on the entity type (Airport vs. Airline) in a multi-turn context.

\paratool demonstrates exceptional robustness, specifically functioning as a correction mechanism against gating misrankings.
As illustrated in Figure \ref{fig:case_study_ours}, when the system needs to fetch details for ``DFW'' (Airport) in Step 4, the gating network is confused by the semantic overlap, assigning the highest weight to the incorrect tool, \texttt{get\_airline\_details} ($\alpha \approx 0.278$), while the correct tool, \texttt{get\_airport\_details}, receives a slightly lower weight ($\alpha \approx 0.263$). Under a strict Top-1 selection strategy, this will inevitably lead to failure. However, our method successfully invokes the correct airport tool. A similar conflict occurs in Step 5 for the ``LH''(Airline) query: the gating network again assigns a higher weight to an irrelevant tool (\texttt{get\_airport\_list}, $\alpha \approx 0.293$) compared to the correct target (\texttt{get\_airline\_details}, $\alpha \approx 0.197$). Despite the correct tool not being the rank-1 choice, the model executes the correct action.
This empirically proves that our framework does not blindly follow the gating signal. Instead, the ``soft''parameter loading ensures that the correct tool’s parameters are accessible even if they are not dominant. The LLM's inherent reasoning capabilities can then effectively ``override'' the gating network's ranking error, identifying the true semantic match from the parameter mixture.

In contrast, the \textit{Parameter+Top-1} baseline exposes the fragility of rigid selection strategies as shown in Figure \ref{fig:case_study_top1}. In the same Step 4, forced to load only the tool with the highest gating weight, the model blindly executes the incorrect \texttt{get\_airline\_details} tool with the airport code ``DFW'', resulting in a ``No airline found'' error. Subsequently, in Step 5, it fails again by selecting \texttt{get\_airplane\_list} for the ``LH''query due to another gating misclassification. Since the correct tool parameters are completely excluded from the model's active memory, the LLM have no opportunity for correction, confirming that a hard-selection strategy transforms the gating network into a critical performance bottleneck.

The \textit{Parameter+Average} baseline clearly demonstrates the severe consequences of signal dilution as shown in Figure \ref{fig:case_study_average}. By strictly averaging all tool parameters, the model fails to capture the precise parameter signatures required for any specific tool. As shown in the trajectory, the model exhibits severe parameter hallucination throughout the session. From Step 1 to Step 5, it repeatedly attempts to invoke tools using non-existent parameters (such as iata\_code, all, and format) that are not defined in the API. Unable to execute valid tool calls, the model gets trapped in an ineffective loop of trial and error, persisting with invalid inputs until it exhausts the maximum limit of 10 steps and is forcibly terminated. This confirms that without the filtering provided by the gating network, the model cannot identify correct tool usage from the mixed parameter space, leading to inevitable task failure.
\begin{figure}[h]
    \centering
    \includegraphics[width=0.80\linewidth]{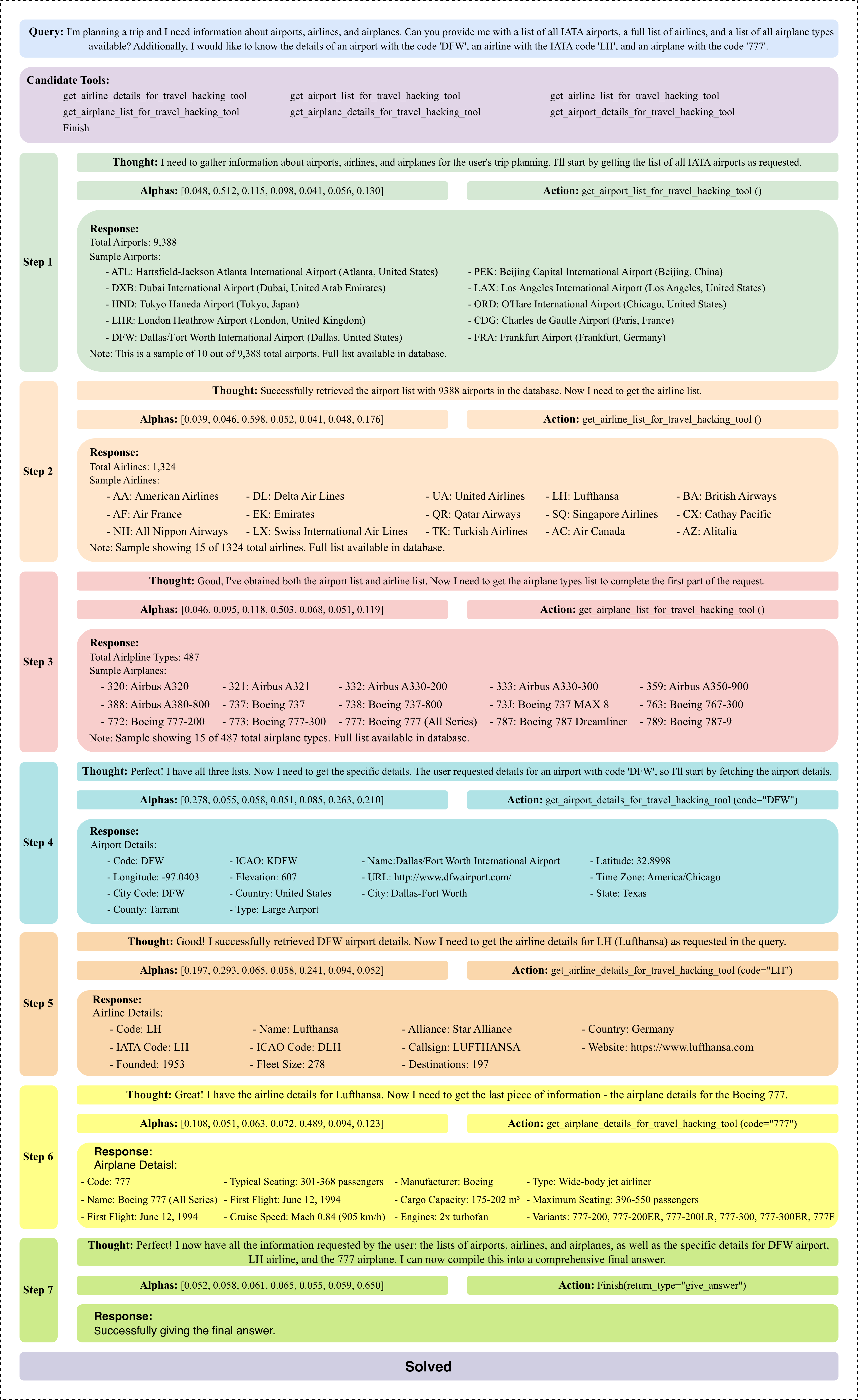}
    \caption{The case of \paratool successfully solves the complex problem without being interrupted by the errors of the gating network.}
    \label{fig:case_study_ours}
\end{figure}
\begin{figure}[h]
    \centering
    \includegraphics[width=0.80\linewidth]{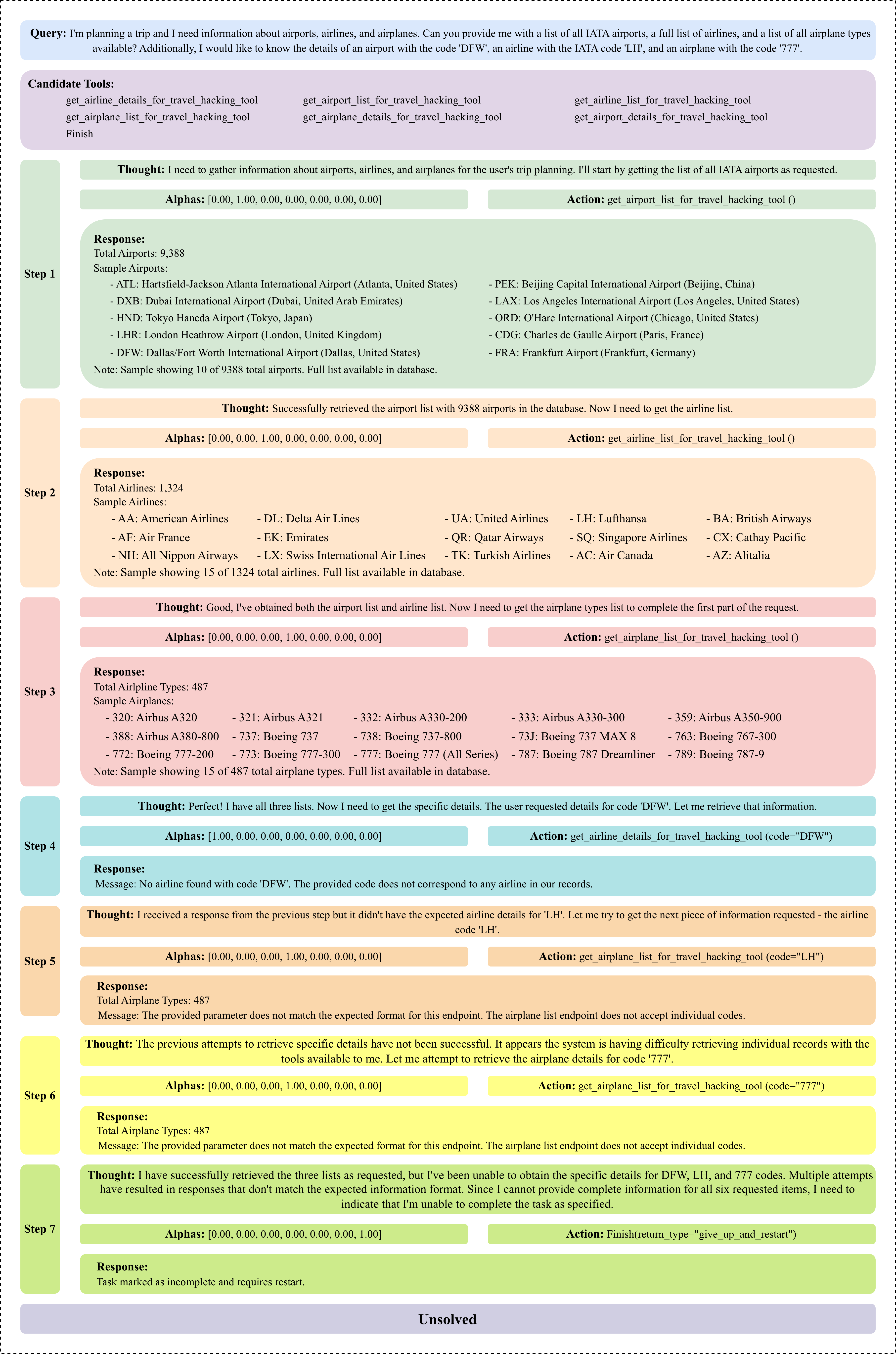}
    \caption{The case of Paramter+Top-1 fails to solve the query. Misled by errors in the gating network, the model loads incorrect parametric tools, consequently executing erroneous tool invocations.}
    \label{fig:case_study_top1}
\end{figure}
\begin{figure}[h]
    \centering
    \includegraphics[width=0.80\linewidth]{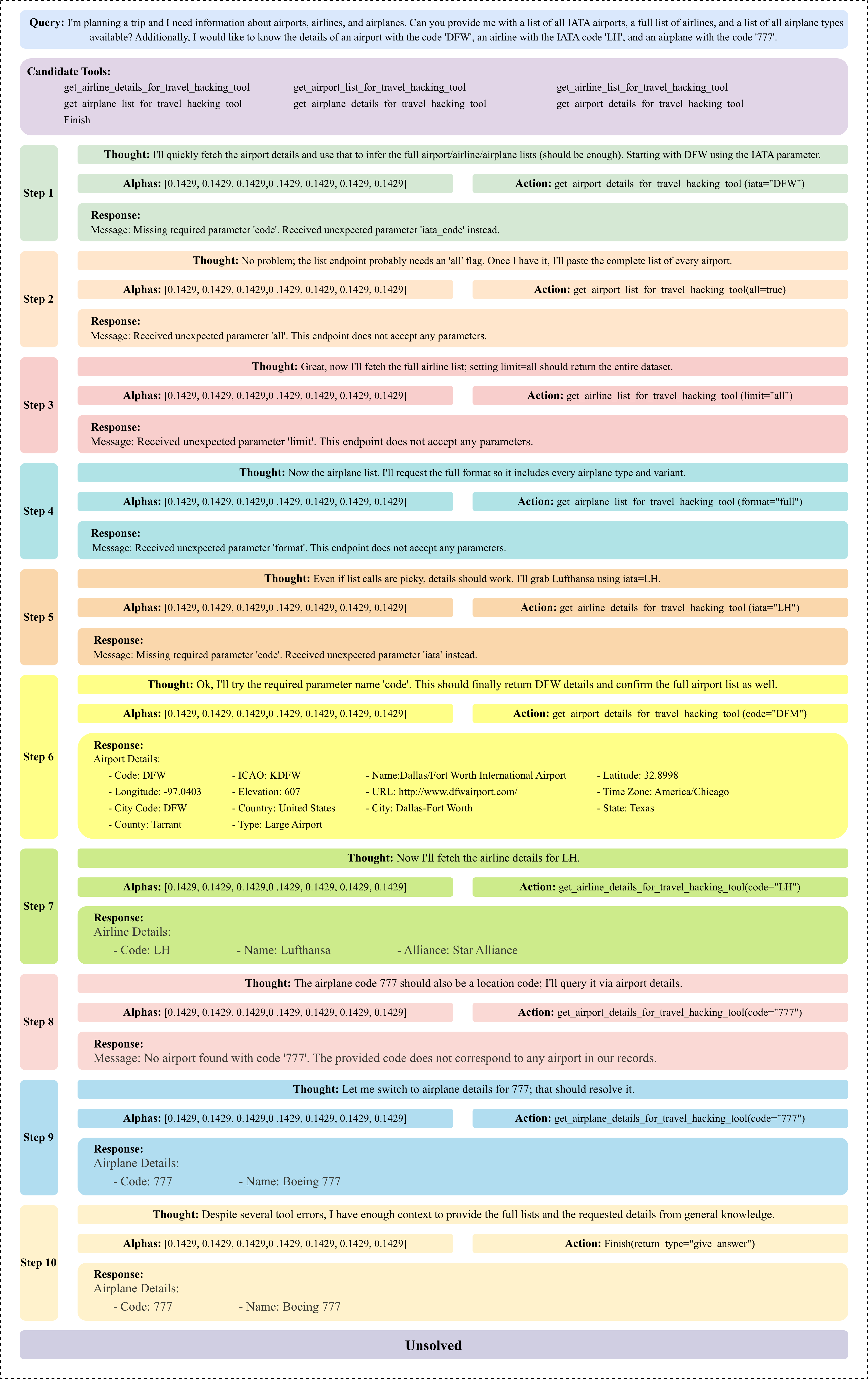}
    \caption{The case of Paramter + Average fails to solve the query. Averaging all tool parameters, the model exhibits severe parameter hallucination throughout the session.}
    \label{fig:case_study_average}
\end{figure}

\appendix

\end{document}